\newcommand{\widesim}[2][1.5]{
  \mathrel{\overset{#2}{\scalebox{#1}[1]{$\sim$}}}}
\def\wid{\check{{\cc@style\underline{\mskip9.5mu}}}}
\def\Wideubar{\underaccent{{\cc@style\underline{\mskip6mu}}}}
\def\wideubar{\underaccent{{\cc@style\underline{\mskip9.5mu}}}}
\def\Wideubar{\underaccent{{\cc@style\underline{\mskip6mu}}}}
\def\widebar{\accentset{{\cc@style\underline{\mskip9.5mu}}}}
\def\Widebar{\accentset{{\cc@style\underline{\mskip6mu}}}}
\newtheorem{proposition}{Proposition}
\newtheorem{lemma}{Lemma}
\newtheorem{theorem}{Theorem}
\newtheorem{definition}{Definition}
\theoremstyle{remark}\newtheorem{remark}{Remark}
\newcommand{\eqdef}{\overset{d}{=}}
\newcommand{\minimize}{{\rm minimize}}
\newcommand{\maximize}{{\rm maximize}}
\def\ccalH{{\ensuremath{\mathcal H}}}
\def\ccalT{{\ensuremath{\mathcal T}}}
\begin{document}
\title{Solving Systems of Random Quadratic Equations via Truncated Amplitude Flow}

\author{
Gang Wang,~\IEEEmembership{Student Member,~IEEE,}
	Georgios B. Giannakis,~\IEEEmembership{Fellow,~IEEE,}\\
	 and
	Yonina C. Eldar,~\IEEEmembership{Fellow,~IEEE}

\thanks{
G. Wang and G. B. Giannakis were supported in part by NSF grants 1500713 and 1514056.  
G. Wang and G. B. Giannakis are with the Digital Technology Center and the Department of Electrical and Computer Engineering, University of Minnesota, Minneapolis, MN 55455, USA. G. Wang is also with the State Key Lab of Intelligent Control and Decision of Complex Systems, Beijing Institute of Technology, Beijing 100081, P. R. China. Y. C. Eldar is with the Department of Electrical Engineering, Technion -- Israel Institute of Technology, Haifa 32000, Israel. Emails: \{gangwang,georgios\}@umn.edu; yonina@ee.technion.ac.il.}


}

\markboth{}{Wang, Giannakis, and Eldar: Solving Systems of Random Quadratic Equations via Truncated Amplitude Flow}

\maketitle

\begin{abstract}
This paper presents a new algorithm, termed \emph{truncated amplitude flow} (TAF), to recover an unknown vector $\bm{x}$ from a system of quadratic equations of the form $y_i=|\langle\bm{a}_i,\bm{x}\rangle|^2$, where $\bm{a}_i$'s are given random measurement vectors. This problem is known to be \emph{NP-hard} in general. We prove that as soon as the number of equations is on the order of the number of unknowns, TAF recovers the solution exactly (up to a global unimodular constant) with high probability and complexity growing linearly with both the number of unknowns and the number of equations. Our TAF approach adopts the \emph{amplitude-based} empirical loss function, and proceeds in two stages. In the first stage, we introduce an \emph{orthogonality-promoting} initialization that can be obtained with a few power iterations. Stage two refines the initial estimate by successive updates of scalable \emph{truncated generalized gradient iterations}, which are able to handle the rather challenging nonconvex and nonsmooth amplitude-based objective function. In particular, when vectors $\bm{x}$ and $\bm{a}_i$'s are real-valued, our gradient truncation rule provably eliminates erroneously estimated signs with high probability to markedly improve upon its untruncated version. Numerical tests using synthetic data and real images demonstrate that our initialization returns more accurate and robust estimates relative to spectral initializations. 
Furthermore, even under the same initialization, the proposed amplitude-based
 refinement outperforms existing Wirtinger flow variants, corroborating the superior performance of TAF over state-of-the-art algorithms.
\end{abstract}

\begin{keywords}
 Nonconvex optimization, phase retrieval, amplitude-based cost function,  initialization, truncated gradient, linear convergence to global minimum.
\end{keywords}

\section{Introduction}
\label{sec:intro}

Consider a system of $m$ quadratic equations
\vspace{-0.em}
\begin{equation}
	y_i=\left|\langle \bm{a}_i,\bm{x}\rangle\right|^2,\quad  1\le i\le m\label{eq:quad}
	\vspace{-.em}
\end{equation}
where the data vector $\bm{y}:=\left[y_1~\cdots~y_m\right]^\ccalT$ and feature vectors $\bm{a}_i\in\mathbb{R}^n$ or $\mathbb{C}^n$ are known, whereas the vector  
$\bm{x}\in\mathbb{R}^n$ or $\mathbb{C}^n$ is the wanted unknown. 
When $\left\{\bm{a}_i\right\}_{i=1}^m$ and/or $\bm{x}$ are complex, the magnitudes of their inner-products $\left\{\langle\bm{a}_i,\bm{x}\rangle\right\}_{i=1}^m$ are given but phase information is lacking; in the real case only the signs of 
$\left\{\langle\bm{a}_i,\bm{x}\rangle\right\}_{i=1}^m$ are unknown. 
Assuming that the system of quadratic equations in \eqref{eq:quad} admits a unique solution $\bm{x}$ (up to a global unimodular constant), our objective is to reconstruct $\bm{x}$ from $m$ phaseless quadratic equations, or equivalently, to recover the missing signs/phases of $\left\{\langle\bm{a}_i,\bm{x}\rangle\right\}_{i=1}^m$ under real-/complex-valued settings. It has been established that $m\ge 2n-1$ or $m\ge 4n-4$ generic measurements $\left\{\left(\bm{a}_i;\,y_i\right)\right\}_{i=1}^m$ as in \eqref{eq:quad} suffice for uniquely determining an $n$-dimensional real-valued or complex-valued vector $\bm{x}$~\cite{2006balan,4m-4}, respectively, while the former with  $m=2n-1$ has also been shown to be necessary~\cite{2006balan,savephase}.


The problem in \eqref{eq:quad} constitutes an instance of nonconvex quadratic programming, that is generally known to be \emph{NP-hard}~\cite{nphard}. 
Specifically for real-valued vectors $\{\bm{a}_i\}$ and $\bm{x}$,  
problem \eqref{eq:quad} can be understood as a combinatorial optimization since one seeks a series of signs $\left\{s_i=\pm 1\right\}_{i=1}^m$, such that the solution to the system of linear equations $\langle \bm{a}_i,\bm{x}\rangle=s_i \psi_i$, where
 $\psi_i:=\sqrt{y_i}$, obeys the given quadratic system. Evidently, there are a total of $2^m$ different combinations of $\left\{s_i\right\}_{i=1}^m$, among which only two lead to $\bm{x}$ up to a global sign. The complex case becomes even more complicated, where instead of a set of signs $\left\{s_i\right\}_{i=1}^m$, one must determine a collection of unimodular complex scalars $\left\{\sigma_i\in\mathbb{C}\right\}_{i=1}^m$. 
Special cases with $\bm{a}_i>\bm{0}$ (entry-wise inequality), $x_i^2=1$, and $y_i=0$, $1\le i\le m$ correspond to the so-called \emph{stone problem}~\cite[Section 3.4.1]{book2001nemirovski},~\cite{twf}. 

In many fields of physical sciences and engineering, the problem of recovering the phase from intensity/magnitude-only measurements is commonly referred to as \emph{phase retrieval}~\cite{1978fienup,siam2015candes,spm2016eldar}. 
Relevant application domains include X-ray crystallography~\cite{nature1999miao}, optics~\cite{optics,oe2015bian}, 
 array and high-power coherent diffractive imaging~\cite{array,altmin2015}, astronomy~\cite{astronomy}, and microscopy~\cite{micro}. In these settings, due to physical limitations, optical sensors and detectors such as charge-coupled device (CCD) cameras, photosensitive films, and human eyes can record only the (squared) modulus of the Fresnel or Fraunhofer diffraction pattern,  while losing the phase of the incident light striking the object. 
It has been shown that reconstructing a discrete, finite-duration signal from its Fourier transform magnitudes is generally \emph{NP-complete}~\cite{nphard1}. 
Even 
checking quadratic feasibility (i.e., whether a solution to a given quadratic system exists or not) 
is itself an \emph{NP-hard} problem~\cite[Theorem 2.6]{2013nphard}. Thus, 
despite its simple form and practical relevance across various fields, 
tackling the quadratic system in~\eqref{eq:quad} is challenging and~\emph{NP-hard} in general.


\subsection{Prior art}
Adopting the least-squares criterion, 
the task of recovering $\bm{x}$ 
from data $y_i$ observed in additive white Gaussian noise (AWGN) can be recast as  
that of minimizing the \emph{intensity-based} 
empirical loss~\cite{wf} 
\begin{equation}\label{eq:ls}
\underset{\bm{z}\in\mathbb{R}^n/\mathbb{C}^n}{\text{minimize}}~~f(\bm{z}):=\frac{1}{2m}\sum_{i=1}^m\left(y_i-|\langle\bm{a}_i,\bm{z}\rangle|^2\right)^2.
\end{equation}
An alternative is to consider an \emph{amplitude-based} loss, in which $\psi_i$ is observed instead of $y_i$ in AWGN \cite{1978fienup,nips2016wg}
\begin{equation}\label{eq:ls1}
\underset{\bm{z}\in\mathbb{R}^n/\mathbb{C}^n}{\text{minimize}}~~h(\bm{z}):=\frac{1}{2m}\sum_{i=1}^m
\big(\psi_i-|\langle\bm{a}_i,\bm{z}\rangle|\big)^2.
\end{equation}  
Unfortunately, 
the presence of quadratic terms in \eqref{eq:ls} or the modulus in \eqref{eq:ls1} renders the corresponding objective function 
 nonconvex. 
Minimizing nonconvex objectives, which may exhibit many stationary points, is in general \emph{NP-hard}~\cite{hardproblems}. 
In fact, even checking whether a given point is a local minimum 
or establishing convergence to a local minimum 
turns out to be \emph{NP-complete}~\cite{hardproblems}. 


In the classical discretized one-dimensional ($1$D) phase retrieval, the amplitude vector $\bm{\psi}$ corresponds to the $n$-point Fourier transform of the $n$-dimensional signal $\bm{x}$~\cite{spm2015eldar}.  
It has been shown based on spectral factorization that in general there is no unique solution to $1$D phase retrieval, even if we disregard trivial ambiguities~\cite{1duniqueness}. To overcome this ill-posedness, several approaches have been suggested. One possibility is to assume additional constraints on the unknown signal such as sparsity~\cite{gespar,2013spr,altmin2015,acha2014yesm}. Other approaches rely on introducing redundancy into the measurements using for example, the short-time Fourier transform, or masks
\cite{2015stft,coded}. Finally, recent works assume
 random measurements (e.g., Gaussian $\{\bm{a}_i\}$ designs)~\cite{altmin,siam2015candes,acha2014yesm,
phaselift,wf,twf,mtwf}. Henceforth, this paper focuses on random measurements $\{\psi_i\}$ obtained from independently and identically distributed (i.i.d.) Gaussian $\{\bm{a}_i\}$ designs. 
 
Existing approaches to solving \eqref{eq:ls} (or related ones using the Poisson likelihood; see, e.g.,~\cite{twf}) or \eqref{eq:ls1} fall under two categories: nonconvex and convex ones. Popular nonconvex solvers include alternating projection~such as Gerchberg-Saxton~\cite{gerchberg} and Fineup~\cite{1978fienup}, 
AltMinPhase~\cite{altmin}, (Truncated) Wirtinger flow (WF/TWF)~\cite{wf,twf,thesis2014}, and Karzmarz variants~\cite{ip2015wei}  
as well as trust-region methods~\cite{sun2016}. 
Inspired by WF, other relevant 
judiciously initialized counterparts have also been developed for faster semidefinite optimization~\cite{procrustes,localcvx}, blind deconvolution~\cite{2016li}, and matrix completion~\cite{focs2015sun}.   
Convex counterparts on the other hand rely on the so-called \emph{matrix-lifting} technique or \emph{Shor}'s semidefinite relaxation to obtain the solvers abbreviated as PhaseLift~\cite{phaselift}, PhaseCut~\cite{phasecut}, and CoRK~\cite{2016huang}. Further approaches dealing with noisy or sparse phase retrieval are discussed in~\cite{gespar,mtwf,tsp2016qian,qianfu,staf,sparta,raf}.

In terms of sample complexity, it has been proven that\footnote{The notation $\phi(n)=\mathcal{O}(g(n))$ means that there is a constant $c>0$ such that $|\phi(n)|\le c|g(n)|$.} $\mathcal{O}(n)$ noise-free random measurements suffice for uniquely determining a general signal~\cite{acha2014yesm}. It is also self-evident that recovering a general $n$-dimensional $\bm{x}$ requires at least $\mathcal{O}(n)$ measurements. 
Convex approaches enable exact recovery from the optimal bound $\mathcal{O}(n)$ of noiseless Gaussian measurements~\cite{fcm2014candes}; they are based on solving a semidefinite program with a matrix variable of size $n\times n$, thus incurring worst-case computational complexity on the order of $\mathcal{O}(n^{4.5})$~\cite{phasecut} 
that does not scale well with the dimension $n$.
 Upon exploiting the underlying problem structure, $\mathcal{O}(n^{4.5})$ can be reduced to $\mathcal{O}(n^3)$~\cite{phasecut}.  
 Solving for vector variables, nonconvex approaches achieve significantly improved computational performance. Using formulation~\eqref{eq:ls1} and adopting a spectral initialization commonly employed in matrix completion~\cite{tit2010spectral}, AltMinPhase establishes exact recovery with sample complexity $\mathcal{O}(n\log^3 n)$ under i.i.d. Gaussian $\{\bm{a}_i\}$ designs with resampling~\cite{altmin}. 
 
Concerning formulation \eqref{eq:ls}, WF iteratively refines the spectral initial estimate by means of a gradient-like update, which can be approximately interpreted as a stochastic gradient descent variant~\cite{wf}, \cite{thesis2014}. 
The follow-up TWF improves upon WF through a truncation procedure to separate  
gradient components of excessively extreme (large or small) sizes. 
 Likewise, due to the heavy tails present in the initialization stage, 
data $\left\{y_i\right\}_{i=1}^m$ are pre-screened to yield improved initial estimates in the so-termed truncated spectral initialization method~\cite{twf}. 
WF allows exact recovery from $\mathcal{O}(n\log n)$ measurements in $\mathcal{O}(mn^2\log(1/\epsilon))$ time/flops to yield an $\epsilon$-accurate solution for any given $\epsilon>0$~\cite{wf}, while TWF advances these to $\mathcal{O}(n)$ measurements and $\mathcal{O}(mn\log(1/\epsilon))$ time~\cite{twf}.
Interestingly, the truncation procedure in the gradient stage turns out to be useful in avoiding spurious stationary points in the context of nonconvex optimization, as will be justified in Section \ref{sec:test} by the numerical comparison between our amplitude flow (AF) algorithms with or without the judiciously designed truncation rule.
It is also worth mentioning that when $m\ge Cn\log^3 n$ for some sufficiently large positive constant $C$, the objective function in \eqref{eq:ls1} is shown to admit benign geometric structure that allows certain iterative algorithms (e.g., trust-region methods) to efficiently find a global minimizer with random initializations~\cite{sun2016}. Hence, the challenge of solving systems of random quadratic equations lies in the case where a near-optimal number of equations are involved, e.g., $m=2n-1$ in the real-valued setting.  

Although achieving a linear (in the number of unknowns $n$) sample and computational complexity, the state-of-the-art TWF approach 
still requires at least $4n\sim 5n$ equations to yield stable empirical success rate (e.g., $\ge 99\%$) under the noiseless real-valued Gaussian model~\cite[Section~3]{twf}, which is more than twice the known information-limit of $m=2n-1$~\cite{2006balan}. Similar though less obvious results hold in the complex-valued scenario. While the truncated spectral initialization in~\cite{twf} improves upon the ``plain-vanilla'' spectral initialization, its performance still suffers when the number of measurements is relatively small and its advantage (over the untruncated one) diminishes as the number of measurements grows; see more details in Fig.~\ref{fig:mdifferent} and Section~\ref{sec:alg}.   
 Furthermore, extensive numerical and experimental validation confirms that the \emph{amplitude-based} cost function performs significantly better than the \emph{intensity-based} one~\cite{experimental2015}; that is, formulation \eqref{eq:ls1} is superior to \eqref{eq:ls}. 
Hence, besides enhancing initialization, 
markedly improved performance in the gradient stage can be expected by re-examining the amplitude-based   
cost function and incorporating judiciously designed gradient regularization rules.

\subsection{This paper}
Along the lines of suitably initialized nonconvex schemes~\cite{wf,twf} and inspired by~\cite{experimental2015}, the present paper develops a linear-time (i.e., the computational time linearly in both dimensions $m$ and $n$) algorithm to minimize the amplitude-based cost function,
referred to as \emph{truncated amplitude flow} (TAF). Our approach provably recovers an $n$-dimensional unknown 
signal $\bm{x}$ exactly from a near-optimal number of noiseless random measurements, while also featuring near-perfect statistical performance in the noisy setting. TAF operates in two stages: In the first stage, we introduce an orthogonality-promoting initialization that is computable using a few power iterations. Stage two refines 
 the initial estimate by successive updates of truncated generalized gradient iterations. 
 
 Our initialization is built upon the hidden orthogonality characteristics of high-dimensional random vectors~\cite{jmlr2013cai}, which is in contrast to spectral alternatives
originating from the strong law of large numbers (SLLN)~\cite{altmin2015,wf,twf}.     
Furthermore, one challenge of phase retrieval lies in reconstructing the signs/phases of $\langle\bm{a}_i,\bm{x}\rangle$ in the real-/complex-valued settings. Our TAF's refinement stage leverages  
 a simple yet effective regularization rule to eliminate the erroneously estimated phases in the generalized gradient components with high probability. 
 Simulated tests corroborate that the proposed initialization returns more accurate and robust initial estimates than its spectral counterparts in the noiseless and noisy settings. In addition,  
 our TAF (with gradient truncation)   
 markedly improves upon its ``plain-vanilla'' version AF. Empirical results demonstrate the advantage of TAF over its competing alternatives.    
 
 Focusing on the same amplitude-based cost function, an independent work develops the so-termed reshaped Wirtinger flow (RWF) algorithm \cite{reshaped}, which coincides with amplitude flow (AF). A slightly modified variant of spectral initialization \cite{wf} is used to obtain an initial guess, followed by a sequence of non-truncated generalized gradient iterations \cite{reshaped}. Numerical comparisons show that the proposed TAF method performs better than RWF especially when the number of equations approaches the information-theoretic limit ($2n-1$ in the real case).

 The remainder of this paper is organized as follows. The amplitude-based cost function, as well as the two algorithmic stages is described and analyzed in Section~\ref{sec:alg}. 
 Section~\ref{sec:main} summarizes the TAF algorithm and establishes its theoretical performance. 
Extensive simulated tests comparing TAF with Wirtinger-based approaches are presented in Section~\ref{sec:test}.  
Finally, main proofs are given in Section~\ref{sec:proof}, while technical details are deferred to the Appendix.

\section{Truncated Amplitude Flow}\label{sec:alg}
In this section, the two stages of our TAF algorithm are detailed. First, the challenge of handling the nonconvex and nonsmooth amplitude-based cost function is analyzed, and addressed by a carefully designed gradient regularization rule. Limitations of (truncated) spectral initializations are then pointed out, followed by
a simple motivating example to inspire our orthogonality-promoting initialization method. For concreteness, the analysis will focus on the real-valued Gaussian model with $\bm{x}\in\mathbb{R}^n$ and i.i.d. design vectors 
	$\bm{a}_i\in\mathbb{R}^n\sim \mathcal{N}(\bm{0},\bm{I}_n)$. Numerical experiments using the complex-valued Gaussian model with $\bm{x}\in\mathbb{C}^n$ and i.i.d. $\bm{a}_i\sim \mathcal{CN}(\bm{0},\bm{I}_n):= \mathcal{N}(\bm{0},\bm{I}_n/2)+j\mathcal{N}(\bm{0},\bm{I}_n/2)$ will be discussed briefly. 
	
	To start, let us define the Euclidean distance of any estimate $\bm{z}$ to the solution set: ${\rm dist}(\bm{z},\,\bm{x}):=
\min\,\{\left\|\bm{z}+\bm{x}\right\|,\,\left\|\bm{z}-\bm{x}\right\|\}$ for real signals, and ${\rm dist}(\bm{z},\,\bm{x}):=\minimize_{\phi \in[0,2\pi)}\|\bm{z}-\bm{x}{\rm e}^{i\phi}\|$ for complex ones~\cite{wf}, where $\|\!\cdot\!\|$ denotes the Euclidean norm.  
Define also the indistinguishable global phase constant in the real-valued setting as
\vspace{-.em}
\begin{equation}\label{eq:adaptation}
	\phi(\bm{z}):=\left\{\begin{array}{lll}
		0,&\|\bm{z}-\bm{x}\|\le \|\bm{z}+\bm{x}\|,\\
		\pi,&{\rm otherwise.}
	\end{array}\right.
	\vspace{-.em}
\end{equation}
Henceforth, fixing $\bm{x}$ to be any solution of the given quadratic system~\eqref{eq:quad}, we always assume that $\phi\left({\bm{z}}\right)=0$; otherwise, ${\bm{z}}$ is replaced by ${\rm e}^{-j\phi\left({\bm{z}}\right)}{\bm{z}}$, but for simplicity of presentation, the constant phase adaptation term ${\rm e}^{-j\phi\left({\bm{z}}\right)}$ will be dropped whenever it is clear from the context.

\subsection{Truncated generalized gradient stage}
For brevity, collect all vectors $\{\bm{a}_i\}_{i=1}^m$ in the $m\times n$ matrix $\bm{A}:=\left[\bm{a}_1~\cdots~\bm{a}_m\right]^\ccalT$, and all amplitudes $\left\{\psi_i\right\}_{i=1}^m$ to form the vector $\bm{\psi}:=\left[\psi_1~\cdots~\psi_m\right]^\ccalT$.
One can rewrite the amplitude-based cost function in matrix-vector representation as
 \vspace{-.em}
 \begin{equation}
 	\label{eq:prob}\vspace{-.em}
\underset{\bm{z}\in\mathbb{R}^n}{\text{minimize}}~~\ell(\bm{z}):=\frac{1}{m}\sum_{i=1}^m\ell_i(\bm{z})
 	=\frac{1}{2m}\big\|\bm{\psi}-\left|\bm{A}\bm{z}\right|
 	\big\|^2
 	\vspace{-.em}
 \end{equation}  
 where $\ell_i(\bm{z}):=\frac{1}{2}(\psi_i-|\bm{a}_i^\ccalT\bm{z}|)^2$ with the superscript $^\ccalT$ ($^\ccalH$) denoting (Hermitian) transpose; 
  and with a slight abuse of notation, $|\bm{A}\bm{z}|:=[|\bm{a}_1^\ccalT\bm{z}|~\cdots~|\bm{a}_m^\ccalT\bm{z}|]^\ccalT$. Apart from being nonconvex, $\ell(\bm{z})$ is also nondiffentiable, hence challenging the algorithmic design and analysis. 
 In the presence of smoothness or convexity, convergence analysis of iterative algorithms relies either on  continuity of the gradient (ordinary gradient methods)~\cite{shor1972class}, or, on the convexity of the objective functional (subgradient methods)~\cite{book1998rockafellar}. 
 Although subgradient methods have found widespread applicability in nonsmooth optimization, they are limited to the class of convex functions~\cite[Page 4]{book1985shor}. 
In nonconvex nonsmooth optimization settings, 
the so-termed \emph{generalized gradient} broadens the scope of the (sub)gradient to the class of \emph{almost everywhere} differentiable functions~\cite{book1990clarke}.

 Consider a continuous but not necessarily differentiable function $h(\bm{z})\in\mathbb{R}$ defined over an open region $\mathcal{S}\subseteq\mathbb{R}^n$. We then have the following definition.
\vspace{-.em}
\begin{definition}\cite[Definition 1.1]{clarke1975gg}
	The generalized gradient of a function $h$ at $\bm{z}$, denoted by $\partial h$, is the convex hull of the set of limits of the form $\lim\nabla h(\bm{z}_k)$, where $\bm{z}_k\to\bm{z}$ as $k\to +\infty$, i.e.,
	\vspace{-0.em}
	\begin{equation*}
				\vspace{-0.em}
		\partial h(\bm{z}):={\rm conv}\Big\{\lim_{k\to+\infty}\nabla h(\bm{z}_k):\bm{z}_k\to\bm{z},\;\bm{z}_k\notin \mathcal{G}_\ell
		\Big\} 
	\end{equation*}
	where the symbol `\emph{conv}' signifies the convex hull of a set, and $\mathcal{G}_\ell$ denotes the set of points in $\mathcal{S}$ at which $h$ fails to be differentiable. 
\end{definition}

 Having introduced the notion of a generalized gradient, and with $t$ denoting the iteration count, our approach to solving \eqref{eq:prob} amounts to iteratively refining the initial guess $\bm{z}_0$ (returned by the orthogonality-promoting initialization method to be detailed shortly) by means of the ensuing \emph{truncated} generalized gradient iterations 
\begin{equation}\label{eq:iter}
	\bm{z}_{t+1}=\bm{z}_t-\mu_t\, \partial \ell_{\rm tr}(\bm{z}_t).
\end{equation}
Here, $\mu_t>0$ is the step size, and the (truncated) generalized gradient $\partial \ell_{\rm tr}(\bm{z}_t)$ is given by
\vspace{-.em}
\begin{equation}\label{eq:agg}
\partial \ell_{\rm tr}(\bm{z}_t):=\frac{1}{m}\sum_{i\in\mathcal{I}_{t+1}}\left(\bm{a}_i^\ccalT\bm{z}_t-\psi_i\frac{\bm{a}_i^\ccalT\bm{z}_t}{|\bm{a}_i^\ccalT\bm{z}_t|}
\right)\bm{a}_i  
\vspace{-.em}
\end{equation}
for some index set $\mathcal{I}_{t+1}\subseteq [m]:=\left\{1,2,\ldots,m\right\}$ to be designed next. The convention $\frac{\bm{a}_i^\ccalT\bm{z}_t}{|\bm{a}_i^\ccalT\bm{z}_t|}:=0$ is adopted, if $\bm{a}_i^\ccalT\bm{z}_t=0$. 
It is easy to verify that the update in~\eqref{eq:iter} with a full generalized gradient in~\eqref{eq:agg}  
 monotonically decreases the objective function value in~\eqref{eq:prob}. 
 

Any stationary point $\bm{z}^\ast$ of $\ell(\bm{z})$ can be characterized by the following fixed-point equation~\cite{2015chen1,2015chen2}
\vspace{-.em}
\begin{equation}\label{eq:fixpoint}
	\bm{A}^\ccalT\left(\bm{A}\bm{z}^\ast-\bm{\psi}\odot\frac{\bm{A}\bm{z}^\ast}{|\bm{A}\bm{z}^\ast|}\right)=\bm{0}
	\vspace{-.em}
\end{equation} 
for entry-wise product $\odot$, 
which may have many solutions. Clearly, if $\bm{z}^\ast$ is a solution, then so is $-\bm{z}^\ast$. 
Furthermore, both solutions/global minimizers $\bm{x}$ and $-\bm{x}$ satisfy~\eqref{eq:fixpoint} due to the fact that $\bm{A}\bm{x}-\bm{\psi}\odot\frac{\bm{A}\bm{x}}{\left|\bm{A}\bm{x}\right|}=\bm{0}$. Considering any stationary point $\bm{z}^\ast\ne \pm\bm{x}$ that has been adapted such that $\phi(\bm{z}^\ast)=0$, one can write
\begin{equation}	\bm{z}^\ast=\bm{x}+(\bm{A}^\ccalT\bm{A})^{-1}\bm{A}^\ccalT\left[\bm{\psi}\odot\left(\tfrac{\bm{A}\bm{z}^\ast}{|\bm{A}\bm{z}^\ast|}-\tfrac{\bm{A}\bm{x}}{|\bm{A}\bm{x}|}\right)\right].\label{eq:neq}
\end{equation}
Thus, a necessary condition for $\bm{z}^\ast\ne\bm{x}$ in~\eqref{eq:neq} is $\frac{\bm{A}\bm{z}^\ast}{\left|\bm{A}\bm{z}^\ast\right|}\ne\frac{\bm{A}\bm{x}}{\left|\bm{A}\bm{x}\right|}$. Expressed differently, there must be sign differences between 
$\bm{A}\bm{z}^\ast$ and $\bm{A}\bm{x}$ whenever one gets stuck with an undesirable stationary point $\bm{z}^\ast$. 
Inspired by this observation, it is reasonable to devise algorithms that can detect and separate out the generalized gradient components corresponding to mistakenly estimated signs $\left\{\frac{\bm{a}_i^\ccalT\bm{z}_t}{|\bm{a}_i^\ccalT\bm{z}_t|}\right\}$ along the iterates $\{\bm{z}_t\}$. 

Precisely, if $\bm{z}_t$ and $\bm{x}$ lie at different sides of the hyperplane $\bm{a}_i^\ccalT\bm{z}=0$, then the sign of $\bm{a}_i^\ccalT\bm{z}_t$ will be different than that of $\bm{a}_i^\ccalT\bm{x}$; that is, $\frac{\bm{a}_i^\ccalT\bm{x}}{|\bm{a}_i^\ccalT\bm{x}|}\ne \frac{\bm{a}_i^\ccalT\bm{z}}{|\bm{a}_i^\ccalT\bm{z}|}$. Specifically, one can re-write the $i$-th generalized gradient component as
\begin{align}
	\partial \ell_i(\bm{z})&=\Big(\bm{a}_i^\ccalT\bm{z}-\psi_i\frac{\bm{a}_i^\ccalT\bm{z}}{|\bm{a}_i^\ccalT\bm{z}|}\Big)\bm{a}_i
	\nonumber\\&
	=\Big(\bm{a}_i^\ccalT\bm{z}-|\bm{a}_i^\ccalT\bm{x}|\frac{\bm{a}_i^\ccalT\bm{x}}{|\bm{a}_i^\ccalT\bm{x}|}\Big)\bm{a}_i+\Big(\frac{\bm{a}_i^\ccalT\bm{x}}{|\bm{a}_i^\ccalT\bm{x}|}-\frac{\bm{a}_i^\ccalT\bm{z}}{|\bm{a}_i^\ccalT\bm{z}|}\Big)\psi_i\bm{a}_i\nonumber\\
	&= \bm{a}_i\bm{a}_i^\ccalT(\bm{z}-\bm{x})+\Big(\frac{\bm{a}_i^\ccalT\bm{x}}{|\bm{a}_i^\ccalT\bm{x}|}-\frac{\bm{a}_i^\ccalT\bm{z}}{|\bm{a}_i^\ccalT\bm{z}|}\Big)\psi_i\bm{a}_i\nonumber\\
	&
	 =\bm{a}_i\bm{a}_i^\ccalT\bm{h}+\underbrace{\Big(\frac{\bm{a}_i^\ccalT\bm{x}}{|\bm{a}_i^\ccalT\bm{x}|}-\frac{\bm{a}_i^\ccalT\bm{z}}{|\bm{a}_i^\ccalT\bm{z}|}\Big)\psi_i\bm{a}_i}_{\buildrel\triangle\over
	 =\,\bm{r}_i},
	\label{eq:split}
\end{align}
where $\bm{h}:=\bm{z}-\bm{x}$. 
Intuitively, the SLLN asserts that averaging the first term $\bm{a}_i\bm{a}_i^\ccalT\bm{h}$ over $m$ instances approaches $\bm{h}$, which qualifies it as a desirable search direction. However, 
certain generalized gradient entries involve erroneously estimated signs of $\bm{a}_i^\ccalT\bm{x}$; hence, nonzero $\bm{r}_i$ terms exert a negative influence on the search direction $\bm{h}$ by dragging the iterate away from $\bm{x}$, and they typically have sizable magnitudes as will be further elaborated in Remark~\ref{rmk:mag} shortly.

\begin{figure}[ht]
\begin{center}
\centerline{\includegraphics[scale = 0.6]{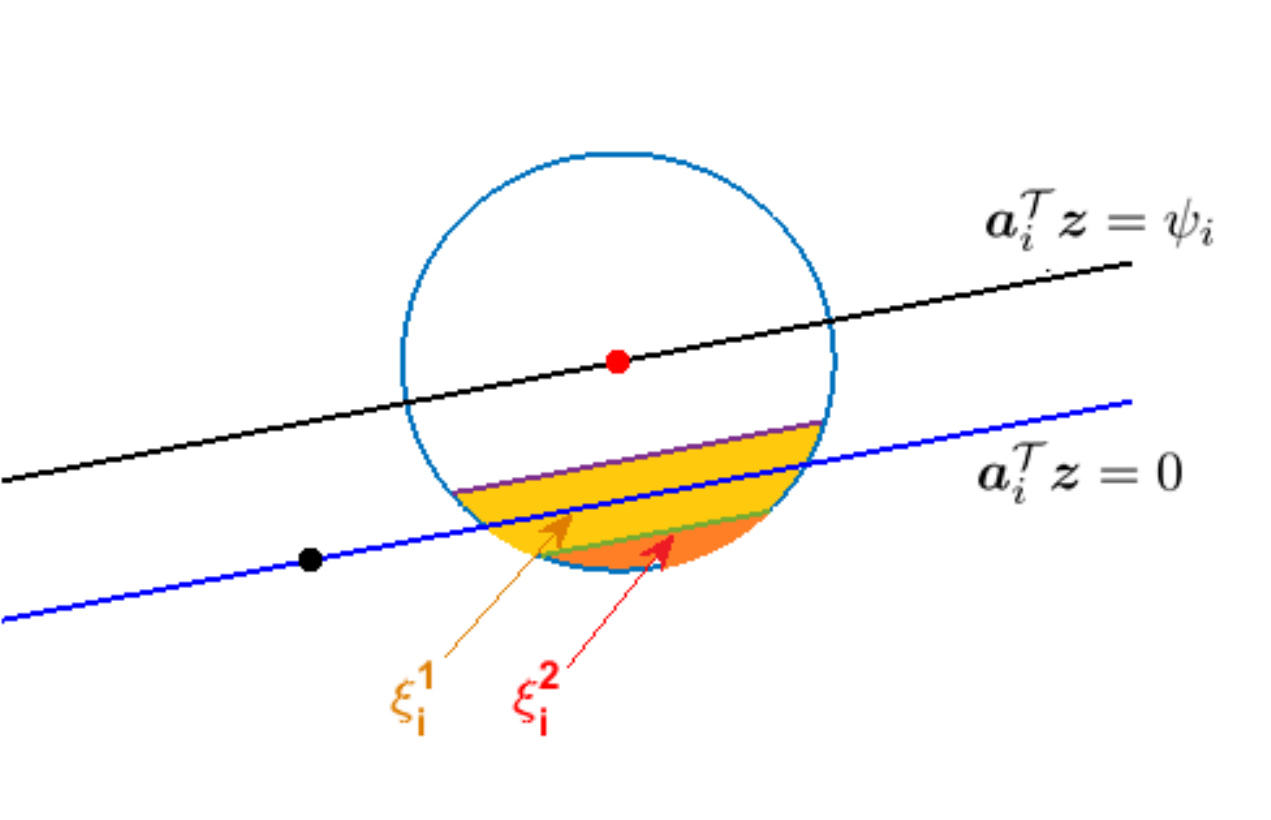}}
\vspace{-0em}
\caption{Geometric description of the proposed truncation rule on the $i$-th gradient component involving $\bm{a}_i^\ccalT\bm{x}=\psi_i$, where the red dot denotes the solution $\bm{x}$ and the black one is the origin. Hyperplanes $\bm{a}_i^\ccalT\bm{z}=\psi_i$ and $\bm{a}_i^\ccalT\bm{z}=0$ (of $\bm{z}\in\mathbb{R}^n$) passing through points $\bm{z}=\bm{x}$ and $\bm{z}=\bm{0}$, respectively, are shown. 
}

\label{fig:truncation}
\end{center}
\end{figure}
Figure~\ref{fig:truncation} demonstrates this from a geometric perspective, where the black dot denotes the origin, and the 
red dot the solution $\bm{x}$; here, $-\bm{x}$ is omitted for ease of exposition. Assume without loss of generality that the $i$-th missing sign is positive, i.e., $\bm{a}_i^\ccalT\bm{x}=\psi_i$. 
 As will be demonstrated in Theorem~\ref{thm:initial}, with high probability, the initial estimate returned by our orthogonality-promoting method obeys $\|\bm{h}\|\le \rho\|\bm{x}\|$ for some sufficiently small constant $\rho>0$. Therefore,
 all points lying on or within the circle (or sphere in high-dimensional spaces) in Fig.~\ref{fig:truncation} 
 satisfy $\|\bm{h}\|\le \rho\|\bm{x}\|$.  
If $\bm{a}_i^\ccalT\bm{z}=0$ does not intersect with the circle, then all points within the circle satisfy $\frac{\bm{a}_i^\ccalT\bm{z}}{|\bm{a}_i^\ccalT\bm{z}|}=\frac{\bm{a}_i^\ccalT\bm{x}}{|\bm{a}_i^\ccalT\bm{x}|}$ qualifying the $i$-th generalized gradient as a desirable search (descent) direction in~\eqref{eq:split}.   
  If, on the other hand, $\bm{a}_i^\ccalT\bm{z}=0$ intersects the circle, then points lying on the same side of $\bm{a}_i^\ccalT\bm{z}=0$ with $\bm{x}$ in Fig.~\ref{fig:truncation} admit correctly estimated signs, while points lying on different sides of $\bm{a}_i^\ccalT\bm{z}=0$ with $\bm{x}$ would have $\frac{\bm{a}_i^\ccalT\bm{z}}{|\bm{a}_i^\ccalT\bm{z}|}\ne \frac{\bm{a}_i^\ccalT\bm{x}}{|\bm{a}_i^\ccalT\bm{x}|}$. This gives rise to a corrupted search direction in~\eqref{eq:split}, implying that the corresponding generalized gradient component should be eliminated.

Nevertheless, it is difficult or even impossible to check whether the sign of $\bm{a}_i^\ccalT\bm{z}_t$ equals  that of $\bm{a}_i^\ccalT\bm{x}$. Fortunately, as demonstrated in Fig.~\ref{fig:truncation}, 
most spurious generalized
 gradient components (those corrupted by nonzero $\bm{r}_i$ terms)  hover around the watershed hyperplane $\bm{a}_i^\ccalT\bm{z}_t=0$. For this reason, TAF includes only those components 
having $\bm{z}_t$ sufficiently away from its watershed, i.e., 
\vspace{-.em}
\begin{equation}\label{eq:large}
	\mathcal{I}_{t+1}:=\left\{1\le i\le m\left|\frac{|\bm{a}_i^\ccalT\bm{z}_t|}{|\bm{a}_i^\ccalT\bm{x}|}\ge \frac{1}{1+\gamma}\right. \right\},\quad t\ge 0 
	\vspace{-.em}
\end{equation}
for an appropriately selected threshold $\gamma>0$. To be more specific, the light yellow color-coded area denoted by $\xi_{i}^1$ in Fig.~\ref{fig:truncation} 
signifies the truncation region of $\bm{z}$:
if $\bm{z}\in\xi_{i}^1$ satisfies the condition in~\eqref{eq:large}, then the corresponding generalized gradient component $\partial\ell_i(\bm{z};\psi_i)$ will be thrown out. However, the truncation rule may mis-reject certain `good' gradients if $\bm{z}_t$ lies in the upper part of $\xi_i^1$; `bad' gradients may be missed as well if $\bm{z}_t$ belongs to the spherical cap $\xi_i^2$.  
Fortunately, as we will show in~Lemmas~\ref{le:1stterm} and \ref{le:2ndterm}, 
the probabilities of misses and mis-rejections are provably very small, hence precluding a noticeable influence on the descent direction. Although not perfect, it turns out that 
such a regularization rule succeeds in detecting and eliminating most corrupted generalized gradient components with high probability, therefore maintaining a well-behaved search direction.

Regarding our gradient regularization rule in~\eqref{eq:large}, two observations are in order.
\begin{remark}\label{rmk:gg}
The truncation rule in~\eqref{eq:large} includes only relatively sizable $\bm{a}_i^\ccalT\bm{z}_t$'s, hence enforcing the smoothness of the (truncated) objective function $\ell_{\rm tr}(\bm{z}_t)$ at $\bm{z}_t$. Therefore, the truncated generalized gradient $\partial\ell_{\rm tr}(\bm{z})$
employed in~\eqref{eq:iter} and \eqref{eq:agg} 
boils down to the ordinary gradient/Wirtinger derivative $\nabla\ell_{\rm tr}(\bm{z}_t)$ in the real-/complex-valued case. 
\end{remark}

\begin{remark}\label{rmk:mag}
As will be elaborated in \eqref{eq:halfnormalmean} and \eqref{eq:logn}, the quantities
$\left(1/m\right)\sum_{i=1}^m \psi_i$ and  $\max_{i\in[m]}\psi_i$ in~\eqref{eq:split} have magnitudes on the order of  $\sqrt{\pi/2}\|\bm{x}\|$ and $\sqrt{m}\|\bm{x}\|$, respectively. In contrast, Proposition \ref{prop:initial} asserts that the first term in~\eqref{eq:split} obeys $\|\bm{a}_i\bm{a}_i^\ccalT\bm{h}\|\approx 
\|\bm{h}\|\le \rho\|\bm{x}\|$ 
 for a sufficiently small $\rho\ll \sqrt{\pi/2}$. 
Thus, spurious generalized gradient components typically have large magnitudes. It turns out that our gradient regularization rule in~\eqref{eq:large} also throws out gradient components of large sizes. To see this, for all $\bm{z}\in\mathbb{R}^n$ such that~$\|\bm{h}\|\le \rho\|\bm{x}\|$ in \eqref{eq:i1x}, one can re-express 
\begin{equation}
	\sum_{i=1}^m\partial\ell_i(\bm{z})=	\sum_{i=1}^m\underbrace{\left(1-\frac{|\bm{a}_i^\ccalT\bm{x}|}{|\bm{a}_i^\ccalT\bm{z}|}\right)}_{\buildrel\triangle\over =\,\beta_i}\bm{a}_i\bm{a}_i^\ccalT\bm{z}
\label{eq:ggsize}
\end{equation}
for some weight $\beta_i\in[-\infty,\,1)$ assigned to the direction $\bm{a}_i\bm{a}_i^\ccalT\bm{z}\approx \bm{z}$ due to $\mathbb{E}[\bm{a}_i\bm{a}_i^\ccalT]=\bm{I}_n$. 
 Then $\partial\ell_i(\bm{z})$ of an excessively large size corresponds to a large $|\bm{a}_i^\ccalT\bm{x}|/|\bm{a}_i^\ccalT\bm{z}|$ in~\eqref{eq:ggsize}, 
 or equivalently a small $|\bm{a}_i^\ccalT\bm{z}|/|\bm{a}_i^\ccalT\bm{x}|$ in~\eqref{eq:large}, thus causing the corresponding $\partial \ell_i(\bm{z})$ to be eliminated according to the truncation rule in~\eqref{eq:large}. 
\end{remark}

Our truncation rule deviates from the intuition behind TWF, which throws away gradient components corresponding to large-size $\{|\bm{a}_i^\ccalT\bm{z}_t|/|\bm{a}_i^\ccalT\bm{x}|\}$ in~\eqref{eq:large}. 
As demonstrated by our analysis in Appendix~\ref{sec:rare}, it rarely happens that a gradient component having large $|\bm{a}_i^\ccalT\bm{z}_t|/|\bm{a}_i^\ccalT\bm{x}|$ yields an incorrect sign of $\bm{a}_i^\ccalT\bm{x}$ under a sufficiently accurate initialization. Moreover, discarding too many samples (those for which $i\notin\mathcal{T}_{t+1}$ in TWF~\cite[Section 2.1]{twf}) introduces large bias into $(1/m)\sum_{i\in\mathcal{T}_{t+1}}^m\bm{a}_i\bm{a}_i^\ccalT\bm{h}$, so that TWF does not work well when $m/n$ is close to the information-limit of $m/n\approx 2$.   
In sharp contrast, the motivation and objective of our truncation rule in~\eqref{eq:large} is to directly sense and eliminate gradient components that involve mistakenly estimated signs with high probability. 

To demonstrate the power of TAF,    
 numerical tests comparing all stages of (T)AF and (T)WF will be presented throughout our analysis.
The basic test settings used in this paper are described next.  
For fairness, all pertinent algorithmic parameters involved in all compared schemes were set to their default values.
Simulated estimates are averaged over $100$ independent Monte Carlo (MC) realizations without mentioning this explicitly each time.  
Performance of different schemes is evaluated in terms of the relative root mean-square error, i.e.,  
\begin{equation}
{\rm Relative~error}:=\frac{{\rm dist}(\bm{z},\,\bm{x})}{\|\bm{x}\|},
\end{equation}
and the success rate among $100$ trials, where a success is claimed for a trial if the returned estimate incurs a relative error less than $10^{-5}$~\cite{twf}.  
Simulated tests under both noiseless and noisy Gaussian models are performed, corresponding to $\psi_i=\big|\bm{a}_i^\ccalH\bm{x}+\eta_i\big|$~\cite{altmin}
with $\eta_i=0$ and $\eta_i\sim\mathcal{N}(0,\sigma^2)$, respectively, with i.i.d. $\bm{a}_i\sim\mathcal{N}(\bm{0},\bm{I}_n)$ or $\bm{a}_i\sim\mathcal{CN}(\bm{0},\bm{I}_n)$.    

Numerical comparison depicted in Fig.~\ref{fig:sameinit} using the noiseless real-valued Gaussian model
suggests that 
even when starting with the \emph{same truncated spectral initialization}, TAF's refinement outperforms those of TWF and WF, demonstrating the merits of our gradient update rule over TWF/WF. Furthermore, comparing TAF (gradient iterations in \eqref{eq:iter}-\eqref{eq:agg} with truncation in~\eqref{eq:large} initialized by the truncated spectral estimate)  
and AF (gradient iterations in \eqref{eq:iter}-\eqref{eq:agg} initialized by the truncated spectral estimate) corroborates the power of the truncation rule in~\eqref{eq:large}.

 \begin{figure}[h]
\centering
    \includegraphics[scale=0.6]{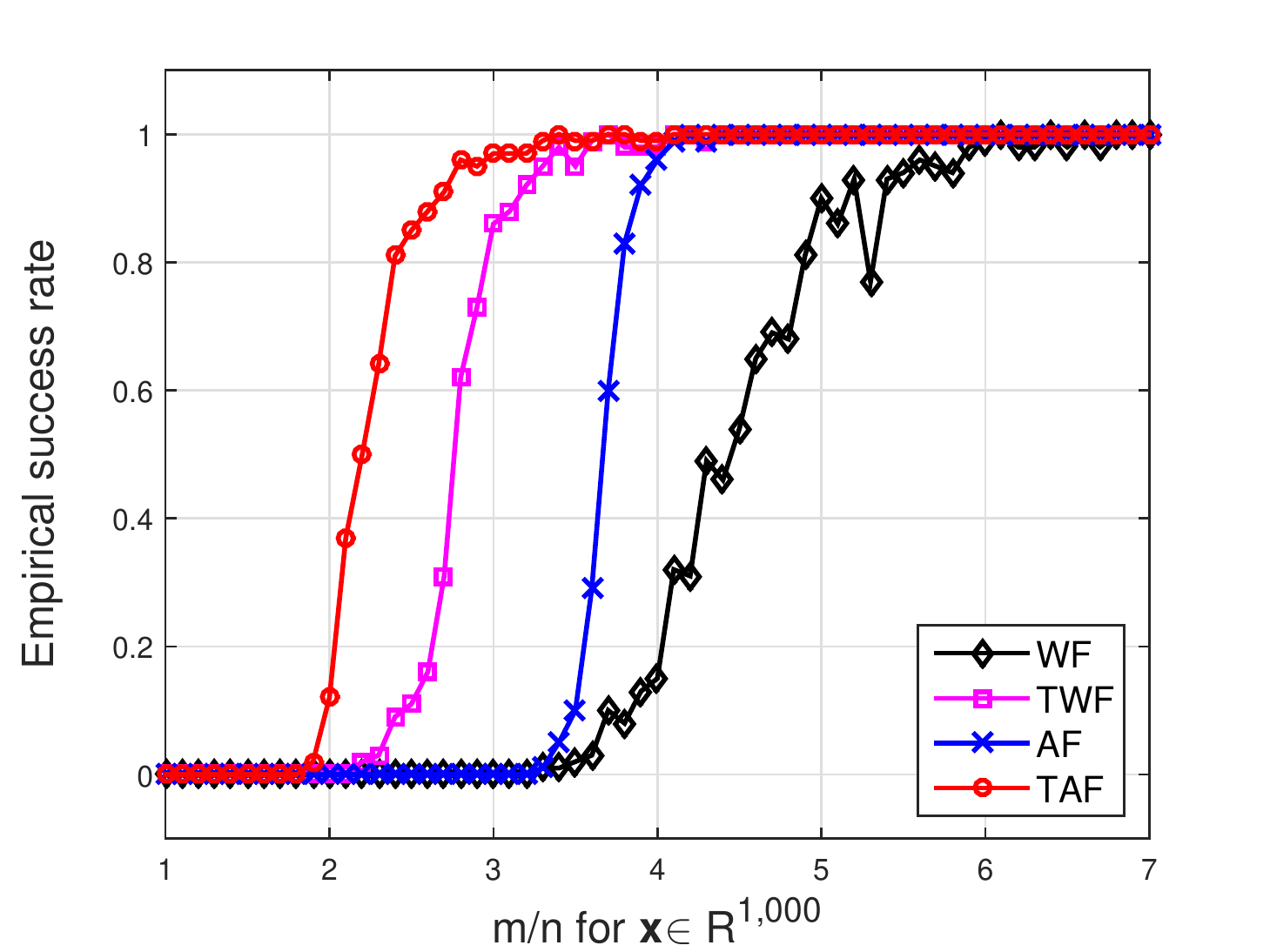}
   \vspace{-0pt}
  \caption{ Empirical success rate for WF, TWF, AF, and TAF with the same truncated spectral initialization under the noiseless real-valued Gaussian model.}
      \label{fig:sameinit}
\end{figure}



\begin{figure}
\vspace{-0pt}
\centering
    \includegraphics[scale=0.6]{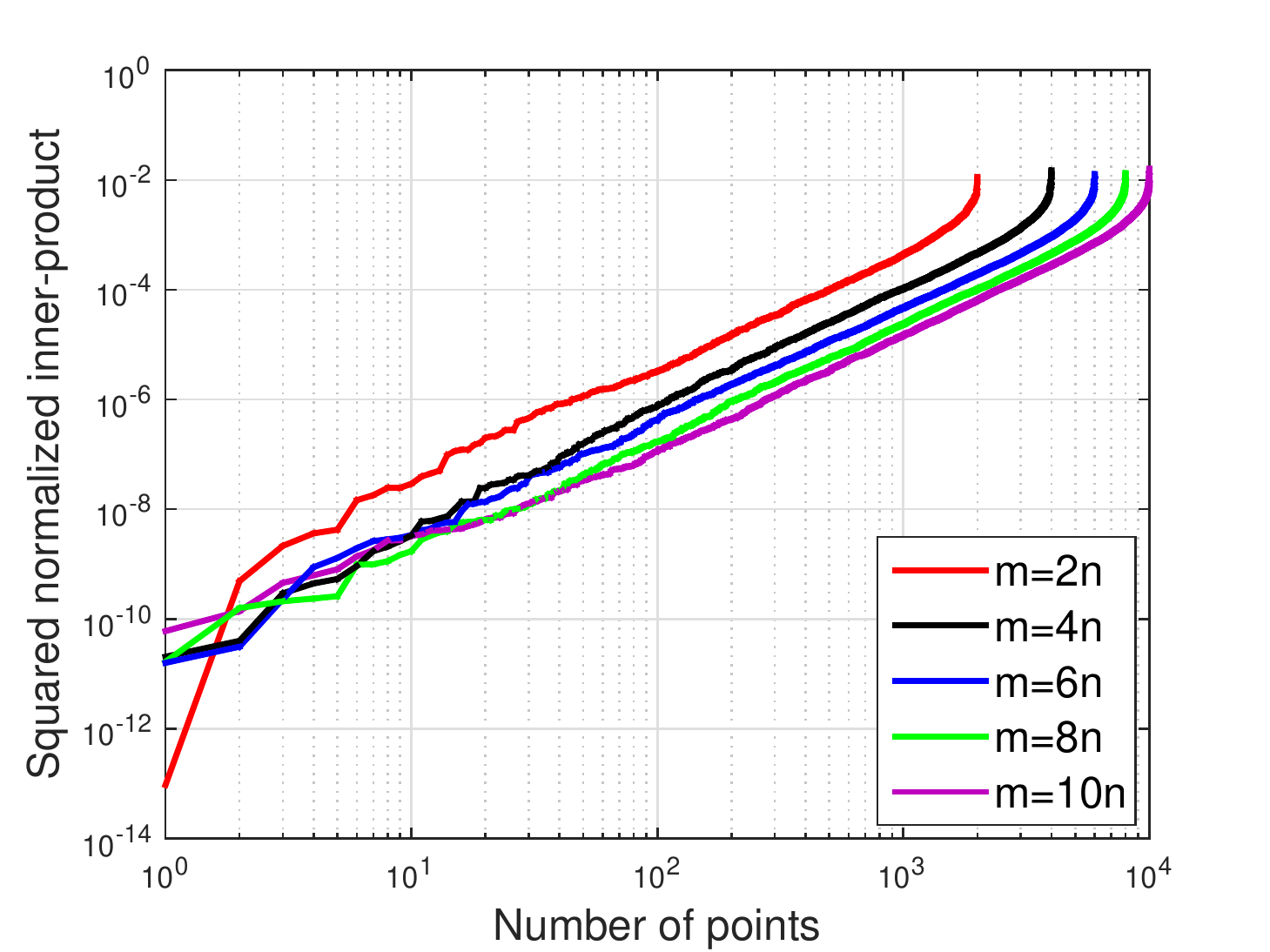}
    \vspace{-0pt}
\caption{Ordered squared normalized inner-product for pairs $\bm{x}$ and $\bm{a}_i$, $\forall i\in[m]$ with $m/n$ varying by $2$ from $2$ to $10$, and $n=1,000$. 
}
\label{fig:inner}
  \vspace{-0pt}
\end{figure}


\vspace{-.em}
\subsection{Orthogonality-promoting initialization stage}
Leveraging the SLLN, spectral initialization methods estimate $\bm{x}$ as the (appropriately scaled) leading eigenvector of $\bm{Y}:=\frac{1}{m}\sum_{i\in\mathcal{T}_0}y_i\bm{a}_i\bm{a}_i^\ccalT$, where $\mathcal{T}_0$ is an index set accounting for possible data truncation. As asserted in~\cite{twf}, each summand $(\bm{a}_i^\ccalT\bm{x})^2\bm{a}_i\bm{a}_i^\ccalT$ follows a heavy-tail probability density function lacking a moment generating function. This causes major performance degradation especially when the number of measurements is small. Instead of spectral initializations, we shall take another route to bypass this hurdle.
To gain intuition into our initialization, 
 a motivating example is presented first that reveals fundamental characteristics of high-dimensional random vectors. 

Fixing any nonzero vector $\bm{x}\in\mathbb{R}^{n}$, generate data $\psi_i=|\langle \bm{a}_i,\bm{x}\rangle|$ using i.i.d. $\bm{a}_i\sim\mathcal{N}(\bm{0},\bm{I}_n)$, $1\le i\le m$. Evaluate the following squared normalized inner-product 
\begin{equation}\label{eq:nip}
	\cos^2\theta_i:= \frac{\left|\langle \bm{a}_i,\bm{x}\rangle\right|^2}{\|\bm{a}_i\|^2\|\bm{x}\|^2}=\frac{\psi_i^2}{\|\bm{a}_i\|^2\|\bm{x}\|^2},\quad1\le i\le m
\end{equation}
where $\theta_i$ is the angle between vectors $\bm{a}_i$ and $\bm{x}$.   
Consider ordering all $\{\cos^2\theta_i\}$ in an ascending fashion, and collectively 
denote them as $\bm{\xi}:=[\cos^2\theta_{[m]}~\cdots~\cos^2\theta_{[1]}]^\ccalT$ with 
$\cos^2\theta_{[1]}
\ge\cdots\ge\cos^2\theta_{[m]}$. Figure~\ref{fig:inner} plots the ordered entries in $\bm{\xi}$ for $m/n$ varying by $2$ from $2$ to $10$ 
with $n=1,000$. Observe that almost all 
$\left\{\bm{a}_i\right\}$ vectors have a squared normalized inner-product with $\bm{x}$ 
smaller than $10^{-2}$, while half of the inner-products are less than $10^{-3}$, which implies that $\bm{x}$ is nearly orthogonal to a large number of 
$\bm{a}_i$'s.  

This example corroborates the folklore that random vectors in high-dimensional spaces are almost always nearly orthogonal to each other~\cite{jmlr2013cai}.
This inspired us to pursue an \emph{orthogonality-promoting initialization method}. Our key idea is to approximate $\bm{x}$ by a vector that is most orthogonal to a subset of vectors $\{\bm{a}_i\}_{i\in\mathcal{I}_0}$, where $\mathcal{I}_0$ is an index set with cardinality $|\mathcal{I}_0|<m$ that includes indices of the smallest 
squared normalized 
inner-products $\left\{\cos^2\theta_i\right\}$.
Since $\left\|\bm{x}\right\|$ appears in all inner-products, its exact value does not influence their ordering.  
Henceforth, we assume with no loss of generality that $\|\bm{x}\|=1$.

Using data $\left\{(\bm{a}_i;\,\psi_i)\right\}$, evaluate $\cos^2\theta_i$ according to~\eqref{eq:nip} for each pair $\bm{x}$ and $\bm{a}_i$.
Instrumental for the ensuing derivations is noticing from the inherent near-orthogonal property of high-dimensional random vectors 
that the summation of $\cos^2\theta_i$ over all indices $i\in\mathcal{I}_0$ should be very small; rigorous justification is deferred to Section~\ref{sec:proof}. Therefore, the sum  $\sum_{i\in\mathcal{I}_0}\cos^2\theta_i$ is also small, or according to \eqref{eq:nip}, equivalently, 
\begin{equation}
\sum_{i\in\mathcal{I}_0}\frac{|\langle\bm{a}_i,\bm{x}\rangle|^2}{\|\bm{a}_i\|^2\|\bm{x}\|^2}=\frac{\bm{x}}{\|\bm{x}\|}\Big(\sum_{i\in\mathcal{I}_0}\frac{\bm{a}_i\bm{a}_i^\ccalT}{\|\bm{a}_i\|^2}\Big)\frac{\bm{x}}{\|\bm{x}\|}
\end{equation} 
is small. Therefore, 
a meaningful approximation of $\bm{x}$ 
can be obtained by minimizing the former with $\bm{x}$ replaced by the optimization variable $\bm{z}$, namely  
\vspace{-.em}
\begin{equation}\label{eq:mineig}
	\underset{\|\bm{z}\|=1}{\text{minimize}}
~~
	\bm{z}^\ccalT\left(\frac{1}{|\mathcal{I}_0|}\sum_{i\in\mathcal{I}_0}\frac{\bm{a}_i\bm{a}_i^\ccalT}{\|\bm{a}_i\|^2}\right)\bm{z}.
\end{equation}	
This amounts to finding the smallest eigenvalue and the associated eigenvector of 
$\bm{Y}_0:=\frac{1}{|\mathcal{I}_0|}\sum_{i\in\mathcal{I}_0}\frac{\bm{a}_i\bm{a}_i^\ccalT}{\|\bm{a}_i\|^2}\succeq\bm{0}$ (the symbol $\succeq$ means positive semidefinite).  
Finding the smallest eigenvalue calls for eigen-decomposition or matrix inversion, each typically requiring computational complexity on the order of $\mathcal{O}(n^3)$. Such a computational burden may be intractable when $n$ grows large. Applying a standard 
concentration result, we show how the computation can be significantly reduced.

Since $\bm{a}_i/\|\bm{a}_i\|$ has unit norm and is uniformly distributed on the unit sphere, it is uniformly spherically distributed.\footnote{A random vector $\bm{z}\in\mathbb{R}^n$ is said to be spherical (or spherically symmetric) if its distribution does not change under rotations of the coordinate system; that is, the distribution of $\bm{P}\bm{z}$ coincides with that of $\bm{z}$ for any given orthogonal $n\times n$ matrix $\bm{P}$.}  
Spherical symmetry implies that ${\bm{a}_i}/{\|\bm{a}_i\|}$ has zero mean and covariance matrix $\bm{I}_n/n$~\cite{chap2010vershynin}.   
Appealing again to the SLLN, the sample covariance matrix $\frac{1}{m}\sum_{i=1}^m\frac{\bm{a}_i\bm{a}_i^\ccalT}{\|\bm{a}_i\|^2}
$ approaches $\bm{I}_n/n$ as $m$ grows. Simple derivations lead to 
\begin{equation}
	\sum_{i\in\mathcal{I}_0}\frac{\bm{a}_i\bm{a}_i^\ccalT}{\|\bm{a}_i\|^2}
=\sum_{i=1}^m\frac{\bm{a}_i\bm{a}_i^\ccalT}{\|\bm{a}_i\|^2}-\sum_{i\in\widebar{\mathcal{I}}_0}\frac{\bm{a}_i\bm{a}_i^\ccalT}{\|\bm{a}_i\|^2}
\approxeq \frac{m}{n}\bm{I}_n-\sum_{i\in\widebar{\mathcal{I}}_0}\frac{\bm{a}_i\bm{a}_i^\ccalT}{\|\bm{a}_i\|^2}
\end{equation} 
where $\widebar{\mathcal{I}}_0$ is the complement of $\mathcal{I}_0$ in the set $[m]$. 
Define $\bm{S}:=\left[\bm{a}_1/\|\bm{a}_1\|~\cdots~\bm{a}_m/\|\bm{a}_m\|\right]^\ccalT\in\mathbb{R}^{m\times n}$, and 
form $\widebar{\bm{S}}_0$ by removing the rows of $\bm{S}$ whose indices belong to ${\mathcal{I}}_0$.
Seeking the smallest eigenvalue of $\bm{Y}_0=\frac{1}{|\mathcal{I}_0|}\bm{S}_0^\ccalT\bm{S}_0$ then reduces to computing the largest eigenvalue of the matrix
\begin{equation}
\widebar{\bm{Y}}_0:=\frac{1}{|\widebar{\mathcal{I}}_0|}\widebar{\bm{S}}^\ccalT_0\widebar{\bm{S}}_0,
\label{eq:y0}	
\end{equation}
namely, 
\begin{equation}
\label{eq:maxeig}
	\tilde{\bm{z}}_0:=\arg\max_{\|\bm{z}\|=1}~~\bm{z}^\ccalT\widebar{\bm{Y}}_0\bm{z}
\vspace{-.em}
\end{equation}	
which can be efficiently solved via simple power iterations. 

When $\|\bm{x}\|\ne 1$, the estimate $\tilde{\bm{z}}_0$ from~\eqref{eq:maxeig} is scaled so that its norm matches approximately that of $\bm{x}$, 
 which is estimated as $\sqrt{\frac{1}{m}\sum_{i=1}^m y_i}$, or more accurately $\sqrt{\frac{n\sum_{i=1}^my_i}{\sum_{i=1}^m\|\bm{a}_i\|^2}}$. To motivate these estimates, using the rotational invariance property of normal distributions, it suffices to consider the case where $\bm{x}=\|\bm{x}\|\bm{e}_1$, with $\bm{e}_1$ denoting the first canonical vector of $\mathbb{R}^n$. Indeed,
\begin{align}
	\Big|\Big\langle\bm{a}_i, \frac{\bm{x}}{\|\bm{x}\|}\Big\rangle\Big|^2&=\left|\left\langle\bm{a}_i,\bm{U}\bm{e}_1\right\rangle\right|^2	=\left|\left\langle\bm{U}^\ccalT\bm{a}_i,\bm{e}_1\right\rangle\right|^2 \eqdef\left|\left\langle \bm{a}_i,\bm{e}_1\right\rangle\right|^2\label{eq:rip}
\end{align}
where $\bm{U}\in\mathbb{R}^{n\times n}$ is some unitary matrix, and 
$\eqdef$ means that terms on both sides of the equality have the same distribution. 
It is then easily verified that 
 \begin{equation}\label{eq:1stestimate}
 	\frac{1}{m}\sum_{i=1}^my_i=\frac{1}{m}\sum_{i=1}^ma_{i,1}^2\|\bm{x}\|^2\approx \|\bm{x}\|^2,
 	 \end{equation}
 where the last approximation arises from the following concentration result 
$(1/m)\sum_{i=1}^m a_{i,1}^2\approx \mathbb{E}[a_{i,1}^2]=1$ using again the SLLN. Regarding the second estimate, one can rewrite its square as
\begin{equation}\label{eq:2ndestimate}
\frac{n\sum_{i=1}^my_i}{\sum_{i=1}^m\|\bm{a}_i\|^2}=\frac{1}{m}\sum_{i=1}^my_i\cdot\frac{n}{(1/m)\cdot\sum_{i=1}^m\|\bm{a}_i\|^2}.
\end{equation}
It is clear from~\eqref{eq:1stestimate} that the first term on the right hand side of~\eqref{eq:2ndestimate} 
approximates $\|\bm{x}\|^2$. The second term approaches $1$ because the denominator $(1/m)\cdot\sum_{i=1}^m\|\bm{a}_i\|^2\approx n$ appealing to the SLLN again and the fact that $\bm{a}_i\sim\mathcal{N}(\bm{0},\bm{I}_n)$.
For simplicity, we choose to work with the first norm estimate
\begin{equation}\label{eq:oie}
\bm{z}_0=\sqrt{\frac{\sum_{i=1}^my_i}{m}}\tilde{\bm{z}}_0.	
\end{equation}

 It is worth highlighting that, compared to the matrix $\bm{Y}:=\frac{1}{m}\sum_{i\in\mathcal{T}_0}y_i\bm{a}_i\bm{a}_i^\ccalT$ used in spectral methods, our constructed matrix $\widebar{\bm{Y}}_0$ in~\eqref{eq:y0} 
does not depend on the observed data $\{y_i\}$ explicitly; the dependence is only through the choice of the index set $\mathcal{I}_0$. The novel orthogonality-promoting initialization thus enjoys two advantages over its spectral alternatives: a1) it does not suffer from heavy-tails of the fourth-order moments of Gaussian $\{\bm{a}_i\}$ vectors common in spectral initialization schemes; and, a2) it is less sensitive to noisy data. 


\begin{figure}[ht]
	\centering
	\begin{subfigure}
	\centering
	\includegraphics[width=.5\textwidth]{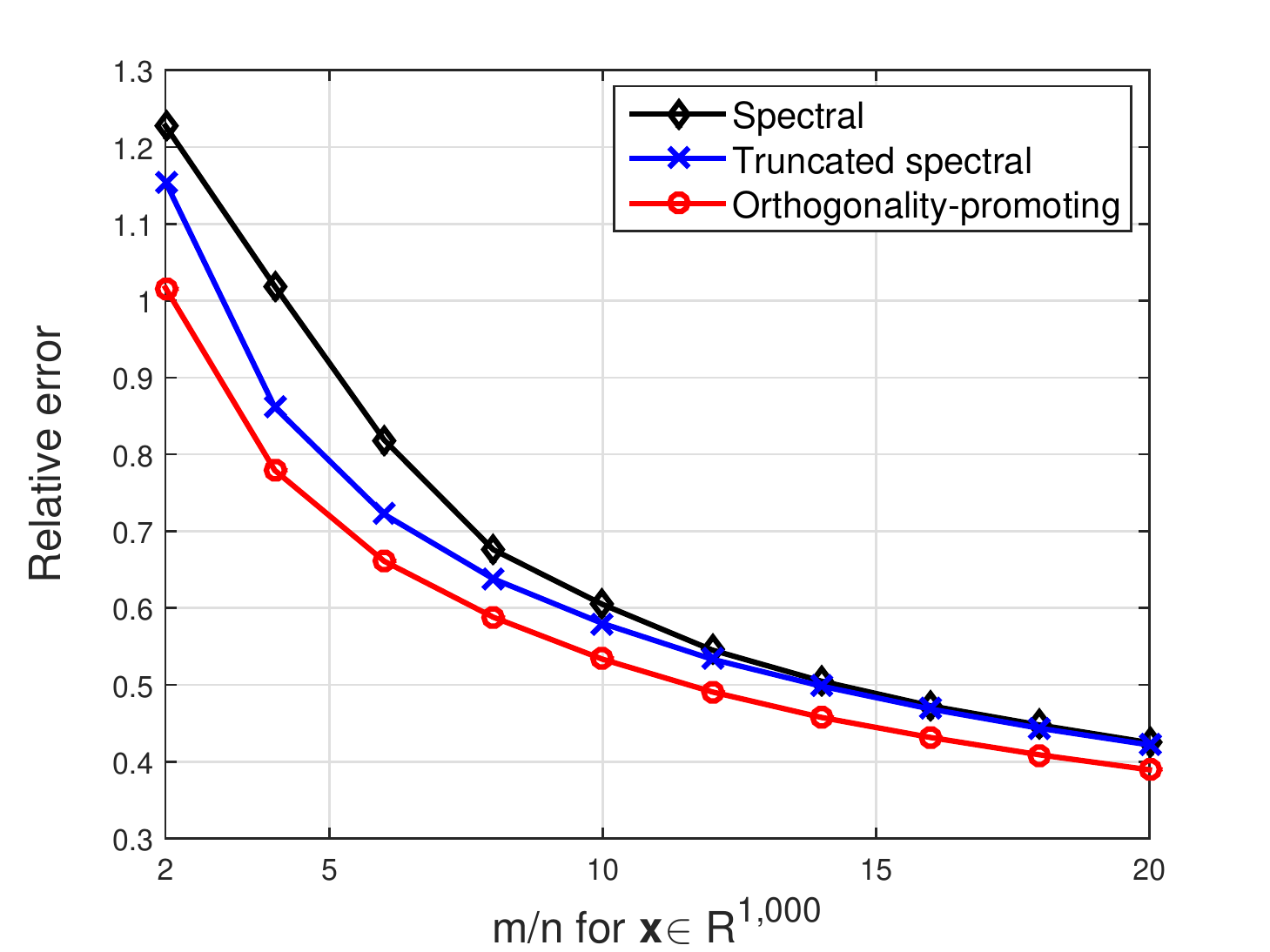}
	\end{subfigure}
	\hspace{-12pt}
	\begin{subfigure}
	\centering
	\includegraphics[width=.5\textwidth]{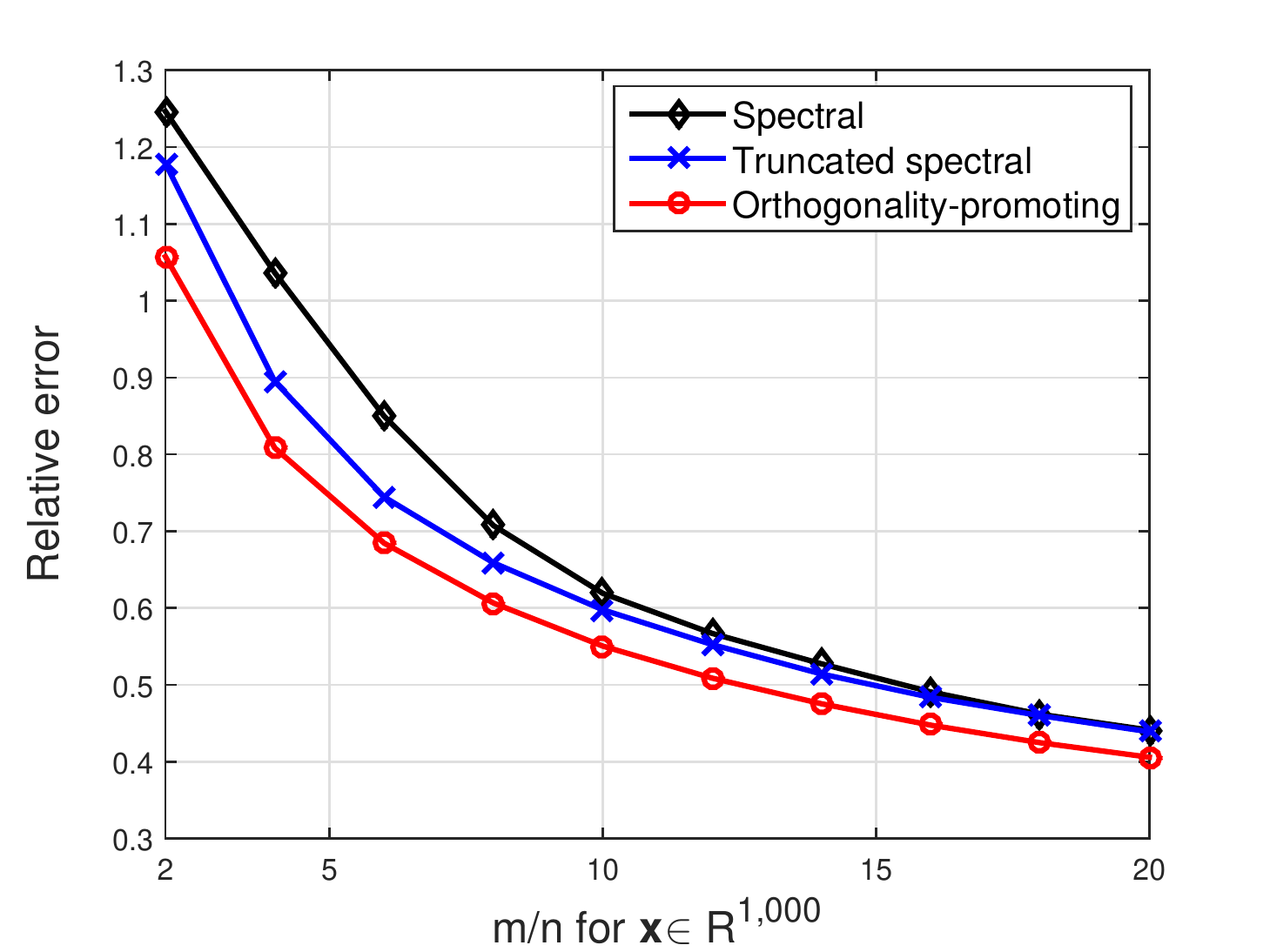} 
	\end{subfigure}
\caption{Relative error of initial estimates versus $m/n$ for: i) the spectral method~\cite{wf}; ii) the truncated spectral method~\cite{twf}; and iii) our orthogonality-promoting method with $n=1,000$, and $m/n$ varying by $2$ from $2$ to $20$. Top: Noiseless real-valued Gaussian model with $\bm{x}\sim\mathcal{N}(\bm{0},\bm{I}_n)$, $\bm{a}_i\sim\mathcal{N}(\bm{0},\bm{I}_n)$, and $\eta_i=0$. Bottom: Noisy real-valued Gaussian model with $\bm{x}\sim\mathcal{N}(\bm{0},\bm{I}_n)$, $\bm{a}_i\sim\mathcal{N}(\bm{0},\bm{I}_n)$, and $\sigma^2=0.2^2\|\bm{x}\|^2$.
}
\label{fig:mdifferent}
	\hspace{-0pt}
\end{figure}


Figure~\ref{fig:mdifferent} compares three different initialization schemes including spectral initialization~\cite{altmin,wf}, truncated spectral initialization~\cite{twf}, and the proposed orthogonality-promoting initialization. The relative error of their returned initial estimates versus the measurement/unknown ratio $m/n$ is depicted under the noiseless and noisy real-valued Gaussian models, where $\bm{x}\in\mathbb{R}^{1,000}$ was randomly generated and $m/n$ increases by $2$ from $2$ to $20$. Clearly,
all schemes enjoy improved performance as $m/n$ increases in both noiseless and noisy settings. The orthogonality-promoting initialization achieves consistently superior performance over its competing spectral alternatives under both noiseless and noisy Gaussian data. Interestingly, the spectral and truncated spectral schemes exhibit similar performance when $m/n$ becomes sufficiently large (e.g., $m/n\ge 14$ in the noiseless setup or $m/n\ge 16$ in the noisy one). This confirms that the truncation helps only if $m/n$ is relatively small. Indeed, the truncation discards measurements of excessively large or small sizes emerging from the heavy tails of the data distribution. Hence, its advantage over the non-truncated spectral initialization diminishes as the number of measurements increases, which gradually straightens out the heavy tails.

\begin{algorithm}[h!]
  \caption{Truncated amplitude flow (TAF)}
  \label{alg:TAF}
  \begin{algorithmic}[1]
\STATE {\bfseries Input:}
Amplitude data $\left\{\psi_i:=\left|\langle\bm{a}_i,\bm{x}\rangle\right|\right\}_{i=1}^m$ and design vectors $\{\bm{a}_i\}_{i=1}^m$; the maximum number of iterations $T$; by default, take constant step sizes $\mu=0.6/1$ for the real-/complex-valued models, truncation thresholds $|\widebar{\mathcal{I}}_0|=\lceil\frac{1}{6}m\rceil$, 
and $\gamma=0.7$. 
\STATE {\bfseries Set} $\widebar{\mathcal{I}}_0$ as the set of indices corresponding to the $|\widebar{\mathcal{I}}_0|$ largest values of $\left\{\psi_i/\|\bm{a}_i\|\right\}$.
\STATE {\bfseries Initialize} $\bm{z}_0$ to $\sqrt{\frac{\sum_{i=1}^m\psi_i^2}{m}}\tilde{\bm{z}}_0$, where $\tilde{\bm{z}}_0$ is 
 the normalized leading eigenvector of $\widebar{\bm{Y}}_0:=\frac{1}{|\widebar{\mathcal{I}}_0|}\sum_{i\in\widebar{\mathcal{I}}_0}\frac{\bm{a}_i\bm{a}_i^\ccalT}{\|\bm{a}_i\|^2}$. 
  \STATE {\bfseries Loop: for} {$t=0$ {\bfseries to} $T-1$}
  \vspace{-.em}
{ $$    	\bm{z}_{t+1}=\bm{z}_t-\frac{\mu}{m}\sum_{i\in\mathcal{I}_{t+1}}\left(\bm{a}_i^\ccalT\bm{z}_t-\psi_i\frac{\bm{a}_i^\ccalT\bm{z}_t}{|\bm{a}_i^\ccalT\bm{z}_t|}\right)\bm{a}_i
\vspace{-.em}
$$
   where $\mathcal{I}_{t+1}:=\left\{1\le i\le m\left|{\left|\bm{a}_i^\ccalT\bm{z}_t\right|}\ge \frac{1}{1+\gamma}{\psi_i}\right.\right\}$. }
     \STATE {\bfseries Output:}
$\bm{z}_{T}$.
  \end{algorithmic}
   \vspace{-.em} 
\end{algorithm} 

\section{Main Results}\label{sec:main}
The TAF algorithm is summarized in Algorithm~\ref{alg:TAF}. Default values are set for pertinent algorithmic parameters.
Assuming independent data samples $\{(\bm{a}_i;\psi_i)\}$ drawn from the noiseless real-valued Gaussian model, the following result establishes the theoretical performance of TAF.


\begin{theorem}[Exact recovery]
	\label{thm:initial}
	Let $\bm{x}\in\mathbb{R}^n$ be an arbitrary signal vector, and consider (noise-free) measurements $\psi_i=|\bm{a}_i^\ccalT\bm{x}|$, in which $\bm{a}_i\widesim{i.i.d.} \mathcal{N}(\bm{0},\bm{I}_n)$, $1\le  i\le m$. Then with probability at least $1-(m+5){\rm e}^{-n/2}-{\rm e}^{-c_0m}-1/n^2$ for some universal constant $c_0>0$, the initialization $\bm{z}_0$ returned by the orthogonality-promoting method in Algorithm~\ref{alg:TAF} 
	satisfies
	\vspace{-.em}
	\begin{equation}\label{eq:i1}
		{\rm dist}(\bm{z}_0,\bm{x})\le \rho\left\|\bm{x}\right\|
		\vspace{-.em}
	\end{equation}
		with $\rho={1}/{10}$ (or any sufficiently small positive constant), provided that $m\ge c_1|\widebar{\mathcal{I}}_0|\ge c_2n$ for some numerical constants $c_1,\,c_2>0$, and sufficiently large $n$. Furthermore, choosing a constant step size $\mu\le \mu_0$ along with a truncation level $ \gamma\ge 1/2$, and starting from any initial guess $\bm{z}_0$ satisfying \eqref{eq:i1}, successive estimates of the TAF solver (tabulated in Algorithm \ref{alg:TAF}) obey
			\vspace{-.em}
	\begin{equation}
		{\rm dist}\left( \bm{z}_t,\bm{x}\right)\le \rho\left(1-\nu\right)^t\left\|\bm{x}\right\|,\quad  t=0,\,1,\,2,\,\ldots
			\vspace{-.em}
	\end{equation} 
	for some $0<\nu<1$, which holds
 with probability exceeding $1-(m+5){\rm e}^{-n/2}-8{\rm e}^{-c_0m}-1/n^2$. 
\end{theorem}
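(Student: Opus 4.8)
The two displayed bounds rest on separate hypotheses, so I would prove them in turn: (i)~the orthogonality-promoting step of Algorithm~\ref{alg:TAF} returns a point satisfying~\eqref{eq:i1} with the stated probability, and (ii)~started from \emph{any} $\bm{z}_0$ with ${\rm dist}(\bm{z}_0,\bm{x})\le\rho\|\bm{x}\|$, the recursion~\eqref{eq:iter}--\eqref{eq:agg} contracts the distance geometrically.

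\textbf{Initialization.}
By rotational invariance of the i.i.d.\ Gaussian design (cf.~\eqref{eq:rip}) I may take $\bm{x}=\bm{e}_1$ and, for the purpose of ordering, $\|\bm{x}\|=1$, so that $\psi_i/\|\bm{a}_i\|=|a_{i,1}|$ and $\widebar{\mathcal{I}}_0$ is precisely the index set of the $|\widebar{\mathcal{I}}_0|$ largest $|a_{i,1}|$'s. First I would replace this order statistic by a deterministic threshold: with $\alpha:=|\widebar{\mathcal{I}}_0|/m=1/6$ and $\tau_\alpha$ defined by $\mathbb{P}(|a_{1,1}|>\tau_\alpha)=\alpha$, a binomial-tail estimate shows that, off an event of probability ${\rm e}^{-c_0m}$, $\widebar{\mathcal{I}}_0$ coincides with $\{i:|a_{i,1}|>\tau_\alpha\}$ up to a vanishing fraction of indices. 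The key device for removing the remaining statistical dependence is to reveal only the first coordinates $\{a_{i,1}\}_{i=1}^m$ -- which already determine $\widebar{\mathcal{I}}_0$ -- so that, conditionally, the tail blocks $\{(a_{i,2},\dots,a_{i,n})\}_{i\in\widebar{\mathcal{I}}_0}$ are fresh i.i.d.\ $\mathcal{N}(\bm{0},\bm{I}_{n-1})$ vectors, independent of the selection. A brief moment computation then gives $\mathbb{E}\big[\bm{a}_i\bm{a}_i^\ccalT/\|\bm{a}_i\|^2\mid i\in\widebar{\mathcal{I}}_0\big]=\lambda_1\bm{e}_1\bm{e}_1^\ccalT+\lambda_2(\bm{I}_n-\bm{e}_1\bm{e}_1^\ccalT)$ with $\lambda_1>\lambda_2>0$, and the separation between $\bm{e}_1^\ccalT\widebar{\bm{Y}}_0\bm{e}_1$ and $\max_{\bm{z}\perp\bm{e}_1,\|\bm{z}\|=1}\bm{z}^\ccalT\widebar{\bm{Y}}_0\bm{z}$ concentrates around $n^{-1}\big(\mathbb{E}[a_{1,1}^2\mid|a_{1,1}|>\tau_{1/6}]-(1+\sqrt{n/m})^2\big)$, which is strictly positive once $m/n$ exceeds a fixed threshold -- here is where $c_1,c_2$ enter. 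Since every summand $\bm{a}_i\bm{a}_i^\ccalT/\|\bm{a}_i\|^2$ is a rank-one projector, sample-covariance / operator-norm concentration for $\widebar{\bm{Y}}_0$ (plus an $\epsilon$-net over $\mathbb{S}^{n-1}$; see~\cite{chap2010vershynin}) holds with probability $1-\mathcal{O}({\rm e}^{-c_0m})$ once $m\ge c_1|\widebar{\mathcal{I}}_0|\ge c_2n$; the $(m+5){\rm e}^{-n/2}$ and $1/n^2$ terms are then absorbed by union-bounding the events $\{\|\bm{a}_i\|^2\approx n\}$ and the norm-estimate event. A Davis--Kahan $\sin\theta$ (Rayleigh-quotient) comparison yields ${\rm dist}(\tilde{\bm{z}}_0,\bm{e}_1)\to 0$, hence $\le\rho$ for $n$ large; combining with the SLLN estimate $\sqrt{(1/m)\sum_i y_i}=\|\bm{x}\|(1+o(1))$ from~\eqref{eq:1stestimate} gives~\eqref{eq:i1} with $\rho=1/10$.

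\textbf{Linear convergence.}
I would argue by induction on $t$, the hypothesis being $\phi(\bm{z}_t)=0$ and $\|\bm{h}_t\|\le\rho\|\bm{x}\|$, $\bm{h}_t:=\bm{z}_t-\bm{x}$. Because~\eqref{eq:large} keeps only sizeable $|\bm{a}_i^\ccalT\bm{z}_t|$, the truncated objective is locally smooth at $\bm{z}_t$ (Remark~\ref{rmk:gg}), so the update reads $\bm{h}_{t+1}=\bm{h}_t-\mu\,\nabla\ell_{\rm tr}(\bm{z}_t)$ with, by the splitting~\eqref{eq:split} restricted to $\mathcal{I}_{t+1}$,
\[
\nabla\ell_{\rm tr}(\bm{z}_t)=\frac1m\sum_{i\in\mathcal{I}_{t+1}}\bm{a}_i\bm{a}_i^\ccalT\bm{h}_t+\frac1m\sum_{i\in\mathcal{I}_{t+1}}\bm{r}_i .
\]
Expanding $\|\bm{h}_{t+1}\|^2=\|\bm{h}_t\|^2-2\mu\langle\bm{h}_t,\nabla\ell_{\rm tr}(\bm{z}_t)\rangle+\mu^2\|\nabla\ell_{\rm tr}(\bm{z}_t)\|^2$, it remains to establish, uniformly over $\bm{z}_t$ in the $\rho$-ball, a local-regularity lower bound $\langle\bm{h}_t,\nabla\ell_{\rm tr}(\bm{z}_t)\rangle\ge c\,\|\bm{h}_t\|^2$ and a smoothness upper bound $\|\nabla\ell_{\rm tr}(\bm{z}_t)\|\le C\|\bm{h}_t\|$, where $c>0$ stays bounded away from $0$ and $C$ bounded once $\gamma\ge1/2$ and $\rho$ is small. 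For the lower bound, $\langle\bm{h}_t,\nabla\ell_{\rm tr}(\bm{z}_t)\rangle=\frac1m\sum_{i\in\mathcal{I}_{t+1}}(\bm{a}_i^\ccalT\bm{h}_t)^2+\big\langle\bm{h}_t,\frac1m\sum_{i\in\mathcal{I}_{t+1}}\bm{r}_i\big\rangle$: the quadratic part is controlled by a uniform concentration estimate (Lemma~\ref{le:1stterm}) showing that deleting the indices with $|\bm{a}_i^\ccalT\bm{z}_t|$ small strips away only a $\gamma,\rho$-controlled fraction of $\frac1m\sum_i(\bm{a}_i^\ccalT\bm{h}_t)^2$, which itself stays above $\lambda_{\min}\big(\frac1m\sum_i\bm{a}_i\bm{a}_i^\ccalT\big)\|\bm{h}_t\|^2>0$; the cross term is bounded via $\|\frac1m\sum_{i\in\mathcal{I}_{t+1}}\bm{r}_i\|\le\zeta\|\bm{h}_t\|$ with $\zeta$ small. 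For the smoothness bound, $\|\frac1m\sum_{i\in\mathcal{I}_{t+1}}\bm{a}_i\bm{a}_i^\ccalT\bm{h}_t\|\le\|\frac1m\sum_{i=1}^m\bm{a}_i\bm{a}_i^\ccalT\|\,\|\bm{h}_t\|$ by sample-covariance concentration~\cite{chap2010vershynin}, plus the same $\bm{r}_i$-bound. Substituting,
\[
\|\bm{h}_{t+1}\|^2\le\big(1-2\mu c+\mu^2 C^2\big)\|\bm{h}_t\|^2\le(1-\nu)^2\|\bm{h}_t\|^2
\]
for some $0<\nu<1$, provided $\mu\le\mu_0$ is small enough (any $\mu_0<2c/C^2$ works); in particular $\|\bm{h}_{t+1}\|\le\rho\|\bm{x}\|$ and $\phi(\bm{z}_{t+1})=0$, so the induction continues. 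Since the iterates never leave the $\rho$-ball, running every concentration estimate uniformly over a single $\epsilon$-net of that ball makes one high-probability event serve all $t$, which is why only the constant in front of ${\rm e}^{-c_0m}$ grows (to $8$).

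\textbf{Main obstacle.}
The crux is the bound $\|\frac1m\sum_{i\in\mathcal{I}_{t+1}}\bm{r}_i\|\le\zeta\|\bm{h}_t\|$ (Lemma~\ref{le:2ndterm}) -- showing that~\eqref{eq:large} genuinely discards nearly all sign-violating gradient components. The mechanism is clear: $\bm{r}_i\ne\bm{0}$ forces ${\rm sign}(\bm{a}_i^\ccalT\bm{z}_t)\ne{\rm sign}(\bm{a}_i^\ccalT\bm{x})$, hence $\bm{r}_i=\pm2\psi_i\bm{a}_i$, and writing $\bm{a}_i^\ccalT\bm{z}_t=\bm{a}_i^\ccalT\bm{x}+\bm{a}_i^\ccalT\bm{h}_t$ a \emph{surviving} such index obeys $|\bm{a}_i^\ccalT\bm{z}_t|=|\bm{a}_i^\ccalT\bm{h}_t|-\psi_i\ge\psi_i/(1+\gamma)$, so that $\psi_i\le\tfrac{1+\gamma}{2+\gamma}\,|\bm{a}_i^\ccalT\bm{h}_t|$ and therefore $\|\bm{r}_i\|\le\tfrac{2(1+\gamma)}{2+\gamma}\,\|\bm{a}_i\bm{a}_i^\ccalT\bm{h}_t\|$, while such indices form at most an $\mathcal{O}(\rho)$-fraction of $[m]$. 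The difficulty is making this quantitative and uniform: $\mathcal{I}_{t+1}$ and $\bm{h}_t$ are both data-dependent, so one must bound $\sup_{\|\bm{h}\|\le\rho\|\bm{x}\|}\|\frac1m\sum_{i\in\mathcal{I}(\bm{h})}\bm{r}_i(\bm{h})\|$ by a covering argument in which the per-point expectations require a careful Gaussian integration over the joint law of $(\bm{a}_i^\ccalT\bm{x},\bm{a}_i^\ccalT\bm{h})$ conditioned on the thin ``wrong-sign but untruncated'' cone. Squeezing the resulting constant $\zeta$ small enough for $\gamma\ge1/2$ -- so the contraction survives down to $m$ of order $2n$ -- is the heart of the matter; the initialization estimate, by comparison, is routine once the reveal-the-first-coordinate decoupling and the gap computation for $\alpha=1/6$ are in place.
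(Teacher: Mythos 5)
Your overall architecture is the paper's: a constant-relative-error initialization (Proposition~\ref{prop:initial}) followed by a local regularity condition that forces geometric contraction (Propositions~\ref{prop:lec}--\ref{prop:inner}), with a single $\epsilon$-net over the $\rho$-ball so that one high-probability event serves every iteration. The refinement stage is essentially identical to the paper's: your inequality $\psi_i\le\tfrac{1+\gamma}{2+\gamma}|\bm{a}_i^\ccalT\bm{h}_t|$ for a surviving wrong-sign index is exactly the inclusion $\mathcal{E}_i\cap\mathcal{K}_i\subseteq\mathcal{D}_i\cap\mathcal{K}_i$ of Lemma~\ref{le:events}, and your ``$\mathcal{O}(\rho)$-fraction'' claim is the Cauchy-distribution estimate behind Lemma~\ref{le:2ndterm}. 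For the initialization you take a genuinely different route: conditional second moment of the selected rows, matrix concentration, then a Davis--Kahan comparison. The paper never forms the population matrix; it uses the deterministic Rayleigh-quotient identity of Lemma~\ref{lem:beta}, $\sin^2\theta\le\|\widebar{\bm{S}}_0\bm{x}^\perp\|^2/\|\widebar{\bm{S}}_0\bm{x}\|^2$, and bounds numerator and denominator separately, the denominator via the ordering inequality of Lemma~\ref{lem:max} plus a chi-square pairing trick that produces the $1+\log(m/|\widebar{\mathcal{I}}_0|)$ gain. Your route is more standard; the paper's yields an explicit quantitative dependence of the initialization error on $m/|\widebar{\mathcal{I}}_0|$ (your stated eigengap formula, mixing the unnormalized $a_{1,1}^2$ with $n^{-1}$ and a $(1+\sqrt{n/m})^2$ term, would need to be redone on the normalized matrix).

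Two points need repair. First, with $\bm{x}=\bm{e}_1$ the selection statistic is $\psi_i/\|\bm{a}_i\|=|a_{i,1}|/\|\bm{a}_i\|$, not $|a_{i,1}|$, so revealing the first coordinates alone does \emph{not} determine $\widebar{\mathcal{I}}_0$ and your decoupling as stated fails. The correct version (Appendix~\ref{sec:proofup}) works with the spherical vectors $\bm{s}_i=\bm{a}_i/\|\bm{a}_i\|$: the selection depends only on $s_{i,1}^2$, and conditionally on $s_{i,1}$ the block $\bm{s}_{i,\backslash 1}$ is independent across $i$ with the stated conditional covariance; note $\widebar{\bm{Y}}_0$ is built from the $\bm{s}_i$'s, not the $\bm{a}_i$'s. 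Second, the norm bound $\|\tfrac1m\sum_{i\in\mathcal{I}_{t+1}}\bm{r}_i\|\le\zeta\|\bm{h}_t\|$ with $\zeta$ small is stronger than this machinery delivers: testing against an arbitrary direction and applying Cauchy--Schwarz with Lemma~\ref{le:2ndterm} only gives $\zeta\asymp\sqrt{\zeta_2'}\approx 0.5$, which is not small enough to treat as negligible. What the contraction actually needs, and what the paper proves, is the inner product against $\bm{h}_t$ itself, $|\langle\bm{h}_t,\tfrac1m\sum\bm{r}_i\rangle|\le\zeta_2\|\bm{h}_t\|^2$ with $\zeta_2\approx 0.31$, where both factors in the summand become $|\bm{a}_i^\ccalT\bm{h}_t|$; and the smoothness bound $\|\tfrac1m\nabla\ell_{\rm tr}(\bm{z})\|\le(1+\delta)^2\|\bm{h}\|$ comes for free from $\big||\bm{a}_i^\ccalT\bm{z}|-\psi_i\big|\le|\bm{a}_i^\ccalT\bm{h}|$ together with $\|\bm{A}\|\lesssim\sqrt{m}$, with no reference to $\bm{r}_i$ at all. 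Rewritten in that form, your convergence argument closes.
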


	\footnotetext{The symbol $\lceil\cdot\rceil$ is the ceiling operation returning the smallest integer greater than or equal to the given number.} 
 	
 Typical parameter values for TAF in Algorithm \ref{alg:TAF} are $\mu=0.6$, and $\gamma=0.7$. The proof of Theorem~\ref{thm:initial} is relegated to Section~\ref{sec:proof}. 
Theorem~\ref{thm:initial} asserts that: i) TAF reconstructs the solution $\bm{x}$ exactly as soon as the number of equations is about the number of unknowns, which is theoretically order optimal. Our numerical tests demonstrate that for the real-valued Gaussian model, TAF achieves a success rate of $100\%$ when $m/n$ is as small as $3$, which is slightly larger than the information limit of $m/n=2$ (Recall that $m\ge 2n-1$ is necessary for the uniqueness.)
This is a significant reduction in the sample complexity ratio, which 
is $5$ for TWF and $7$ for WF. 
Surprisingly, TAF also enjoys a success rate of over $50\%$ when $m/n$ is the information limit $2$, which has not yet been presented for any existing algorithms. See further discussion in Section~\ref{sec:test};  
and, ii) TAF converges exponentially fast with convergence rate independent of the dimension $n$. Specifically,
TAF requires at most $\mathcal{O}(\log (1/\epsilon))$ iterations to achieve any given solution accuracy $\epsilon>0$ (a.k.a., ${\rm dist}(\bm{z}_t,\bm{x})\le \epsilon\left\|\bm{x}\right\|$), 
with iteration cost $\mathcal{O}(mn)$. Since the truncation 
takes time on the order of $\mathcal{O}(m)$, the
 computational burden of TAF per iteration 
is dominated by the evaluation of the gradient components. The latter involves two matrix-vector multiplications that are computable in $\mathcal{O}(mn)$ flops, namely, $\bm{A}\bm{z}_t$ yields $\bm{u}_t$, and $\bm{A}^\ccalT\bm{v}_t$ the  gradient, where $\bm{v}_t:=\bm{u}_t-\bm{\psi}\odot\tfrac{\bm{u}_t}{|\bm{u}_t|}$. Hence, the total running time of TAF is $\mathcal{O}(mn\log(1/\epsilon))$, which is proportional to the time taken to read the data $\mathcal{O}(mn)$.

In the noisy setting, TAF is stable under additive noise. To be more specific, consider the amplitude-based data model 
$
\psi_i=|\bm{a}_i^\ccalT\bm{x}|+\eta_i
$.
 It can be shown that the truncated amplitude flow estimates in Algorithm \ref{alg:TAF} satisfy
\begin{equation}
		{\rm dist}\left( \bm{z}_t,\bm{x}\right)\lesssim \left(1-\nu\right)^t\left\|\bm{x}\right\|+\frac{1}{\sqrt{m}}\left\|\bm{\eta}\right\|,\quad  t=0,\,1,\,\ldots
\end{equation}
with high probability for all $\bm{x}\in\mathbb{R}^n$, provided that $m\ge c_1|\widebar{\mathcal{I}}_0|\ge c_2n$ for sufficiently large $n$ and the noise is bounded $\left\|\bm{\eta}\right\|_{\infty}\le c_3\left\|\bm{x}\right\|$ with $\bm{\eta}:=[\eta_1~\cdots~\eta_n]^\ccalT$, where $0<\nu<1$, and $c_1,\,c_2,\,c_3>0$ are some universal constants. 
	The proof can be directly adapted from those of Theorem~\ref{thm:initial} above and Theorem 2 in \cite{twf}.

\section{Simulated Tests}\label{sec:test}
In this section, we provide additional numerical tests evaluating performance of the proposed scheme relative to (T)WF \footnote{Matlab codes directly downloaded from the authors' websites: \url{http://statweb.stanford.edu/~candes/TWF/algorithm.html}; \url{http://www-bcf.usc.edu/~soltanol/WFcode.html}.}  and AF.
The initial estimate was found based on $50$ power iterations, and was subsequently refined by $T=1,000$ gradient-type iterations in each scheme. The Matlab implementations of TAF are available at \url{https://gangumn.github.io/TAF/} for reproducibility.

\begin{figure}[t!]
	\centering
	\begin{subfigure}
		\centering
		\includegraphics[width=.5\textwidth]{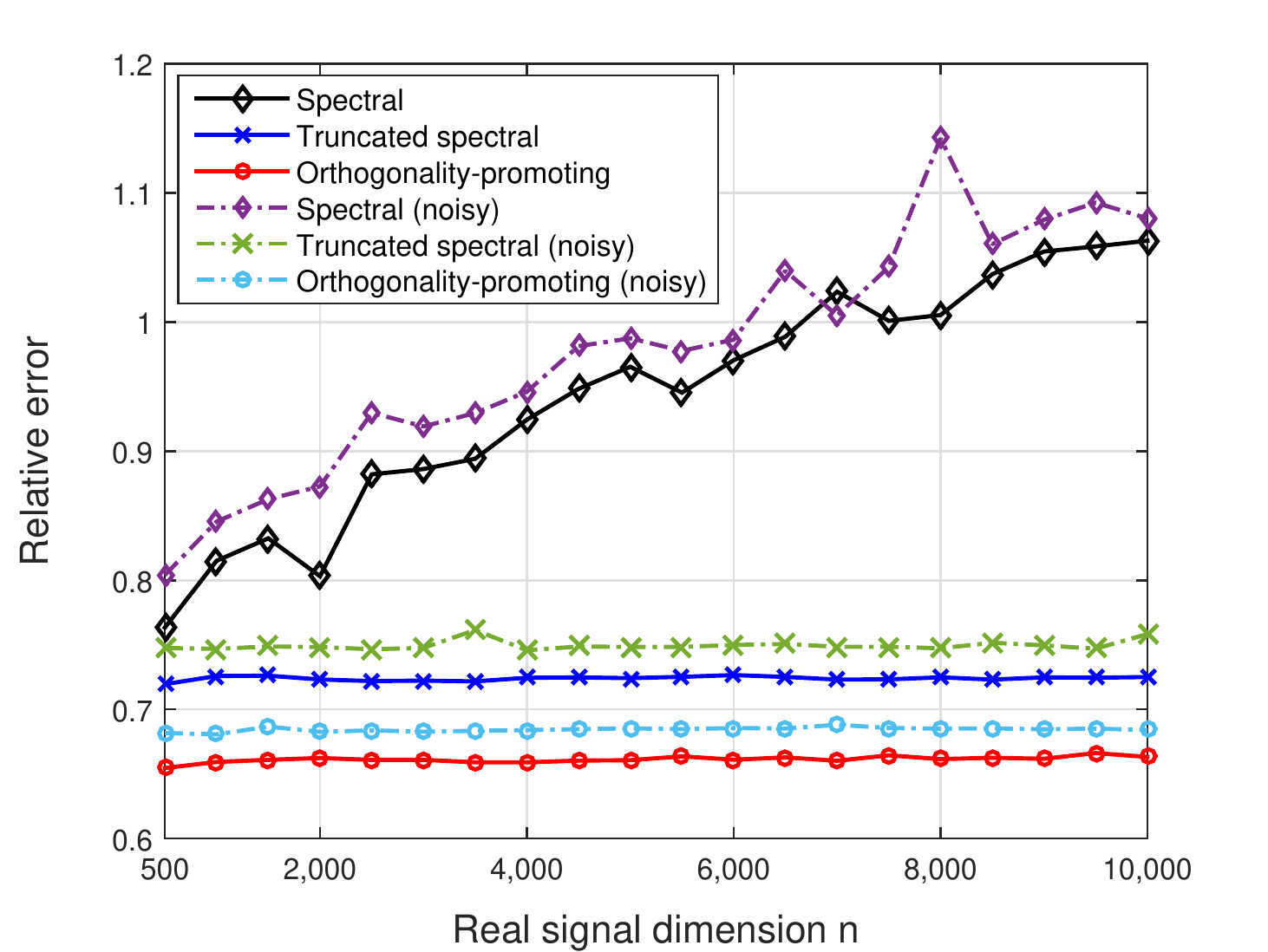}
	\end{subfigure}
	\hspace{-12pt}
	\begin{subfigure}
		\centering
		\includegraphics[width=.5\textwidth]{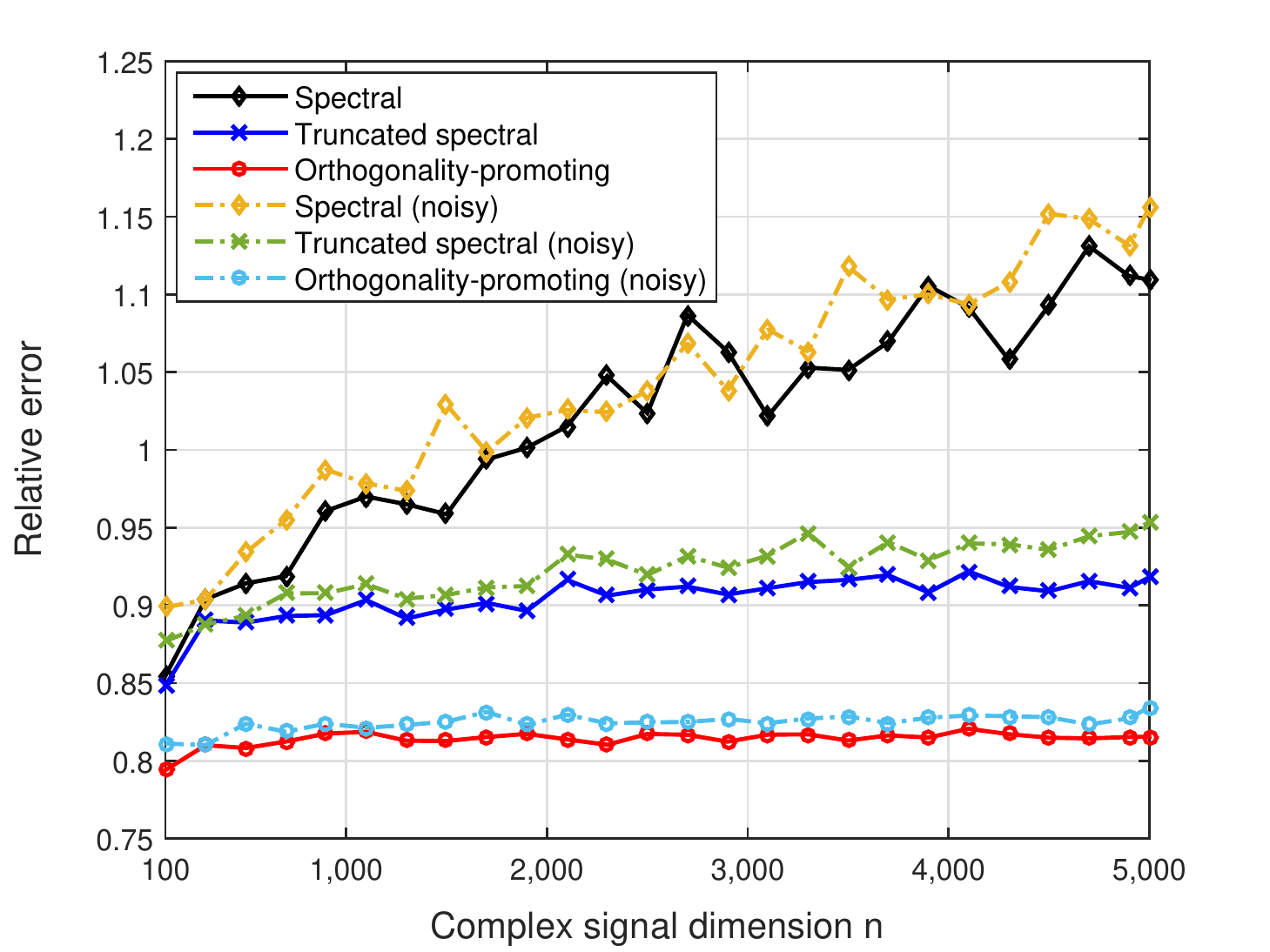} 
	\end{subfigure}
	\caption{
		Average relative error of estimates obtained from $100$ MC trials 
		using: i) the spectral method~\cite{altmin,wf}; ii) the truncated spectral method~\cite{twf}; and iii) the proposed orthogonality-promoting method on noise-free (solid lines) and noisy (dotted lines) instances with $m/n=6$, and $n$ varying from $500/100$ to $10,000/5,000$ for real-/complex-valued vectors. Left: Real-valued Gaussian model with $\bm{x}\sim\mathcal{N}(\bm{0},\bm{I}_n)$, $\bm{a}_i\sim\mathcal{N}(\bm{0},\bm{I}_n)$, and $\sigma^2=0.2^2\left\|\bm{x}\right\|^2$. Right: Complex-valued Gaussian model with $\bm{x}\sim \mathcal{CN}(\bm{0},\bm{I}_n)$, $\bm{a}_i\sim\mathcal{CN}(\bm{0},\bm{I}_n)$, and $\sigma^2=0.2^2\left\|\bm{x}\right\|^2$. 
	}
	\label{fig:ireal}
	\hspace{-0pt}
\end{figure}

\begin{figure}[ht]
\centering
   \includegraphics[scale=0.58]{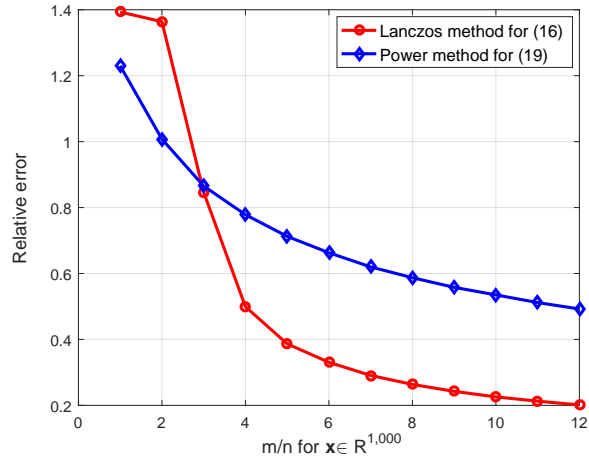}
    \vspace{-0pt}
\caption{Relative initialization error of the initialization solving the minimum eigenvalue problem in \eqref{eq:mineig} via the Lanczos method and by solving the maximum eigenvalue problem in \eqref{eq:maxeig}.}
\label{fig:minmax}
\end{figure}

Left panel in Fig.~\ref{fig:ireal} presents the average relative error of three initialization methods on a series of noiseless/noisy real-valued Gaussian problems with $m/n=6$ fixed, and $n$ varying from $500$ to $10^4$, while those for the corresponding complex-valued Gaussian instances are shown in the right panel. Clearly, the proposed initialization method returns more accurate and robust estimates than the spectral ones. Under the same condition for the real-valued Gaussian model, Fig. \ref{fig:minmax}
compares the initialization implemented in Algorithm \ref{alg:TAF} obtained by solving the maximum eigenvalue problem in \eqref{eq:maxeig} with
the one obtained by tackling the minimum eigenvalue problem in \eqref{eq:mineig} via the Lanczos method \cite{saad1}. When the number of equations is relatively small (less than about $3n$), the former performs better than the latter. Interestingly though, the latter works remarkably well and almost halves the error incurred by the implemented initialization of Algorithm \ref{alg:TAF} as soon as the number of equations becomes larger than $4$. 

\begin{figure}[ht]
	\centering
	\includegraphics[scale=0.58]{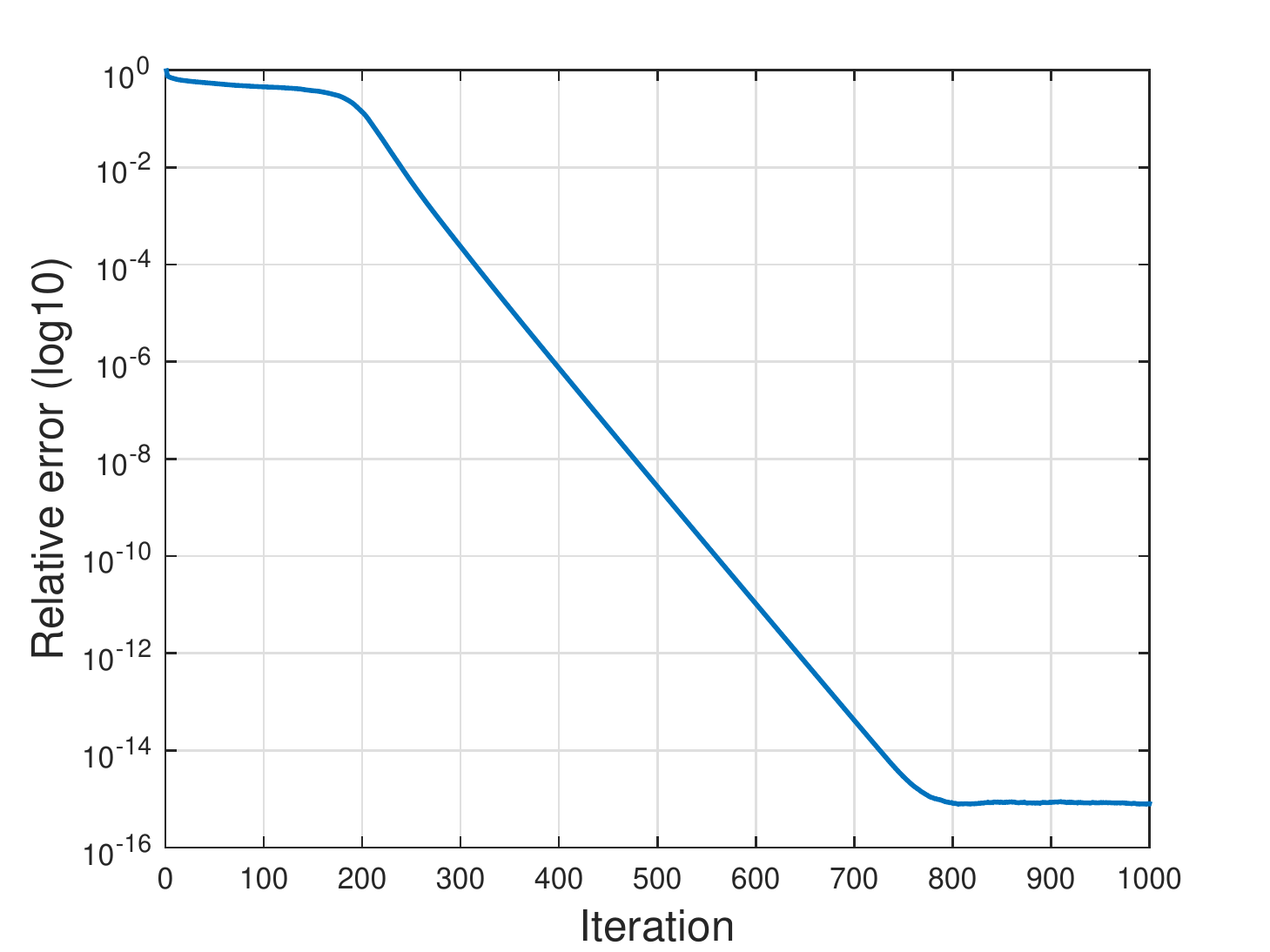}
	\vspace{-0pt}
	\caption{Relative error versus iteration for TAF for a noiseless real-valued Gaussian model under the information-limit of $m=2n-1$.}
	\label{fig:instance}
\end{figure}

\begin{figure}[ht]
	\centering
	\begin{subfigure}
	\centering
	\includegraphics[width=.5\textwidth]{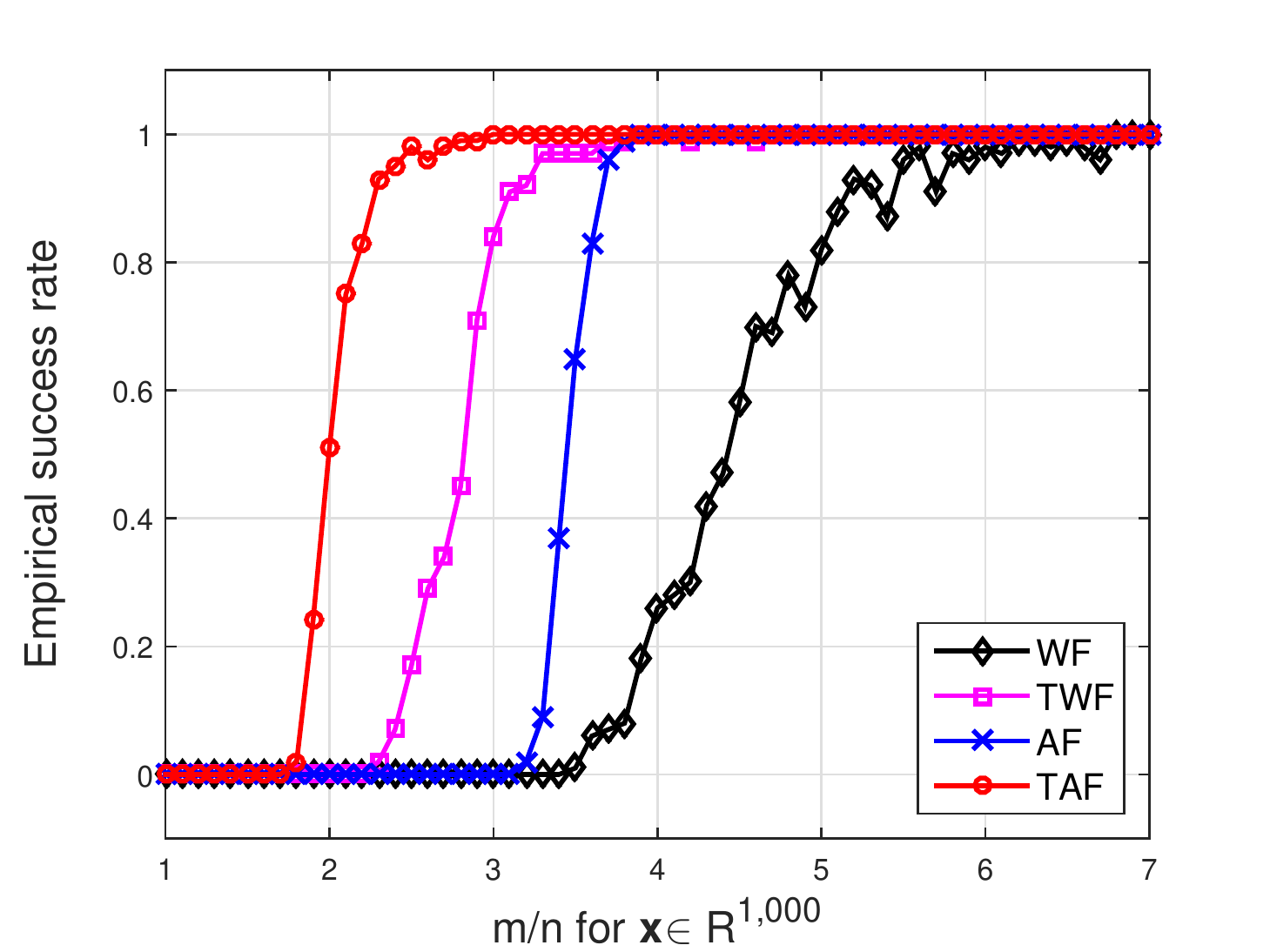}
	\end{subfigure}
	\hspace{-12pt}	
	\begin{subfigure}
	\centering
	\includegraphics[width=.5\textwidth]{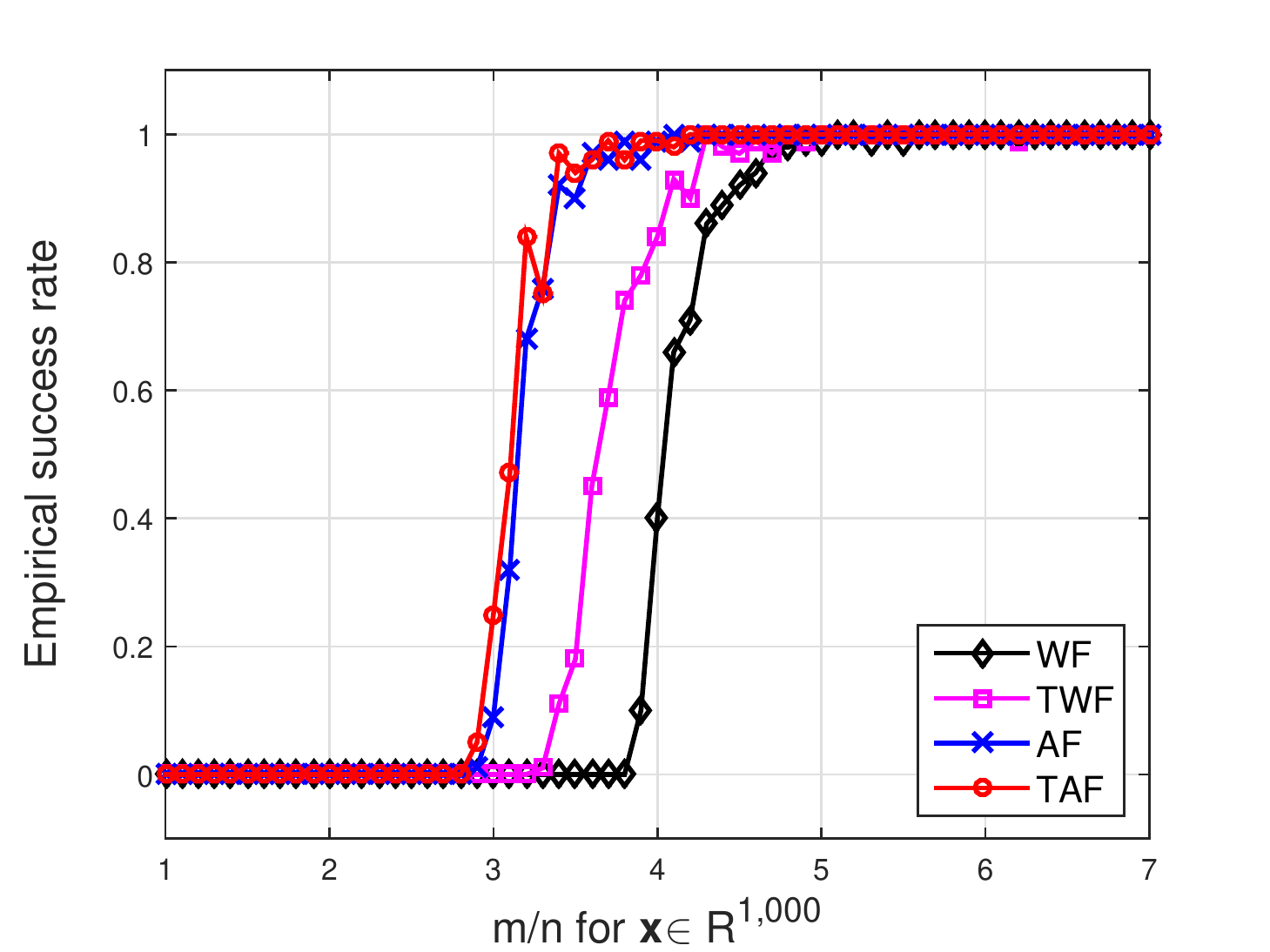} 
	\end{subfigure}
\caption{Empirical success rate for WF, TWF, AF, and TAF with $n=1,000$ and $m/n$ varying by $0.1$ from $1$ to $7$. Left: Noiseless real-valued Gaussian model with 
$\bm{x}\sim\mathcal{N}(\bm{0},\bm{I}_n)$ and $\bm{a}_i\sim\mathcal{N}(\bm{0},\bm{I}_n)$; Right: Noiseless complex-valued Gaussian model with 
 $\bm{x}\sim\mathcal{CN}(\bm{0},\bm{I}_n)$ and $\bm{a}_i\sim\mathcal{CN}(\bm{0},\bm{I}_n)$.}
\label{fig:srate}
\vspace{-0pt}
\end{figure}

To demonstrate the power of TAF, Fig.~\ref{fig:instance} plots the relative error of recovering a real-valued signal in logarithmic scale versus the iteration count under the information-limit of $m=2n-1$ noiseless i.i.d. Gaussian measurements~\cite{2006balan}. In this case, since the returned initial estimate is relatively far from the optimal solution (see Fig.~\ref{fig:mdifferent}), 
TAF converges slowly for the first $200$ iterations or so due to elimination of a significant amount of `bad' generalized gradient components (corrupted by mistakenly estimated signs). As the iterate gets more accurate and lands within a small-size neighborhood of $\bm{x}$, TAF converges exponentially fast to the globally optimal solution. It is worth emphasizing that no existing method succeeds in this case. 
Figure~\ref{fig:srate} compares the empirical success rate of three schemes under both real-valued and complex-valued Gaussian models
with $n=10^3$ and 
 $m/n$ varying by $0.1$ from $1$ to $7$, where a success is claimed if the estimate has a relative error less than $10^{-5}$. 
For real-valued vectors, 
  TAF achieves a success rate of over $50\%$ when $m/n=2$, and guarantees perfect recovery from about $3n$ measurements; while for complex-valued ones, TAF enjoys a success rate of $95\%$ when $m/n=3.4$, and ensures perfect recovery from about $4.5n$ measurements.

  To demonstrate the stability of TAF, the relative mean-squared error (MSE) $${\rm Relative~MSE}:=\frac{{\rm dist}^2(\bm{z}_T,\bm{x})}{\|\bm{x}\|^2}$$
  as a function of the signal-to-noise ratio (SNR) is plotted for different $m/n$ values. We consider the noisy model $\psi_i=|\langle\bm{a}_i,\bm{x}\rangle|+\eta_i$ with $\bm{x}\sim\mathcal{N}(\bm{0},\bm{I}_{1,000})$ and real-valued independent Gaussian sensing vectors $\bm{a}_i\sim\mathcal{N}(\bm{0},\bm{I}_{1,000})$, in which $m/n$ takes values $\{6,\,8,\,10\}$, and the SNR in dB, given by $${\rm SNR}:=10\log_{10} \frac{\sum_{i=1}^m|\langle\bm{a}_i,\bm{x}\rangle|^2}{\sum_{i=1}^m\eta_i^2}
  $$
  is varied from $10$ dB to $50$ dB. Averaging over $100$ independent trials, Fig.~\ref{fig:db} demonstrates that the relative MSE for all $m/n$ values scales inversely proportional to SNR, hence justifying the stability of TAF under bounded additive noise. 
  
  \begin{figure}[ht]
  	\centering
  	\includegraphics[scale=0.6]{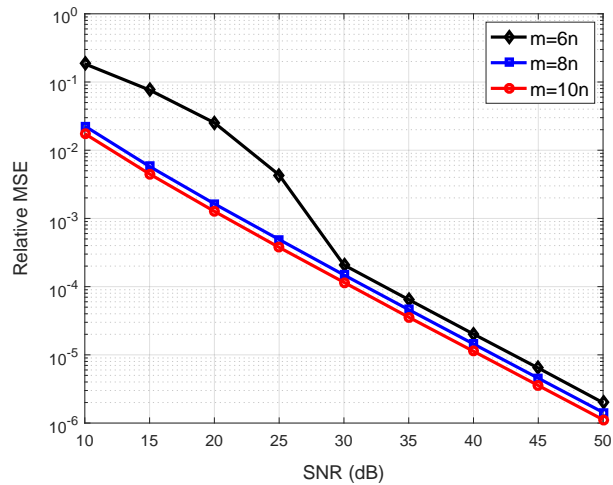}
  	\caption{Relative MSE versus SNR for TAF when $\psi_i$'s follow the amplitude-based noisy data model.}
  	\label{fig:db}
  \end{figure}

The next experiment evaluates the efficacy of the proposed initialization method, simulating all schemes initialized by the truncated spectral initial estimate~\cite{twf} and the orthogonality-promoting initial estimate. Apparently, all algorithms except WF admit a significant performance improvement when initialized by the proposed orthogonality-promoting initialization relative to the truncated spectral initialization. Nevertheless, TAF with our developed orthogonality-promoting initialization enjoys superior performance over all simulated approaches.

\begin{figure}[!ht]
\centering
   \includegraphics[scale = 0.58]
   {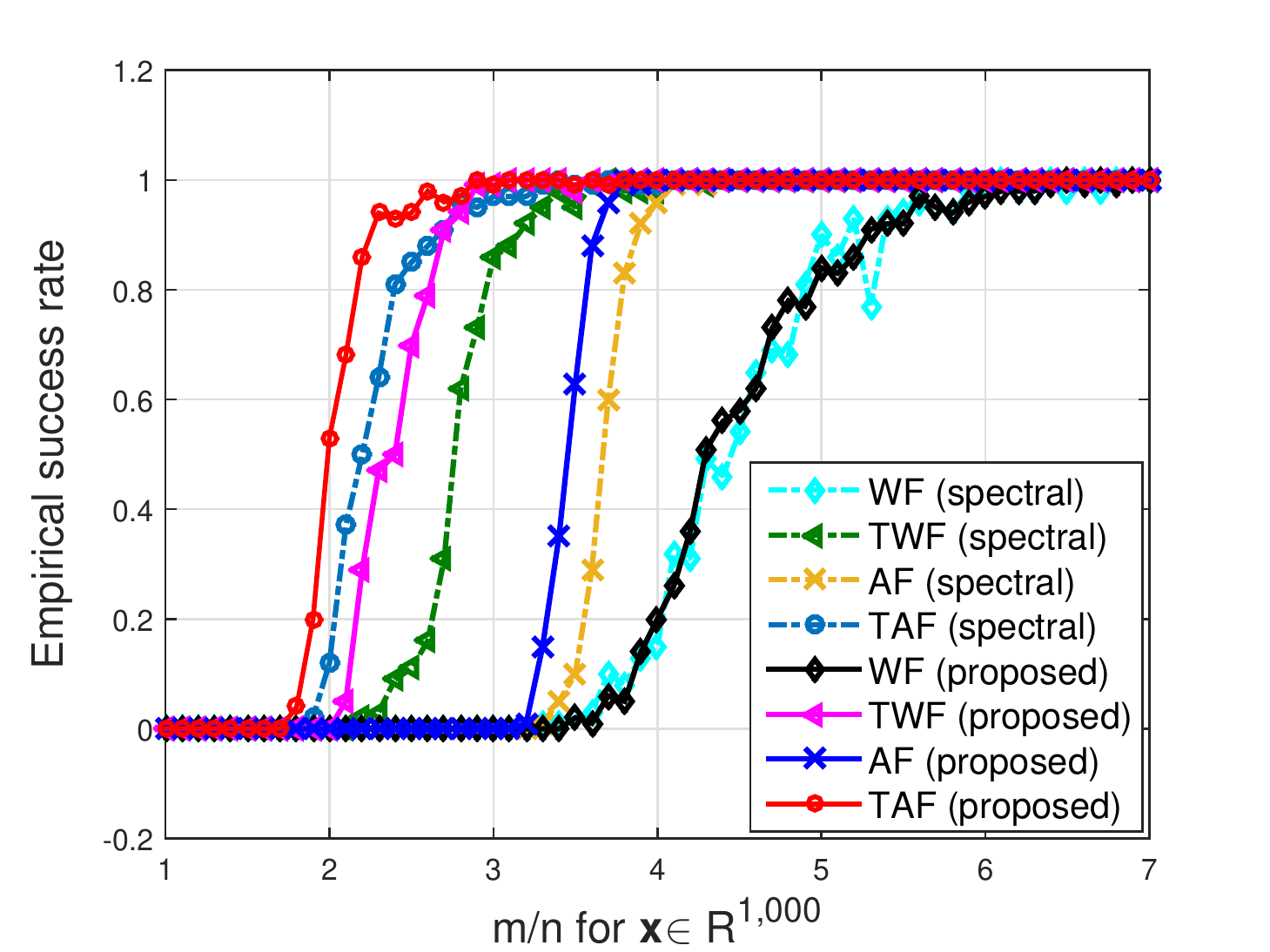}
    \vspace{-0pt}
\caption{Empirical success rate for WF, TWF, AF, and TAF initialized by the truncated spectral and the orthogonality-promoting initializations 
with $n=1,000$ and $m/n$ varying by $0.1$ from $1$ to $7$.}
\label{fig:optsi}
\end{figure}

	\begin{figure*}[ht]
	\centering
\includegraphics[height=.2\textheight, width=1\textwidth]{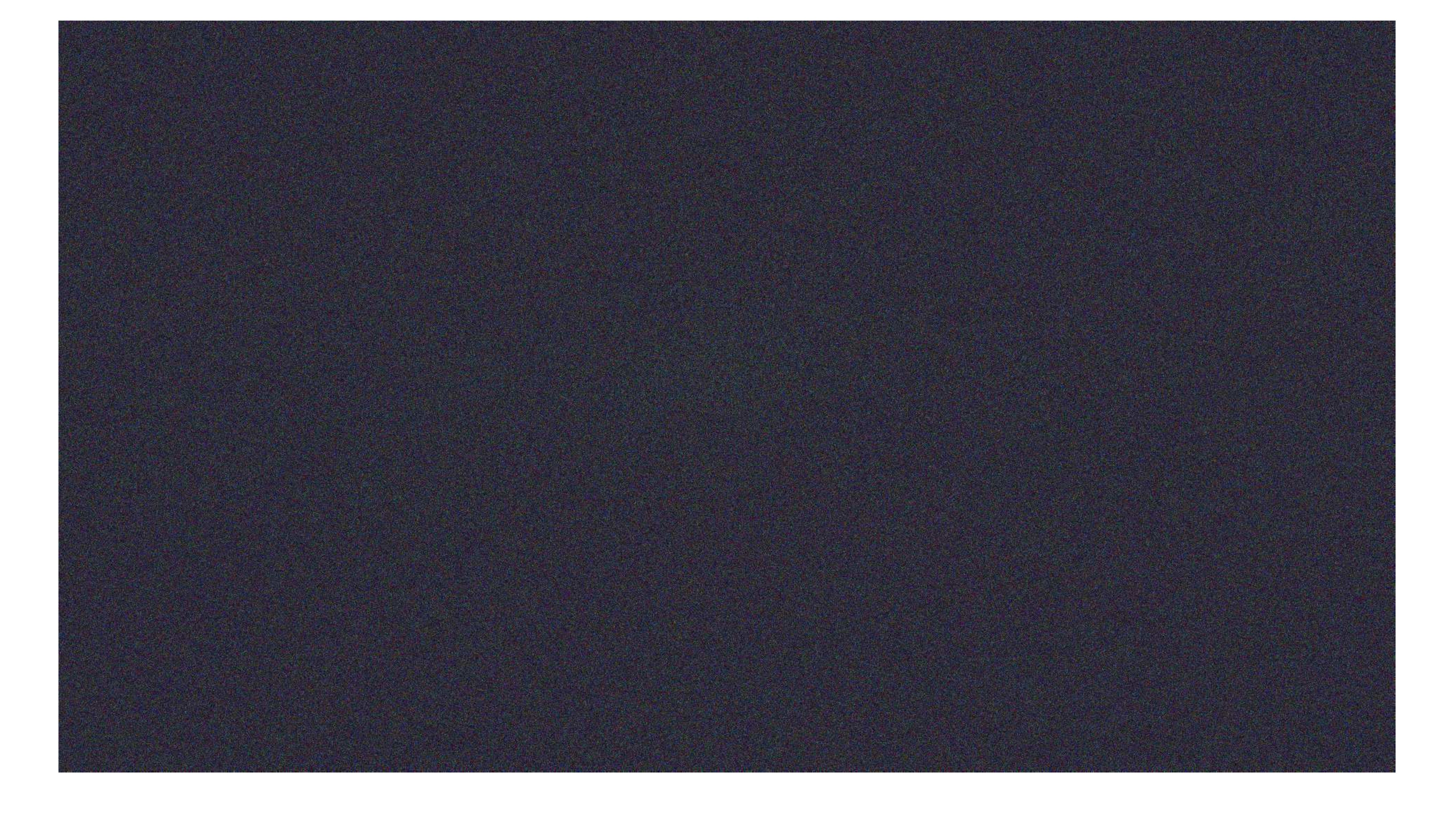}
	\includegraphics[height=.2\textheight, width=1\textwidth]{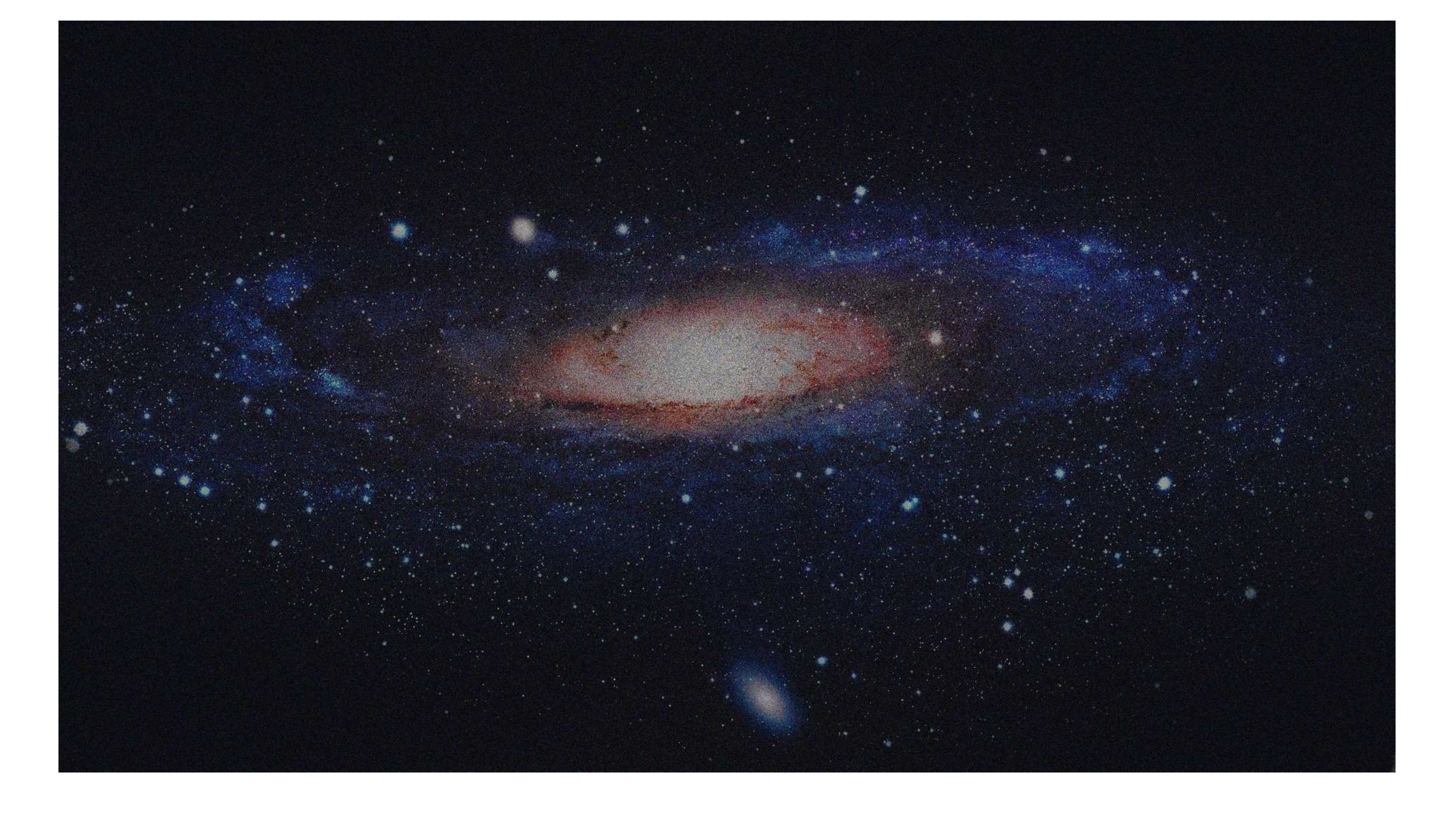}
	\includegraphics[height=.2\textheight, width=1\textwidth]{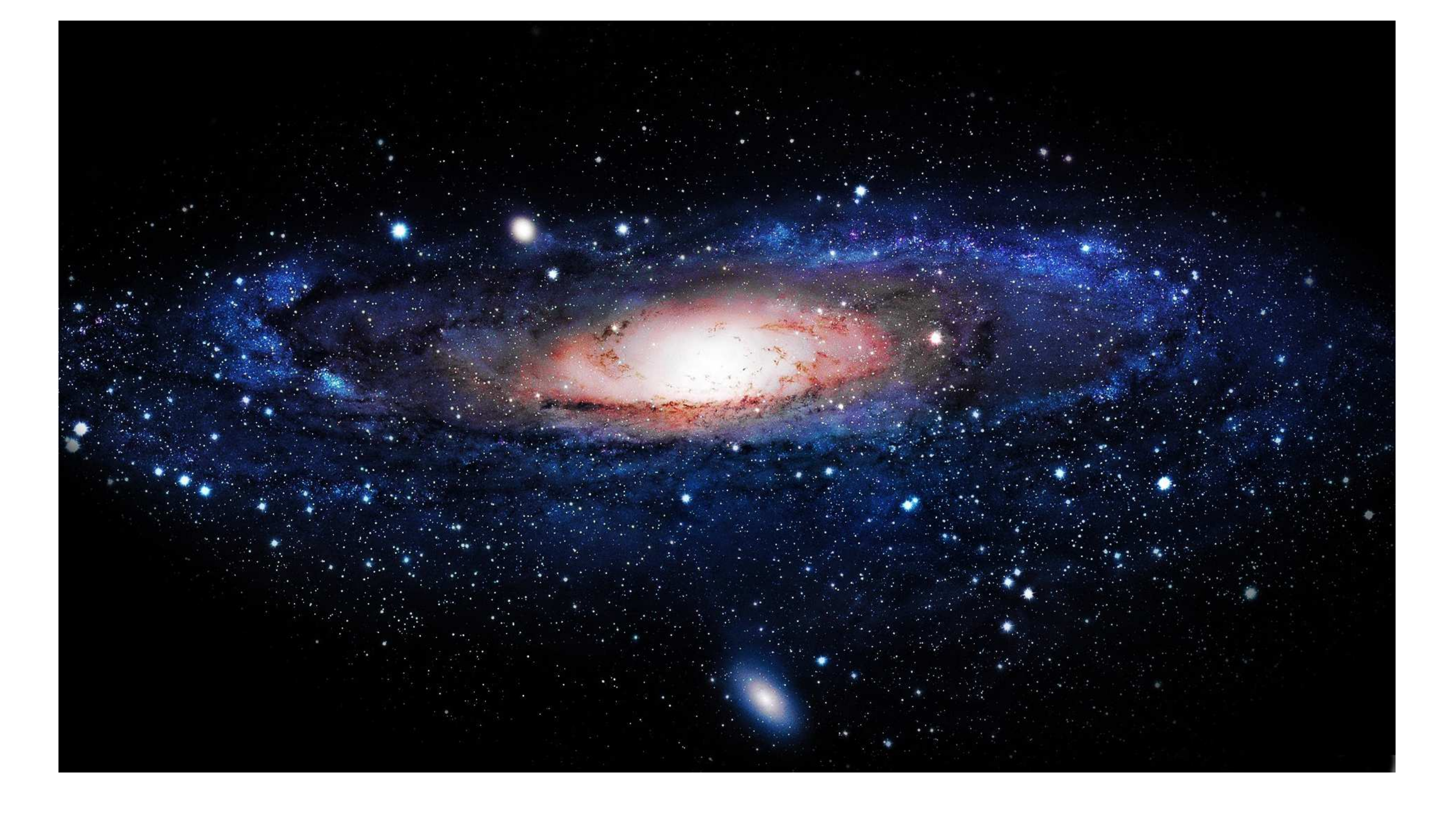}
\caption{
The recovered Milky Way Galaxy images after i) truncated spectral initialization (top); ii) orthogonality-promoting initialization (middle); and iii) $100$ TAF gradient iterations refining the orthogonality-promoting initialization (bottom).
}
\label{fig:milky}
	\vspace{-0pt}
\end{figure*}

Finally, to examine the effectiveness and scalability of TAF in real-world conditions, we simulate recovery of the Milky Way Galaxy image \footnotemark  \footnotetext{Downloaded from \url{http://pics-about-space.com/milky-way-galaxy}.} $\bm{X}\in\mathbb{R}^{1080\times 1920\times 3}$ shown in Fig.~\ref{fig:milky}. The first two indices encode the pixel locations, and the third the RGB (red, green, blue) color bands. 
Consider the coded diffraction pattern (CDP) measurements with random masks~\cite{coded,wf,twf}.
Letting $\bm{x}\in\mathbb{R}^n$ be a vectorization of a certain band of $\bm{X}$ and postulating a number $K$ of random masks, one can further write
\begin{equation}
	\bm{\psi}^{(k)}=\big|\bm{F}\bm{D}^{(k)}\bm{x}\big|,\quad 1\le k\le K,
\end{equation}
where $\bm{F}$ denotes the $n\times n$ discrete Fourier transform matrix, and $\bm{D}^{(k)}$ is a diagonal matrix holding entries sampled uniformly at random from $\{1,\,-1,\,j,\,-j\}$ (phase delays) on its diagonal, with $j$ 
denoting the imaginary unit. Each $\bm{D}^{(k)}$ represents a random mask placed after the object~\cite{coded}. 
With $K=6$ masks implemented in our experiment, the total number of quadratic measurements is $m=6n$. Every algorithm was run independently on each of the three bands. A number $100$ of power iterations were used to obtain an initialization, which was refined by $100$ gradient-type iterations.   
The relative errors after our orthogonality-promoting initialization and after $100$ TAF iterations are $0.6807$ and $9.8631\times 10^{-5}$, respectively, and the recovered images are displayed in Fig.~\ref{fig:milky}. In sharp contrast, TWF returns images of corresponding relative errors $1.3801$ and $1.3409$, which are far away from the ground truth. 

 Regarding running times in all performed experiments, TAF converges slightly faster than TWF, while both are markedly faster than WF. 
All experiments were implemented using MATLAB on an Intel CPU @ $3.4$ GHz ($32$ GB RAM) computer. 

%

\section{Proofs}\label{sec:proof}


This section presents the main ideas behind the proof of Theorem~\ref{thm:initial}, and establishes a few necessary lemmas. Technical details are deferred to the Appendix. 
Relative to WF and TWF, our objective function involves nonsmoothness and nonconvexity, rendering the proof of exact recovery of TAF nontrivial. In addition, our initialization method starts from a rather different perspective than spectral alternatives, so that the tools involved in proving performance of our initialization deviate from those of spectral methods~\cite{altmin,wf,twf}. 
Part of our proof is adapted from \cite{wf,twf} and \cite{2015chen1}.

The proof of Theorem~\ref{thm:initial} consists of two parts: Section~\ref{subsec:initial} justifies the performance of the proposed orthogonality-promoting initialization, which essentially achieves any given constant relative error as soon as the number of equations is on the order of the number of unknowns, namely, $m \asymp n$.\footnote{The notations $\phi(n)=\mathcal{O}(g(n))$ or $\phi(n) \gtrsim g(n)$ (respectively, $\phi(n)\lesssim g(n)$) means there exists a numerical constant $c>0$ such that $\phi(n)\le c g(n)$, while $\phi(n)\asymp g(n)$ means $\phi(n)$ and $g(n)$ are orderwise equivalent.} Section~\ref{subsec:er} demonstrates theoretical convergence of TAF to the solution of the quadratic system in~\eqref{eq:quad} at a geometric rate provided that the initial estimate has a sufficiently small constant relative error as in~\eqref{eq:i1}. The two stages of TAF can be performed independently, meaning that better initialization methods, if available, could be adopted to initialize our truncated generalized gradient iterations; likewise, our initialization may be applied to other iterative optimization algorithms.


\subsection{Constant relative error by orthogonality-promoting initialization}\label{subsec:initial}

This section concentrates on proving guaranteed performance of the proposed orthogonality-promoting initialization method, as asserted in the following proposition.
An alternative approach may be found in \cite{duchi2017}. 

\begin{proposition}\label{prop:initial}
	Fix $\bm{x}\in\mathbb{R}^n$ arbitrarily, and consider the noiseless case $\psi_i=|\bm{a}_i^\ccalT\bm{x}|$, where $\bm{a}_i\widesim{i.i.d.} \mathcal{N}(\bm{0},\,\bm{I}_n)$, $1\le  i\le m$. Then with probability at least $1-(m+5){\rm e}^{-n/2}-{\rm e}^{-c_0 m}-1/n^2$ for some universal constant $c_0>0$, the initialization $\bm{z}_0$ returned by the orthogonality-promoting method satisfies
	\vspace{-.em}
	\begin{equation}\label{eq:i1x}
		{\rm dist}(\bm{z}_0,\,\bm{x})\le \rho\left\|\bm{x}\right\|
		\vspace{-.em}
	\end{equation}
		for $\rho=1/10$ or any positive constant,  with the proviso that $m\ge c_1|\widebar{\mathcal{I}}_0|\ge c_2n$ for some numerical constants $c_1,\,c_2>0$ and sufficiently large $n$.
\end{proposition}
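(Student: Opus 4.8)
\textbf{Proof plan for Proposition~\ref{prop:initial}.}

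The plan is to split the argument into two essentially independent pieces: (i) showing that the leading eigenvector $\tilde{\bm{z}}_0$ of $\widebar{\bm{Y}}_0$ is close in direction to $\bm{x}$, and (ii) showing that the scalar normalization in~\eqref{eq:oie} is accurate. For piece (ii), by rotational invariance as in~\eqref{eq:rip} I may assume $\bm{x}=\|\bm{x}\|\bm{e}_1$; then $\frac1m\sum y_i = \|\bm{x}\|^2\cdot\frac1m\sum a_{i,1}^2$, and a standard $\chi^2$ (or sub-exponential Bernstein) concentration bound gives $\big|\frac1m\sum y_i-\|\bm{x}\|^2\big|\le\varepsilon\|\bm{x}\|^2$ with probability $1-\mathrm{e}^{-c_0m}$ once $m\gtrsim n$. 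Combined with a direction bound $\min\{\|\tilde{\bm{z}}_0-\bm{x}/\|\bm{x}\|\|,\|\tilde{\bm{z}}_0+\bm{x}/\|\bm{x}\|\|\}\le\delta$, the triangle inequality yields ${\rm dist}(\bm{z}_0,\bm{x})\le(\delta+O(\varepsilon))\|\bm{x}\|$, which is $\le\rho\|\bm{x}\|$ for $\rho=1/10$ after choosing $\delta,\varepsilon$ small.

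The core is piece (i). Again normalize $\|\bm{x}\|=1$ and WLOG $\bm{x}=\bm{e}_1$. The key observation is that $\widebar{\bm{Y}}_0=\frac{1}{|\widebar{\mathcal I}_0|}\sum_{i\in\widebar{\mathcal I}_0}\frac{\bm{a}_i\bm{a}_i^\ccalT}{\|\bm{a}_i\|^2}$ is built from the indices with the \emph{largest} values of $\psi_i/\|\bm{a}_i\|=|\langle\bm{a}_i/\|\bm{a}_i\|,\bm{e}_1\rangle|$, i.e.\ the unit vectors $\bm{s}_i:=\bm{a}_i/\|\bm{a}_i\|$ that are \emph{most aligned} with $\bm{e}_1$; hence one expects $\widebar{\bm{Y}}_0$ to have a dominant eigenvalue in the $\bm{e}_1$ direction. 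Concretely, I would (a) decompose each $\bm{s}_i$ as $(\bm{s}_i^\top\bm{e}_1)\bm{e}_1 + \bm{w}_i$ with $\bm{w}_i\perp\bm{e}_1$, so $\widebar{\bm{Y}}_0 = D\,\bm{e}_1\bm{e}_1^\top + \text{(cross terms)} + \frac{1}{|\widebar{\mathcal I}_0|}\sum \bm{w}_i\bm{w}_i^\top$ where $D=\frac{1}{|\widebar{\mathcal I}_0|}\sum (\bm{s}_i^\top\bm{e}_1)^2$; (b) lower-bound $D$: since we keep the top $\lceil m/6\rceil$ order statistics of $(\bm{s}_i^\top\bm{e}_1)^2$ (which has a $\mathrm{Beta}(1/2,(n-1)/2)$-type law), $D$ concentrates around a constant $\beta>2/n$ — precisely, the conditional expectation of the top-fraction of that distribution, which exceeds $1/n$; (c) bound the cross terms and the orthogonal part: conditioned on the selection, the directions of $\bm{w}_i$ within the orthogonal complement are still uniform on the unit sphere of $\mathbb{R}^{n-1}$, so $\frac{1}{|\widebar{\mathcal I}_0|}\sum\bm{w}_i\bm{w}_i^\top$ concentrates at $(1-\beta)\frac{1}{n-1}\bm{I}_{n-1}$ with operator-norm fluctuation $O(\sqrt{n/m})\cdot\frac1n$ by a matrix Bernstein / covering-net argument, and the cross terms $\sum(\bm{s}_i^\top\bm{e}_1)\bm{e}_1\bm{w}_i^\top$ vanish in expectation and concentrate to $O(1/n)\cdot o(1)$. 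Then a Davis–Kahan / $\sin\theta$ argument: the eigengap between the $\bm{e}_1$-component $D\approx\beta$ and the bulk $\approx(1-\beta)/(n-1)$ is $\Theta(1/n)$ while the perturbation has norm $o(1/n)$, giving $\sin\angle(\tilde{\bm{z}}_0,\bm{e}_1)=o(1)$, hence after possibly flipping sign, $\|\tilde{\bm{z}}_0 - \bm{e}_1\|\le\delta$.

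For the probability bookkeeping I would use: $\|\bm{a}_i\|^2\in[n/2,3n/2]$ for all $i$ simultaneously with probability $1-m\,\mathrm{e}^{-cn}$ (giving the $(m+5)\mathrm{e}^{-n/2}$-type term), a $\mathrm{e}^{-c_0 m}$-term from the order-statistic / matrix-concentration steps (valid when $m\ge c_1|\widebar{\mathcal I}_0|\ge c_2 n$), and a $1/n^2$-term from the polynomial-tail event controlling the deviation of $D$ (or of the quadratic form $\bm{x}^\top\widebar{\bm{Y}}_0\bm{x}$) — these three sum to the claimed bound. \textbf{The main obstacle} is step (c): after conditioning on the data-dependent selection $\widebar{\mathcal I}_0$, the vectors $\{\bm{a}_i\}_{i\in\widebar{\mathcal I}_0}$ are no longer independent, and one must argue that, once we further condition on the radial parts $\|\bm{a}_i\|$ and on the scalars $\bm{s}_i^\top\bm{e}_1$ (which is all that the selection depends on, via $\psi_i/\|\bm{a}_i\|$), the residual directions $\bm{w}_i/\|\bm{w}_i\|$ remain i.i.d.\ uniform on the sphere of the orthogonal complement — this conditional-independence structure is what rescues the concentration argument, and spelling it out rigorously (together with a uniform-over-the-net bound to handle the dependence of the net on the selection) is the delicate part; everything else is a routine $\chi^2$/sub-exponential and matrix-Bernstein computation.
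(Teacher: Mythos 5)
Your overall architecture (direction estimate plus norm estimate, rotational reduction to $\bm{x}=\|\bm{x}\|\bm{e}_1$, and the observation that the selection event depends only on the scalars $\psi_i/\|\bm{a}_i\|=|s_{i,1}|$ so that the conditioned orthogonal components remain independent and spherically symmetric) matches the paper's strategy, and your handling of the scaling step is the same as the paper's. Where you genuinely diverge is in how the direction bound is extracted: you propose an explicit decomposition $\widebar{\bm{Y}}_0=D\,\bm{e}_1\bm{e}_1^\ccalT+(\text{cross})+\frac{1}{|\widebar{\mathcal{I}}_0|}\sum\bm{w}_i\bm{w}_i^\ccalT$ followed by Davis--Kahan, whereas the paper (Lemma~\ref{lem:beta}) uses the variational characterization of $\tilde{\bm{z}}_0$ to get the clean deterministic inequality $\sin^2\theta\le\|\widebar{\bm{S}}_0\bm{x}^\perp\|^2/\|\widebar{\bm{S}}_0\bm{x}\|^2$, which reduces everything to one upper bound (Lemma~\ref{lem:up}, your step on the orthogonal block) and one lower bound (Lemma~\ref{lem:low}, your quantity $D$) and entirely avoids cross terms and any eigengap analysis. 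Your route is viable, but the paper's ratio bound is both shorter and less fragile, since it never needs to locate the bulk spectrum.

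The genuine gap in your proposal is step (b). You assert that $D=\frac{1}{|\widebar{\mathcal{I}}_0|}\sum_{i\in\widebar{\mathcal{I}}_0}(\bm{s}_i^\ccalT\bm{e}_1)^2$ concentrates around some $\beta>2/n$, "precisely, the conditional expectation of the top-fraction of that distribution, which exceeds $1/n$." Exceeding $1/n$ is not enough: the competing bulk eigenvalue is $(1-\beta)/(n-1)\approx 1/n$ as well (note $\operatorname{tr}\widebar{\bm{Y}}_0=1$), so your entire eigengap is $\beta-(1-\beta)/(n-1)$, and you must show this is at least $c/n$ for a constant $c>0$ large enough to dominate the $O(\sqrt{n/|\widebar{\mathcal{I}}_0|})\cdot(1/n)$ matrix-concentration fluctuation --- which, contrary to your "$o(1/n)$" claim, is of order $\epsilon/n$ with $\epsilon$ a constant controlled only by making $|\widebar{\mathcal{I}}_0|/n$ large. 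Establishing that the sum of the top $|\widebar{\mathcal{I}}_0|$ order statistics of $\{a_{i,1}^2/\|\bm{a}_i\|^2\}$ is at least $\frac{c'|\widebar{\mathcal{I}}_0|}{n}\left[1+\log(m/|\widebar{\mathcal{I}}_0|)\right]$ is exactly the content of the paper's Lemma~\ref{lem:low}, and it is where the heaviest work in the proof lives: a uniform bound $\max_i\|\bm{a}_i\|^2\le 2.3n$ (contributing the $m{\rm e}^{-n/2}$ term), a rearrangement inequality to decouple numerators from denominators, a pairing trick converting $\chi^2_1$ order statistics into $\chi^2_2$ ones with a tractable ccdf, a Hoeffding bound on the empirical quantile, and a Bernstein bound on the truncated sum (contributing the ${\rm e}^{-c_0m}$ and $1/n^2$ terms). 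None of this is routine covariance concentration, and your proposal currently replaces it with an unproven numerical assertion; without it the eigengap --- and hence the whole Davis--Kahan step --- is not established.
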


Due to homogeneity in~\eqref{eq:i1x}, it suffices to consider the case $\|\bm{x}\|=1$. 
Assume for the moment that $\left\|\bm{x}\right\|=1$ is known and 
$\bm{z}_0$ has been scaled such that $\left\|\bm{z}_0\right\|=1$ in~\eqref{eq:oie}. 
The error between the employed $\bm{x}$'s norm estimate $\sqrt{\frac{1}{m}\sum_{i=1}^m y_i}$  
and the unknown norm $\left\|\bm{x}\right\|=1$ will be accounted for at the end of this section. 
Instrumental in proving Proposition~\ref{prop:initial} is the following result, whose proof is provided in Appendix~\ref{sec:lembeta}. 

\begin{lemma}\label{lem:beta}
Consider the noiseless data $\psi_i=|\bm{a}_i^\ccalT\bm{x}|$, where $\bm{a}_i\widesim{i.i.d.} \mathcal{N}(\bm{0},\,\bm{I}_n)$, $1\le  i\le m$. For any unit vector $\bm{x}\in\mathbb{R}^n$, there exists a vector $\bm{u}\in\mathbb{R}^{n}$ with $\bm{u}^\ccalT\bm{x}=0$ and $\|\bm{u}\|=1$ such that
	\begin{equation}\label{eq:mse}
	\frac{1}{2}\left\|\bm{x}\bm{x}^\ccalT-\bm{z}_0\bm{z}_0^\ccalT\right\|^2_F\le \frac{\big\|\widebar{\bm{S}}_0\bm{u}\big\|^2}{\big\|\widebar{\bm{S}}_0\bm{x}\big\|^2}
	\end{equation}
	for $\bm z_0=\tilde{\bm{z}}_0$, where the unit vector $\tilde{\bm{z}}_0$ is given in~\eqref{eq:maxeig}, and 
	$\widebar{\bm{S}}_0$ is formed by removing the rows of $\bm{S}:=\big[\bm{a}_1/\left\|\bm{a}_1\right\|~\cdots~
	\bm{a}_m/\left\|\bm{a}_m\right\|\big]^\ccalT\in\mathbb{R}^{m\times n}$ if their indices do not belong to the set $\widebar{\mathcal{I}}_0$ specified in Algorithm~\ref{alg:TAF}.
\end{lemma}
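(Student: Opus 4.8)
The plan is to read Lemma~\ref{lem:beta} as a deterministic statement about the top eigenvector of the positive semidefinite matrix $\widebar{\bm{Y}}_0=\tfrac{1}{|\widebar{\mathcal{I}}_0|}\widebar{\bm{S}}_0^\ccalT\widebar{\bm{S}}_0$; randomness enters only to discard the probability-zero events $\widebar{\bm{S}}_0\bm{x}=\bm{0}$ (which would make the right-hand side of \eqref{eq:mse} ill-defined) and $\widebar{\bm{S}}_0=\bm{0}$. As in the statement I take $\|\bm{x}\|=1$ and $\bm{z}_0=\tilde{\bm{z}}_0$ with $\|\tilde{\bm{z}}_0\|=1$, set $c:=(\bm{x}^\ccalT\bm{z}_0)^2\in[0,1]$, and fix the sign of the eigenvector so that $\bm{x}^\ccalT\bm{z}_0=\sqrt{c}$ (harmless, since neither side of \eqref{eq:mse} changes under $\bm{z}_0\mapsto-\bm{z}_0$). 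First I would record the elementary identity
\[
\tfrac12\big\|\bm{x}\bm{x}^\ccalT-\bm{z}_0\bm{z}_0^\ccalT\big\|_F^2=1-(\bm{x}^\ccalT\bm{z}_0)^2=1-c,
\]
obtained by expanding ${\rm tr}\big((\bm{x}\bm{x}^\ccalT-\bm{z}_0\bm{z}_0^\ccalT)^2\big)$ and using $\|\bm{x}\|=\|\bm{z}_0\|=1$. If $c=1$ there is nothing to prove, so assume $c<1$ and take $\bm{u}$ to be the unit vector along the projection of $\bm{z}_0$ onto $\bm{x}^\perp$, namely $\bm{u}:=(\bm{I}_n-\bm{x}\bm{x}^\ccalT)\bm{z}_0/\|(\bm{I}_n-\bm{x}\bm{x}^\ccalT)\bm{z}_0\|$; then $\bm{u}^\ccalT\bm{x}=0$, $\|\bm{u}\|=1$, $\|(\bm{I}_n-\bm{x}\bm{x}^\ccalT)\bm{z}_0\|^2=1-c$, and $\bm{z}_0=\sqrt{c}\,\bm{x}+\sqrt{1-c}\,\bm{u}$.

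Next, since $\|\widebar{\bm{S}}_0\bm{w}\|^2=|\widebar{\mathcal{I}}_0|\,\bm{w}^\ccalT\widebar{\bm{Y}}_0\bm{w}$ for every $\bm{w}$, writing $q:=\bm{x}^\ccalT\widebar{\bm{Y}}_0\bm{x}>0$ and letting $\lambda_1=\bm{z}_0^\ccalT\widebar{\bm{Y}}_0\bm{z}_0$ be the largest eigenvalue (so $\widebar{\bm{Y}}_0\bm{z}_0=\lambda_1\bm{z}_0$), the bound \eqref{eq:mse} is equivalent to $\bm{u}^\ccalT\widebar{\bm{Y}}_0\bm{u}\ge q(1-c)$. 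Substituting $\sqrt{1-c}\,\bm{u}=\bm{z}_0-\sqrt{c}\,\bm{x}$ and using $\widebar{\bm{Y}}_0\bm{z}_0=\lambda_1\bm{z}_0$ and $\bm{z}_0^\ccalT\bm{x}=\sqrt{c}$, a direct expansion gives $(1-c)\,\bm{u}^\ccalT\widebar{\bm{Y}}_0\bm{u}=\lambda_1(1-2c)+cq$, so it remains to show
\[
g(\lambda_1):=\lambda_1(1-2c)+cq-q(1-c)^2\ge 0.
\]

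The decisive step is to bracket $\lambda_1$. Maximality of the Rayleigh quotient yields $\lambda_1\ge q$. For the complementary bound, decompose $\bm{x}=\sqrt{c}\,\bm{z}_0+\sqrt{1-c}\,\bm{v}$ with $\bm{v}\perp\bm{z}_0$, $\|\bm{v}\|=1$; then $\bm{v}^\ccalT\widebar{\bm{Y}}_0\bm{z}_0=\lambda_1\bm{v}^\ccalT\bm{z}_0=0$, hence $q=c\lambda_1+(1-c)\,\bm{v}^\ccalT\widebar{\bm{Y}}_0\bm{v}\ge c\lambda_1$ because $\widebar{\bm{Y}}_0\succeq\bm{0}$, that is, $\lambda_1\le q/c$ (the case $c=0$ is immediate, as then $\bm{u}=\bm{z}_0$ and $\bm{u}^\ccalT\widebar{\bm{Y}}_0\bm{u}=\lambda_1\ge q=q(1-c)$). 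Since $g$ is affine in $\lambda$ and $\lambda_1\in[q,\,q/c]$, it suffices to check the two endpoints; a short computation gives $g(q)=qc(1-c)\ge 0$ and $g(q/c)=q(1-c)^3/c\ge 0$, so $g(\lambda_1)\ge\min\{g(q),g(q/c)\}\ge 0$, which is exactly \eqref{eq:mse}. I expect the only real subtlety — more a pitfall than an obstacle — to be recognizing that $\lambda_1\ge q$ by itself is not enough: it secures $g(\lambda_1)\ge0$ only for $c\le 1/2$, and in the regime $c>1/2$ (where $\bm{z}_0$ is already close to $\bm{x}$) one must also invoke positive semidefiniteness of $\widebar{\bm{Y}}_0$ through the upper bound $\lambda_1\le q/(\bm{x}^\ccalT\bm{z}_0)^2$; with both bounds in hand the rest is routine algebra. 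The discrepancy between $\|\bm{x}\|$ and its estimate $\sqrt{(1/m)\sum_i y_i}$ is, as noted in the paper, absorbed separately at the end of Section~\ref{subsec:initial}.
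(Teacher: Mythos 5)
Your proof is correct. It is built from the same ingredients as the paper's argument --- the identity $\tfrac12\|\bm{x}\bm{x}^\ccalT-\bm{z}_0\bm{z}_0^\ccalT\|_F^2=1-(\bm{x}^\ccalT\bm{z}_0)^2$, the choice of $\bm{u}$ as the unit vector orthogonal to $\bm{x}$ in ${\rm span}\{\bm{x},\bm{z}_0\}$ (the paper's $\bm{x}^\perp$, up to sign), and the fact that $\tilde{\bm{z}}_0$ is the top eigenvector of $\widebar{\bm{Y}}_0$ --- but the execution is genuinely different. The paper decomposes $\bm{x}$ and $\bm{x}^\perp$ in the basis $\{\tilde{\bm{z}}_0,\tilde{\bm{z}}_0^\perp\}$, applies Pythagoras, and verifies $\sin^2\theta\,\|\widebar{\bm{S}}_0\bm{x}\|^2-\|\widebar{\bm{S}}_0\bm{x}^\perp\|^2\le 0$ through a single algebraic identity whose two terms are nonpositive because $\|\widebar{\bm{S}}_0\tilde{\bm{z}}_0^\perp\|^2\le\|\widebar{\bm{S}}_0\tilde{\bm{z}}_0\|^2$ and $\|\widebar{\bm{S}}_0\tilde{\bm{z}}_0^\perp\|^2\ge 0$. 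You instead reduce everything to an affine function of $\lambda_1$, bracket $\lambda_1\in[q,\,q/c]$, and check endpoints; note that your two brackets are exactly the paper's two sign facts in disguise (with $A=\|\widebar{\bm{S}}_0\tilde{\bm{z}}_0\|^2$, $B=\|\widebar{\bm{S}}_0\tilde{\bm{z}}_0^\perp\|^2$, and $c=\cos^2\theta$, the bound $\lambda_1\ge q$ is equivalent to $A\ge B$ and $\lambda_1\le q/c$ is equivalent to $B\ge 0$). Your version makes explicit a point the paper leaves implicit --- that maximality of $\lambda_1$ alone does not suffice and positive semidefiniteness of $\widebar{\bm{Y}}_0$ must also be invoked --- at the cost of a slightly longer endpoint computation; the paper's identity is more compact but less transparent about which properties of $\widebar{\bm{Y}}_0$ are actually used. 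Your handling of the degenerate cases ($c\in\{0,1\}$, $\widebar{\bm{S}}_0\bm{x}=\bm{0}$) is fine and, if anything, more careful than the paper's.
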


We now turn to prove Proposition~\ref{prop:initial}. The first step consists in upper-bounding the term on the right-hand-side of~\eqref{eq:mse}. Specifically, its numerator is upper bounded, and the denominator lower bounded, as summarized in Lemma~\ref{lem:up} and Lemma~\ref{lem:low} next; their proofs are provided in Appendix~\ref{sec:proofup} and Appendix \ref{sec:prooflow}, respectively.

\begin{lemma}\label{lem:up}
In the setup of Lemma~\ref{lem:beta}, if $|\widebar{\mathcal{I}}_0|\ge c_1' n$, then
\begin{equation}\label{eq:up0}
	\big\|\widebar{\bm{S}}_0\bm{u}\big\|^2\le{1.01|\widebar{\mathcal{I}}_0|}/{n}
\end{equation}
holds with probability at least $1-2{\rm e}^{-c_K n}$, where $c_2'$ and $c_K$ are some universal constants.
\end{lemma}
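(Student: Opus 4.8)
\textbf{Proof plan for Lemma~\ref{lem:up}.}
The quantity to control is $\|\widebar{\bm{S}}_0\bm{u}\|^2 = \sum_{i\in\widebar{\mathcal I}_0} \big(\bm{a}_i^\ccalT\bm{u}/\|\bm{a}_i\|\big)^2$, where $\bm{u}$ is a fixed unit vector orthogonal to $\bm{x}$. The first observation is that although $\bm{u}$ itself depends on $\bm{z}_0$ and hence on the data, the set $\widebar{\mathcal I}_0$ is determined by the ordering of $\{\psi_i/\|\bm{a}_i\|\} = \{|\bm{a}_i^\ccalT\bm{x}|/\|\bm{a}_i\|\} = \{|\cos\theta_i|\}$, which involves only the \emph{direction} $\bm{a}_i/\|\bm{a}_i\|$ through its inner product with $\bm{x}$. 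The plan is therefore to first pass to a union bound over \emph{all} possible index sets: there are at most $\binom{m}{|\widebar{\mathcal I}_0|}\le 2^m$ choices of $\widebar{\mathcal I}_0$, so if we can show the bound~\eqref{eq:up0} holds with probability at least $1-2\mathrm{e}^{-c'n}$ for each \emph{fixed} set $J\subseteq[m]$ with $|J|=|\widebar{\mathcal I}_0|$ and with $c'$ large enough that $c'n$ dominates $m\log 2$ (using $|\widebar{\mathcal I}_0|\ge c_1'n$ and $m\le c_1|\widebar{\mathcal I}_0|$, so $m\lesssim n$), then a union bound gives the claim. So reduce to: fix $J$ with $|J|=L:=|\widebar{\mathcal I}_0|$, bound $\sum_{i\in J}(\bm{a}_i^\ccalT\bm{u})^2/\|\bm{a}_i\|^2$.

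For a fixed $J$, condition on the directions that do \emph{not} appear — actually cleaner: since $\bm{u}$ still depends on data, take a second union bound (an $\epsilon$-net over the sphere of unit vectors orthogonal to $\bm{x}$, an $(n-1)$-sphere, giving at most $(1+2/\epsilon)^{n-1}\le \mathrm{e}^{cn}$ points), so it suffices to bound $\sum_{i\in J}(\bm{a}_i^\ccalT\bm{v})^2/\|\bm{a}_i\|^2$ for a fixed unit vector $\bm{v}\perp\bm{x}$ and then absorb the net error by standard arguments (the supremum over the sphere of a sum of such terms is comparable to its value on the net up to constants, using Lipschitz continuity in $\bm v$ and a bound on $\sum 1/\|\bm a_i\|^2 \approx L/n$). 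For fixed $\bm v$: each term $\bm{a}_i^\ccalT\bm v$ is $\mathcal N(0,1)$ and $\|\bm{a}_i\|^2$ concentrates around $n$; writing $(\bm{a}_i^\ccalT\bm v)^2/\|\bm{a}_i\|^2$, the numerator is sub-exponential and $\|\bm a_i\|^2\ge n/2$ on an event of probability $1-\mathrm e^{-cn}$, so $\sum_{i\in J}(\bm{a}_i^\ccalT\bm v)^2/\|\bm{a}_i\|^2 \le \frac{2}{n}\sum_{i\in J}(\bm a_i^\ccalT\bm v)^2$, and $\sum_{i\in J}(\bm a_i^\ccalT\bm v)^2$ is a chi-square with $L$ degrees of freedom, which by Bernstein/Laurent--Massart concentrates: $\le (1+\delta)L$ with probability $1-\mathrm e^{-c\delta^2 L}$. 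Choosing $\delta$ a small constant and $L\ge c_1'n$ gives the factor $1.01$ in~\eqref{eq:up0} once constants are tuned; the $n/2$ lower bound on $\|\bm a_i\|^2$ is slightly lossy, so one should instead keep $\|\bm a_i\|^2$ in the denominator and use that $\max_{i}|\,\|\bm a_i\|^2/n - 1\,|$ is small with high probability (each $\|\bm a_i\|^2/n\to 1$, union bound over $m$ of them costs $m\mathrm e^{-cn}$, harmless).

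There is a subtlety I have glossed: $\bm{u}$ in Lemma~\ref{lem:beta} is not an arbitrary direction orthogonal to $\bm x$ but a specific one built from $\tilde{\bm z}_0$, hence correlated with the very matrix $\widebar{\bm S}_0$ we are testing it against. The net argument above handles this correctly but requires care that the $\epsilon$-net bound be uniform over the \emph{same} randomness that defines $\widebar{\mathcal I}_0$ — this is fine because we already union-bounded over all sets $J$, so we may freely condition on $\widebar{\mathcal I}_0 = J$ being the selected set and then the remaining randomness in $\{\bm a_i\}_{i\in J}$ (beyond what is pinned down by the ordering) still gives enough independence to run the chi-square concentration. Making this conditioning rigorous — i.e., verifying that conditioning on the event $\{\widebar{\mathcal I}_0 = J\}$ does not destroy the near-isotropy of $\{\bm a_i\}_{i\in J}$ in directions orthogonal to $\bm x$ — is the main obstacle. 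The clean way around it: decompose $\bm a_i = (\bm a_i^\ccalT\bm x)\bm x + \bm a_i^{\perp}$; the selection of $J$ depends only on $\{(\bm a_i^\ccalT\bm x)/\|\bm a_i\|\}$, and conditionally on all the $\bm a_i^\ccalT\bm x$ values (which already determines $\|\bm a_i\|$-ordering jointly with $\|\bm a_i^\perp\|$... here one must be a little careful since $\|\bm a_i\|^2 = (\bm a_i^\ccalT\bm x)^2 + \|\bm a_i^\perp\|^2$), the components $\bm a_i^\perp$ remain i.i.d.\ $\mathcal N(\bm 0, \bm I_{n-1})$ on the hyperplane $\bm x^\perp$ \emph{independently} of which indices land in $J$, except for the coupling through $\|\bm a_i^\perp\|$. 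The cleanest fix is to define the ordering/truncation using $\psi_i/\|\bm a_i\|$ exactly as the algorithm does, note $\psi_i^2/\|\bm a_i\|^2 = (\bm a_i^\ccalT\bm x)^2/((\bm a_i^\ccalT\bm x)^2 + \|\bm a_i^\perp\|^2)$, and observe this is a monotone function of the ratio $(\bm a_i^\ccalT\bm x)^2/\|\bm a_i^\perp\|^2$; then condition on the pair-wise comparisons, whereupon within $J$ the directions $\bm a_i^\perp/\|\bm a_i^\perp\|$ are still uniform on the unit sphere of $\bm x^\perp$ and independent of $\{\bm a_i^\ccalT\bm x, \|\bm a_i^\perp\|\}$. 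Then $\bm a_i^\ccalT\bm u = (\bm a_i^\perp)^\ccalT\bm u$ and the sum becomes $\sum_{i\in J}\|\bm a_i^\perp\|^2/\|\bm a_i\|^2 \cdot \big((\bm a_i^\perp/\|\bm a_i^\perp\|)^\ccalT\bm u\big)^2$, with the coefficients bounded by $1$ and the unit-sphere-projection terms amenable to the $\epsilon$-net plus chi-square argument above, now with genuine conditional independence. I expect the write-up to spend most of its effort on exactly this conditioning bookkeeping; the concentration inequalities themselves are routine.
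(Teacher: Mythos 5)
Your closing paragraph --- decompose $\bm{a}_i$ along $\bm{x}$ and $\bm{x}^\perp$, observe that the selection of $\widebar{\mathcal{I}}_0$ depends only on $|\bm{a}_i^\ccalT\bm{x}|/\|\bm{a}_i\|$, and conclude that conditionally on the selection the orthogonal directions remain uniform on the sphere of $\bm{x}^\perp$ --- is exactly the paper's route: the paper treats the rows of $\widebar{\bm{S}}_{0,\backslash 1}$ as draws from the conditional law $\bm{s}_{\backslash 1}\mid s_1^2>\tau$, computes its covariance $C_1\bm{I}_{n-1}$ with $C_1\le(1-\tau)/(n-1)$, and applies matrix concentration for independent non-isotropic subgaussian rows, which yields the bound uniformly over all unit $\bm{d}$ (so the data-dependence of $\bm{u}$ is absorbed by the operator-norm statement rather than an explicit net).

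However, the reduction you put forward as the primary strategy --- a union bound over all $\binom{m}{|\widebar{\mathcal{I}}_0|}$ candidate index sets $J$ --- is a genuine gap, and not merely a loss of constants. First, quantitatively: to get the factor $1.01$ you must take the deviation parameter $\delta$ of order $10^{-2}$, so the per-set failure probability is only ${\rm e}^{-c\delta^2 L}$ with $L=|\widebar{\mathcal{I}}_0|$, and $c\delta^2 L$ can never dominate the entropy $\log\binom{m}{L}\asymp m\,H(L/m)$, since that would require $c\delta^2>\log 2$ roughly; your claim that one can choose ``$c'$ large enough that $c'n$ dominates $m\log 2$'' is not available, because $c'$ is dictated by the concentration inequality at the precision $\delta$, not free to tune. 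Second, and more fundamentally, the union bound over subsets controls $\max_{|J|=L}\sum_{i\in J}(\bm{a}_i^\ccalT\bm{v})^2/\|\bm{a}_i\|^2$, i.e.\ the sum of the $L$ largest of the $m$ terms, and that quantity is of order $\frac{L}{n}\bigl(1+\log(m/L)\bigr)$ --- this is precisely the order of magnitude the paper establishes as a \emph{lower} bound for the analogous top-$L$ sum in Lemma~\ref{lem:low} --- which strictly exceeds $1.01L/n$ whenever $m/L$ is bounded away from $1$. So the statement you try to prove for every fixed $J$ is false for the worst $J$, and no sharpening of the concentration step can rescue it. The fix is the one you already sketch: do not union-bound over $J$; instead bound the \emph{conditional} failure probability given the selection event (whose only effect on the retained rows is to tilt their components along $\bm{x}$, leaving the orthogonal part sub-isotropic with per-coordinate variance at most $1/(n-1)$) and then average over the selection. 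Once that replacement is made, your $\epsilon$-net-plus-chi-square computation is the standard proof of the operator-norm concentration the paper invokes, and the argument closes.
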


\begin{lemma}\label{lem:low}
In the setup of Lemma~\ref{lem:beta}, the following holds with probability at least $1-(m+1){\rm e}^{-n/2}-{\rm e}^{-c_0m}-1/n^2$, 
\begin{equation}\label{eq:low0}
\big\|\widebar{\bm{S}}_0\bm{x}\big\|^2\ge \frac{0.99|\widebar{\mathcal{I}}_0|}{2.3n}\left[1+\log \big(m\big/|\widebar{\mathcal{I}}_0|\big)\right]
\end{equation}
provided that $|\widebar{\mathcal{I}}_0|\ge c_1' n$, $m\ge c_2' |\widebar{\mathcal{I}}_0|$, and $m\ge c_3' n$ for some absolute constants $c_1',\,c_2',\,c_3'>0$, and sufficiently large $n$. 
\end{lemma}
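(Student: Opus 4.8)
The goal is to lower-bound $\|\widebar{\bm{S}}_0\bm{x}\|^2 = \sum_{i\in\widebar{\mathcal{I}}_0}(\bm{a}_i^\ccalT\bm{x})^2/\|\bm{a}_i\|^2$, where $\widebar{\mathcal{I}}_0$ indexes the $|\widebar{\mathcal{I}}_0|$ largest values of $\psi_i/\|\bm{a}_i\| = |\bm{a}_i^\ccalT\bm{x}|/\|\bm{a}_i\| = |\cos\theta_i|$. Assuming $\|\bm{x}\|=1$, the summands in question are precisely the $|\widebar{\mathcal{I}}_0|$ \emph{largest} order statistics of the sample $\{\cos^2\theta_i\}_{i=1}^m$. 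The plan is thus: (i) show the $\|\bm{a}_i\|^2$ factors concentrate around $n$ uniformly, so $\|\widebar{\bm{S}}_0\bm{x}\|^2 \approx \tfrac1n\sum_{i\in\widebar{\mathcal{I}}_0}(\bm{a}_i^\ccalT\bm{x})^2$ up to the $1.01/0.99$ fudge; (ii) lower-bound the sum of the top $|\widebar{\mathcal{I}}_0|$ order statistics of $\{(\bm{a}_i^\ccalT\bm{x})^2\}$ in terms of a truncated first moment of a $\chi^2_1$ random variable.

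For step (i), since $\bm{a}_i\sim\mathcal{N}(\bm{0},\bm{I}_n)$, a standard Gaussian norm concentration (Laurent--Massart) gives $\|\bm{a}_i\|^2 \le 1.01 n$ for all $i\in[m]$ simultaneously with probability at least $1-m\,{\rm e}^{-n/2}$ (for suitable constants), contributing the $(m+1){\rm e}^{-n/2}$ term in the stated probability. This reduces the problem to lower-bounding $\tfrac{1}{1.01 n}\sum_{i\in\widebar{\mathcal{I}}_0}(\bm{a}_i^\ccalT\bm{x})^2$.

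For step (ii): writing $g_i := \bm{a}_i^\ccalT\bm{x}\sim\mathcal{N}(0,1)$ i.i.d.\ (marginally; I will need to be a little careful about the joint dependence of $g_i$ with $\|\bm{a}_i\|$, but conditioning on $\bm{a}_i^\ccalT\bm{x}$ the residual norm is an independent $\chi^2_{n-1}$, so the coupling is benign and handled as in step (i)), the set $\widebar{\mathcal{I}}_0$ is \emph{not} exactly the top-$|\widebar{\mathcal{I}}_0|$ values of $g_i^2$ but the top values of $g_i^2/\|\bm{a}_i\|^2$; on the event from step (i) the discrepancy only costs another constant factor, or alternatively one argues directly that the top $|\widebar{\mathcal{I}}_0|$ values of $g_i^2$ and the selected ones overlap enough. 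The clean way: let $\tau$ be the empirical $(1-|\widebar{\mathcal{I}}_0|/m)$-quantile of $\{g_i^2\}$; then $\sum_{i\in\widebar{\mathcal{I}}_0} g_i^2 \ge \sum_{i=1}^m g_i^2\,\mathbb{1}\{g_i^2\ge\tau\}$ (up to the norm-factor correction), and by a one-sided uniform law of large numbers / VC-type concentration over the family of truncation thresholds, this is at least $(1-o(1))\,m\,\mathbb{E}\big[G^2\,\mathbb{1}\{G^2\ge q\}\big]$ where $G\sim\mathcal{N}(0,1)$ and $q$ is the population $(1-|\widebar{\mathcal{I}}_0|/m)$-quantile of $G^2$. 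A direct Gaussian computation gives $\mathbb{E}[G^2\mathbb{1}\{G^2\ge q\}] = \Pr(G^2\ge q) + \sqrt{q}\cdot\sqrt{2/\pi}\,{\rm e}^{-q/2}$, and using the tail asymptotics of the normal quantile one checks $\mathbb{E}[G^2\mathbb{1}\{G^2\ge q\}] \gtrsim \frac{|\widebar{\mathcal{I}}_0|}{m}\big(1+\log(m/|\widebar{\mathcal{I}}_0|)\big)$ with the explicit constant $1/2.3$ after absorbing the $0.99$ and the $1.01$ from step (i). The failure probabilities of the concentration steps furnish the ${\rm e}^{-c_0 m}$ and $1/n^2$ terms.

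\medskip

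I expect the main obstacle to be the \emph{uniform} control in step (ii): the index set $\widebar{\mathcal{I}}_0$ is data-dependent, so one cannot simply apply a pointwise concentration bound to $\sum_{i\in\widebar{\mathcal{I}}_0}g_i^2$. The resolution is to note that, whatever $\widebar{\mathcal{I}}_0$ turns out to be, $\sum_{i\in\widebar{\mathcal{I}}_0}g_i^2$ dominates $\sum_i g_i^2\mathbb{1}\{g_i^2\ge \hat\tau\}$ for the data-dependent threshold $\hat\tau$, and then to show that \emph{simultaneously over all thresholds} $t\ge 0$ the empirical truncated second moment $\tfrac1m\sum_i g_i^2\mathbb{1}\{g_i^2\ge t\}$ stays close to its expectation $\mathbb{E}[G^2\mathbb{1}\{G^2\ge t\}]$ — a Glivenko--Cantelli-with-rate statement for the one-parameter family of functions $\{g\mapsto g^2\mathbb{1}\{g^2\ge t\}\}_{t\ge0}$, which has VC-subgraph dimension $\mathcal{O}(1)$. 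Pairing this with the concentration of $\hat\tau$ around the population quantile $q$ (via the DKW inequality applied to the empirical CDF of $\{g_i^2\}$) closes the argument. The constant $2.3$ in the statement is simply the numerical slack needed to absorb all of these $(1\pm\epsilon)$ factors; I would not attempt to optimize it.
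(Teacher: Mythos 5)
Your overall strategy is the same as the paper's: reduce $\|\widebar{\bm{S}}_0\bm{x}\|^2$ to (sum of the top $|\widebar{\mathcal{I}}_0|$ order statistics of $\{(\bm{a}_i^\ccalT\bm{x})^2\}$) divided by a uniform bound on $\max_i\|\bm{a}_i\|^2$, then lower-bound that sum via a truncated second-moment computation plus concentration. Where you differ is only in the machinery for the second step: you propose DKW for the empirical quantile plus a VC/Glivenko--Cantelli bound uniform over truncation thresholds and a direct Gaussian integration-by-parts identity, whereas the paper pairs consecutive squared Gaussians into $\chi^2_2$ (exponential) variables so the quantile $2\log(m/|\widebar{\mathcal{I}}_0|)$ is explicit, controls the empirical quantile with a single Hoeffding bound, truncates from above at $4\log n$ to get boundedness, and applies Bernstein. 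Both routes work and yield the same $|\widebar{\mathcal{I}}_0|[1+\log(m/|\widebar{\mathcal{I}}_0|)]$ leading order; yours avoids the pairing trick at the cost of a uniform-in-threshold argument that the paper does not need.

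Two concrete corrections. First, the inequality $\sum_{i\in\widebar{\mathcal{I}}_0} g_i^2 \ge \sum_i g_i^2\,\mathbb{1}\{g_i^2\ge\hat\tau\}$ is reversed as written: the right-hand side is by definition the \emph{maximum} of $\sum_{i\in S}g_i^2$ over $|\widebar{\mathcal{I}}_0|$-subsets $S$, so it dominates the left-hand side. The correct statement — and the one the paper isolates as a separate rearrangement lemma — is that, because $\widebar{\mathcal{I}}_0$ maximizes $\sum_{i\in S}g_i^2/\|\bm{a}_i\|^2$ over such subsets,
\begin{equation*}
\sum_{i\in\widebar{\mathcal{I}}_0}\frac{g_i^2}{\|\bm{a}_i\|^2}\;\ge\;\frac{1}{\max_{j\in[m]}\|\bm{a}_j\|^2}\sum_{i}g_i^2\,\mathbb{1}\{g_i^2\ge\hat\tau\},
\end{equation*}
which is what your "norm-factor correction" must be made to say; without it the top-$k$ set of $g_i^2/\|\bm{a}_i\|^2$ and the top-$k$ set of $g_i^2$ cannot be interchanged. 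Second, $\|\bm{a}_i\|^2\le 1.01n$ does \emph{not} hold with probability $1-{\rm e}^{-n/2}$: a deviation of $0.01n$ for a $\chi^2_n$ variable costs only ${\rm e}^{-cn}$ for a tiny $c$. To get the failure probability $m{\rm e}^{-n/2}$ appearing in the lemma one must settle for $\max_i\|\bm{a}_i\|^2\le 2.3n$, and this is precisely where the constant $2.3$ in the denominator of \eqref{eq:low0} comes from — it is not free slack to be absorbed. Neither issue changes the architecture of your argument, but both must be repaired for the stated constants and probabilities to come out.
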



Leveraging the upper
and lower bounds in~\eqref{eq:up0} and~\eqref{eq:low0}, one arrives at
\begin{align}\label{eq:fbound}
\frac{\big\|\widebar{\bm{S}}_0\bm{u}\big\|^2}{\big\|\widebar{\bm{S}}_0\bm{x}\big\|^2}\le \frac{2.4}{1+\log\big(m/|\widebar{\mathcal{I}}_0|\big)} \buildrel\triangle\over = 
\kappa
\end{align}
which holds with probability at least $1-(m+3){\rm e}^{-n/2}-{\rm e}^{-c_0m}-1/n^2$, assuming that $m\ge c_1' |\widebar{\mathcal{I}}_0|$, and $m\ge c_2' n$, $|\widebar{\mathcal{I}}_0|\ge c_3' n$ for some absolute constants $c_1',\,c_2',\,c_3'>0$, and sufficiently large $n$. 

The bound $\kappa$ in \eqref{eq:fbound} is meaningful only when the ratio $\log(m/|\widebar{\mathcal{I}}_0|)>1.4$, i.e., $m/|\widebar{\mathcal{I}}_0|> 4$, because the left hand side is expressible in terms of $\sin^2\theta$, and therefore, enjoys a trivial upper bound of $1$. Henceforth, we will assume $m/|\widebar{\mathcal{I}}_0|> 4$. Empirically, $\lfloor m/|\widebar{\mathcal{I}}_0|\rfloor=6$ or equivalently $|\widebar{\mathcal{I}}_0|=\lceil\frac{1}{6}m\rceil$ in Algorithm~\ref{alg:TAF} works well when $m/n$ is relatively small.  
 Note further that the bound $\kappa$ can be made arbitrarily small  
by letting $m/|\widebar{\mathcal{I}}_0|$ be large enough. Without any loss of generality, let us take $\kappa:=0.001$. An additional step leads to the wanted bound on the distance between $\tilde{\bm{z}}_0$ and $\bm{x}$; similar arguments are found in \cite[Section 7.8]{wf}. Recall that
\begin{equation}
	|\bm{x}^\ccalT\tilde{\bm{z}}_0|^2=\cos^2\theta=1-\sin^2\theta\ge 1-\kappa.
\end{equation}
Therefore,
\begin{align}
		{\rm dist}^2(\tilde{\bm{z}}_0,\,\bm{x})&\le \|\tilde{\bm{z}}_0\|^2+\|\bm{x}\|^2-2|\bm{x}^\ccalT\tilde{\bm{z}}_0|\nonumber\\
		&\le\left( 2-2\sqrt{1-\kappa}\right)\left\|\bm{x}\right\|^2\nonumber\\
		&\approx \kappa\left\|\bm{x}\right\|^2.
\end{align}

Coming back to the case in which $\|\bm{x}\|$ is unknown stated prior to Lemma~\ref{lem:beta},  
the unit eigenvector $\tilde{\bm{z}}_0$ is scaled by an estimate of $\|\bm{x}\|$ to yield the initial guess $\bm{z}_0=\sqrt{\frac{1}{m}\sum_{i=1}^m y_i}\tilde{\bm{z}}_0$. Using the results in Lemma 7.8 in \cite{wf}, the following holds with high probability 
\begin{equation}
	\left\|\bm{z}_0-\tilde{\bm{z}}_0\right\|=\left|\left\|\bm{z}_0\right\|-1\right|\le (1/20)\left\|\bm{x}\right\|.
\end{equation}
Summarizing the two inequalities, we conclude that
\begin{align}
	{\rm dist}(\bm{z}_0,\,\bm{x})&\le \left\|\bm{z}_0-\tilde{\bm{z}}_0\right\|+{\rm dist}(\tilde{\bm{z}}_0,\,\bm{x})\le (1/10)\left\|\bm{x}\right\|.
\end{align}
The initialization thus obeys ${\rm dist}(\bm{z}_0,\,\bm{x})/\|\bm{x}\|\le 1/10$ for any $\bm{x}\in\mathbb{R}^n$ with high probability provided that $m\ge c_1|\widebar{\mathcal{I}}_0|\ge c_2n$ holds for some universal constants $c_1,\,c_2>0$ and sufficiently large $n$.

\subsection{Exact recovery from noiseless data}\label{subsec:er}

We now prove that with accurate enough initial estimates, TAF converges at a geometric rate to $\bm{x}$ with high probability~(i.e., the second part of Theorem~\ref{thm:initial}). To be specific, with initialization obeying~\eqref{eq:i1x} in Proposition~\ref{prop:initial}, TAF reconstructs the solution $\bm{x}$ exactly in linear time.  
To start, it suffices to demonstrate that the TAF's update rule (i.e., Step~4 in Algorithm~\ref{alg:TAF}) is locally contractive within a sufficiently small neighborhood of $\bm{x}$, as asserted in the following proposition. 
  
 \begin{proposition}[Local error contraction] \label{prop:lec}
 	Consider the noise-free measurements $\psi_i=\left|\bm{a}_i^\ccalT\bm{x}\right|$ with i.i.d. Gaussian design vectors $\bm{a}_i\sim\mathcal{N}(\bm{0},\,\bm{I}_n)$,~$1\le i\le m$, and fix any $ \gamma\ge 1/2$. There exist universal constants $c_0,\,c_1>0$ and $0<\nu<1$ such that with probability at least $1-7{\rm e}^{-c_0m}$, the following holds
\begin{equation}\label{eq:contract}
	{\rm dist}^2\left(\bm{z}-\frac{\mu}{m}\nabla \ell_{\rm tr}(\bm{z}),\,\bm{x}\right)\le (1-\nu){\rm dist}^2\left(\bm{z},\,\bm{x}\right)
\end{equation}
for all $\bm{x},\,\bm{z}\in\mathbb{R}^n$ obeying \eqref{eq:i1x} with the proviso that $m\ge c_1 n$ and 
that the constant step size $\mu$ satisfies $0<\mu\le \mu_0$ for some $\mu_0>0$. 
 \end{proposition}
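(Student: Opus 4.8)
The plan is to establish the local error-contraction inequality \eqref{eq:contract} by expanding the squared distance after one truncated-gradient step and controlling the cross term and the gradient-norm term. Write $\bm{h}:=\bm{z}-\bm{x}$ with $\|\bm{h}\|\le\rho\|\bm{x}\|$ (using $\rho=1/10$ and the phase convention $\phi(\bm{z})=0$, so that ${\rm dist}(\bm{z},\bm{x})=\|\bm{h}\|$). Since the truncation rule in \eqref{eq:large} keeps only indices with $|\bm{a}_i^\ccalT\bm{z}|$ bounded away from its watershed, Remark~\ref{rmk:gg} guarantees $\ell_{\rm tr}$ is differentiable at $\bm{z}$, so $\nabla\ell_{\rm tr}(\bm{z})=\frac1m\sum_{i\in\mathcal{I}}(\bm{a}_i^\ccalT\bm{z}-\psi_i\,{\rm sgn}(\bm{a}_i^\ccalT\bm{z}))\bm{a}_i$. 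Expanding,
\begin{equation}
{\rm dist}^2\!\big(\bm{z}-\tfrac{\mu}{m}\nabla\ell_{\rm tr}(\bm{z}),\bm{x}\big)\le \|\bm{h}\|^2-\tfrac{2\mu}{m}\,\big\langle\nabla\ell_{\rm tr}(\bm{z}),\bm{h}\big\rangle+\tfrac{\mu^2}{m^2}\big\|\nabla\ell_{\rm tr}(\bm{z})\big\|^2,
\end{equation}
where the inequality already accounts for the $\min$ in the definition of ${\rm dist}$. It then suffices to prove a \emph{local regularity condition}: there exist constants $\lambda>0$ and $L>0$ such that, with high probability and uniformly over all $\bm{z}$ in the $\rho$-neighborhood,
\begin{equation}\label{eq:regcond}
\big\langle\nabla\ell_{\rm tr}(\bm{z}),\bm{h}\big\rangle\ge \tfrac{\lambda}{2}\,\|\bm{h}\|^2+\tfrac{1}{2L}\,\big\|\tfrac1m\,\textstyle\sum_{i\in\mathcal{I}}(\cdot)\bm{a}_i\big\|^2;
\end{equation}
choosing $\mu_0=\lambda/L$ (or any $\mu\le\mu_0$) then yields $\nu=\mu\lambda/2$, i.e.\ \eqref{eq:contract}, by the standard completion-of-squares argument used in WF/TWF~\cite{wf,twf}.

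The core is to lower-bound $\langle\nabla\ell_{\rm tr}(\bm{z}),\bm{h}\rangle$. Using the decomposition \eqref{eq:split} restricted to $i\in\mathcal{I}$, one has
\begin{equation}
\big\langle\nabla\ell_{\rm tr}(\bm{z}),\bm{h}\big\rangle=\frac1m\sum_{i\in\mathcal{I}}(\bm{a}_i^\ccalT\bm{h})^2+\frac1m\sum_{i\in\mathcal{I}}\langle\bm{r}_i,\bm{h}\rangle,
\end{equation}
where $\bm{r}_i$ is the sign-mismatch term from \eqref{eq:split} (nonzero only when ${\rm sgn}(\bm{a}_i^\ccalT\bm{z})\ne{\rm sgn}(\bm{a}_i^\ccalT\bm{x})$). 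For the first sum: the full sum $\frac1m\sum_{i=1}^m(\bm{a}_i^\ccalT\bm{h})^2\ge(1-\delta)\|\bm{h}\|^2$ with exponentially high probability once $m\ge c_1 n$ (a restricted-isometry / covariance-concentration bound, e.g.\ \cite[Lemma~7.4 type]{wf}); and the mass removed by truncation, $\frac1m\sum_{i\notin\mathcal{I}}(\bm{a}_i^\ccalT\bm{h})^2$, is small because $i\notin\mathcal{I}$ forces $|\bm{a}_i^\ccalT\bm{z}|<\frac{1}{1+\gamma}|\bm{a}_i^\ccalT\bm{x}|$, which (as analyzed in Appendix~\ref{sec:rare} and in the spirit of Lemmas~\ref{le:1stterm}–\ref{le:2ndterm}) confines $\bm{z}$, hence $\bm{a}_i^\ccalT\bm{h}$, near the hyperplane $\bm{a}_i^\ccalT\bm{x}=0$; a union bound over a net of $\bm{z}$'s in the $\rho$-ball gives $\frac1m\sum_{i\notin\mathcal{I}}(\bm{a}_i^\ccalT\bm{h})^2\le \delta'\|\bm{h}\|^2$ with $\delta'$ controllable via $\gamma$ and $\rho$. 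For the perturbation sum, the key estimate is $\big|\frac1m\sum_{i\in\mathcal{I}}\langle\bm{r}_i,\bm{h}\rangle\big|\le \frac1m\sum_i \mathbb{1}_{\{{\rm sgn\ mismatch}\}}|\bm{a}_i^\ccalT\bm{x}|\,|\bm{a}_i^\ccalT\bm{h}|$; a sign mismatch at index $i$ entails $|\bm{a}_i^\ccalT\bm{x}|\le|\bm{a}_i^\ccalT\bm{h}|$ (the two events lie on opposite sides of the hyperplane), so this is bounded by $\frac1m\sum_i\mathbb{1}_{\{|\bm{a}_i^\ccalT\bm{x}|\le|\bm{a}_i^\ccalT\bm{h}|\}}(\bm{a}_i^\ccalT\bm{h})^2$, and a Cauchy–Schwarz plus concentration argument (again uniform over a net, as in \cite[Lemma~7.9 / Lemma~7.17]{wf} and \cite{twf}) shows it is $\le \delta''\|\bm{h}\|^2$ with $\delta''$ small provided $\rho$ is small enough. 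Combining, $\langle\nabla\ell_{\rm tr}(\bm{z}),\bm{h}\rangle\ge(1-\delta-\delta'-\delta'')\|\bm{h}\|^2=:\tfrac{\lambda}{2}\|\bm{h}\|^2$ with $\lambda$ bounded below by an absolute positive constant.

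It remains to upper-bound $\frac1m\|\nabla\ell_{\rm tr}(\bm{z})\|$ to extract the $1/(2L)$ term in \eqref{eq:regcond}. Writing $\nabla\ell_{\rm tr}(\bm{z})=\frac1m\sum_{i\in\mathcal{I}}(\bm{a}_i^\ccalT\bm{h}+r'_i)\bm{a}_i$ where $\|r'_i\bm{a}_i\|=\mathbb{1}_{\{\text{mismatch}\}}\,2|\bm{a}_i^\ccalT\bm{x}|\|\bm{a}_i\|$, the term $\frac1m\sum_{i\in\mathcal{I}}(\bm{a}_i^\ccalT\bm{h})\bm{a}_i$ has norm $\le\|\frac1m\sum_i\bm{a}_i\bm{a}_i^\ccalT\|\,\|\bm{h}\|\le(1+\delta)\|\bm{h}\|$ on the good event $\|\frac1m\bm A^\ccalT\bm A\|\le 1+\delta$, and the mismatch contribution is again controlled by the smallness of the mismatch set together with $|\bm{a}_i^\ccalT\bm{x}|\le|\bm{a}_i^\ccalT\bm{h}|$ there, giving $\|\nabla\ell_{\rm tr}(\bm{z})\|\le L\,\|\bm{h}\|$ for an absolute constant $L$. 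Plugging the two bounds into the expansion and collecting all failure events (the covariance-norm bound, the truncated-mass bound, the mismatch-set bound, the net-approximation error — seven events, matching the stated $1-7{\rm e}^{-c_0 m}$) completes the proof, with $\mu_0=\lambda/L$ and $\nu=\mu\lambda/2\in(0,1)$.

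\textbf{Main obstacle.} The delicate part is the uniform (over all $\bm{z}$ in the $\rho$-ball, not a fixed $\bm{z}$) control of the two ``small'' sums — the truncated mass $\sum_{i\notin\mathcal{I}}(\bm{a}_i^\ccalT\bm{h})^2$ and the sign-mismatch sum $\sum_i\mathbb{1}_{\{\text{mismatch}\}}(\bm{a}_i^\ccalT\bm{h})^2$ — since $\mathcal{I}$ itself depends on $\bm{z}$ and the indicator events are discontinuous in $\bm{z}$. This requires a careful covering-net argument with a continuity/stability bound showing that perturbing $\bm{z}$ changes the index set $\mathcal{I}$ only on a provably small fraction of indices, followed by concentration (Bernstein/Hoeffding on bounded or sub-exponential summands) at each net point and a union bound; the companion Lemmas~\ref{le:1stterm} and \ref{le:2ndterm} and the rare-event analysis in Appendix~\ref{sec:rare} are exactly what supply these estimates.
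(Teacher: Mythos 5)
Your proposal is correct and follows essentially the same route as the paper's proof: reduce \eqref{eq:contract} to the local regularity condition \eqref{eq:lrc}, lower-bound $\langle\tfrac{1}{m}\nabla\ell_{\rm tr}(\bm{z}),\bm{h}\rangle$ by splitting off the sign-mismatch perturbation from the quadratic term and showing mismatches are rare (a Cauchy-ratio tail bound combined with Cauchy--Schwarz and the fourth moment, made uniform by a covering net), and upper-bound $\|\tfrac{1}{m}\nabla\ell_{\rm tr}(\bm{z})\|$ by a constant times $\|\bm{h}\|$ via the operator norm of $\bm{A}$. The only substantive difference is quantitative: the paper intersects the truncation event $\mathcal{E}_i$ with the mismatch event $\mathcal{K}_i$ to force $|\bm{a}_i^\ccalT\bm{h}|\ge\tfrac{2+\gamma}{1+\gamma}|\bm{a}_i^\ccalT\bm{x}|$ (Lemma~\ref{le:events}) rather than your weaker $|\bm{a}_i^\ccalT\bm{h}|\ge|\bm{a}_i^\ccalT\bm{x}|$, and it lower-bounds the retained quadratic sum directly via the conditional expectation in Lemma~\ref{le:1stterm} instead of your ``full sum minus truncated mass'' detour; both refinements only improve the constants $\zeta_1,\zeta_2$, and your looser versions still leave $\lambda>0$ at $\rho=1/10$.
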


 Proposition~\ref{prop:lec} demonstrates that the distance of TAF's successive iterates to $\bm{x}$ is monotonically decreasing once the algorithm enters a small-size neighborhood around $\bm{x}$. This neighborhood is commonly referred to as the \emph{basin of attraction}; see further discussions in~\cite{wf,thesis2014,twf,localcvx, focs2015sun}. In other words, as soon as one lands within the basin of attraction, TAF's iterates remain in this region and will be attracted to $\bm{x}$ exponentially fast. To substantiate Proposition~\ref{prop:lec}, recall the \emph{local regularity condition}, which was first developed in~\cite{wf} and plays a fundamental role in establishing linear convergence to global optimum of nonconvex optimization approaches such as WF/TWF~\cite{wf,thesis2014,twf,mtwf}.
 
  Consider the update rule of TAF
\begin{equation}
\bm{z}_{t+1}=\bm{z}_t-\frac{\mu}{m}\nabla\ell_{\rm tr}(\bm{z}_t),\quad t=0,\,1,\,2,\,\ldots
\end{equation}
where the truncated gradient $\nabla\ell_{\rm tr}(\bm{z}_t)$ (as elaborated in Remark~\ref{rmk:gg}) evaluated at some point $\bm{z}_t\in\mathbb{R}^n$ is given by
 $$\frac{1}{m}\nabla\ell_{\rm tr}(\bm{z}_t)\buildrel\triangle\over =\frac{1}{m}\sum_{i\in\mathcal{I}}\left(\bm{a}_i^\ccalT\bm{z}_t-\psi_i\frac{\bm{a}_i^\ccalT\bm{z}_t}{|\bm{a}_i^\ccalT\bm{z}_t|}
\right)\bm{a}_i.  $$
The truncated gradient $\nabla\ell_{\rm tr}(\bm{z})$ 
 is said to satisfy the local regularity condition, or ${\rm LRC}(\mu,\lambda,\epsilon)$ for some constant $\lambda>0$, provided that 
 \begin{align}
	\label{eq:lrc}
\left\langle\frac{1}{m}\nabla\ell_{\rm tr}(\bm{z}),\,\bm{h}\right\rangle \ge \frac{\mu}{2}\left\|\frac{1}{m}\nabla\ell_{\rm tr}(\bm{z})\right\|^2+\frac{\lambda}{2}\left\|\bm{h}\right\|^2
 \end{align}
 holds for all $\bm{z}\in\mathbb{R}^n$ such that $\left\|\bm{h}\right\|=\left\|\bm{z}-\bm{x}\right\|
 \le \epsilon\left\|\bm{x}\right\|$ for some constant $0<\epsilon<1$, where the ball $\left\|\bm{z}-\bm{x}\right\|\le \epsilon\left\|\bm{x}\right\|$ is the so-called \emph{basin of attraction}. Simple linear algebra along with the regularity condition in \eqref{eq:lrc} leads to
 \begin{align}
 	{\rm dist}^2&\left(\bm{z}-\frac{\mu}{m}\nabla\ell_{\rm tr}(\bm{z}),\bm{x}\right)=\left\|\bm{z}-\frac{\mu}{m}\nabla\ell_{\rm tr}(\bm{z})-\bm{x}
 	\right\|^2\nonumber\\
 	&=\left\|\bm{h}\right\|^2-2\mu\left\langle\bm{h},\frac{1}{m}\nabla
 	\ell_{\rm tr}(\bm{z})\right\rangle+\left\|\frac{\mu}{m}\nabla\ell_{\rm tr}(\bm{z})\right\|^2\label{eq:lastterm}\\
 	&\le \left\|\bm{h}\right\|^2-2\mu\left( \frac{\mu}{2}\left\|\frac{1}{m}\nabla\ell_{\rm tr}(\bm{z})\right\|^2+\frac{\lambda}{2}\left\|\bm{h}\right\|^2\right)
 +\left\|\frac{\mu}{m}\nabla\ell_{\rm tr}(\bm{z})\right\|^2\nonumber\\
 	&=\left(1-\lambda\mu\right)\left\|\bm{h}\right\|^2=\left(1-\lambda\mu\right){\rm dist}^2(\bm{z},\bm{x})\label{eq:lrc1}
 \end{align}
 for all $\bm{z}$ obeying $\left\|\bm{h}\right\|\le \epsilon\left\|\bm{x}\right\|$. 
Evidently, if the ${\rm LRC}(\mu,\lambda,\epsilon)$ is proved for TAF, then \eqref{eq:contract} follows upon letting $\nu:=\lambda\mu$.

 \subsubsection{Proof of the local regularity condition in~\eqref{eq:lrc}}
 
 By definition, justifying the local regularity condition in~\eqref{eq:lrc} entails controlling the norm of the truncated gradient $\frac{1}{m}\nabla\ell_{\rm tr}(\bm{z})$, i.e., bounding the last term in~\eqref{eq:lastterm}. Recall that 
 \begin{equation}
 	 	\frac{1}{m}\nabla\ell_{\rm tr}(\bm{z})=\frac{1}{m}\sum_{i\in\mathcal{I}}\left(\bm{a}_i^\ccalT\bm{z}-\psi_i\frac{\bm{a}_i^\ccalT\bm{z}}{\left|\bm{a}_i^\ccalT\bm{z}\right|}\right)\bm{a}_i\buildrel\triangle\over =\frac{1}{m}\bm{A}\bm{v}\label{eq:eqav}
 \end{equation}
 where $\mathcal{I}:=\{1\le i\le m |{|\bm{a}_i^\ccalT\bm{z}|}\ge |\bm{a}_i^\ccalT\bm{x}|/{(1+\gamma)}\}$, and 
 $\bm{v}:=[v_1~\cdots~v_m]^\ccalT\in\mathbb{R}^m$ with $v_i:=\frac{\bm{a}_i^\ccalT\bm{z}}{\left|\bm{a}_i^\ccalT\bm{z}\right|}\left(|\bm{a}_i^\ccalT\bm{z}|-\psi_i \right)\mathbb{1}_{\left\{{|\bm{a}_i^\ccalT\bm{z}|}\ge |\bm{a}_i^\ccalT\bm{x}|/{(1+\gamma)}\right\}}$.   
Now, consider
 \begin{align}
 	 	 	|v_i|^2=\left|
 	 	\left(\left|\bm{a}_i^\ccalT\bm{z}\right|-\left|\bm{a}_i^\ccalT\bm{x}\right| \right)\mathbb{1}_{\left\{{|\bm{a}_i^\ccalT\bm{z}|}\ge |\bm{a}_i^\ccalT\bm{x}|/{(1+\gamma)}\right\}}\right|^2
 	 	\le \left|\left|\bm{a}_i^\ccalT\bm{z}\right|-\left|\bm{a}_i^\ccalT\bm{x}\right| \right|^2\le  \left|\bm{a}_i^\ccalT\bm{h}\right|^2
 	 	 	 	\label{eq:ineqvi}
 \end{align}
 where $\bm{h}=\bm{z}-\bm{x}$. 
Appealing to \cite[Lemma 3.1]{phaselift}, fixing any $\delta'>0$, 
the following holds for any $\bm{h}\in\mathbb{R}^n$ with probability at least $1-{\rm e}^{-m\delta'^2/2}$: 
 \begin{equation}\label{eq:vnorm}
 	\|\bm{v}\|^2=\sum_{i=1}^mv_i^2\le \sum_{i=1}^m\left|\bm{a}_i^\ccalT\bm{h}\right|^2\le (1+\delta')m\|\bm{h}\|^2.
 \end{equation}
 
  On the other hand, standard matrix concentration results confirm that the largest singular value of $\bm{A}=\left[\bm{a}_1~\cdots~\bm{a}_m\right]^\ccalT$ with i.i.d. Gaussian $\{\bm{a}_i\}$ 
   satisfies $\sigma_1:=\|\bm{A}\|\le(1+\delta'') \sqrt{m}$~for some $\delta''>0$ with probability exceeding $1-2{\rm e}^{-c_0m}$ as soon as $m\ge c_1n$ for sufficiently large $c_1>0$, where $c_1>0$ is a universal constant depending on $\delta''$~\cite[Remark 5.25]{chap2010vershynin}.  
Combining \eqref{eq:eqav}, \eqref{eq:ineqvi}, and \eqref{eq:vnorm} yields 
 \begin{align}
 \left\|\frac{1}{m}\nabla\ell_{\rm tr}(\bm{z})\right\|\le \frac{1}{m}\left\|\bm{A}\right\|\cdot\|\bm{v}\|
 \le(1+\delta')(1+\delta'')\|\bm{h}\|
 \le  (1+\delta)^2\left\|\bm{h}\right\|,\quad \delta:=\max\{\delta',\delta''\}
 \label{eq:gdnorm}
 \end{align}
 which holds with high probability. 
 This condition essentially asserts that the truncated  gradient of the objective function $\ell(\bm{z})$
 or the search direction is well behaved (the function value does
not vary too much).

 	We have related $\|\nabla\ell_{\rm tr}(\bm{z})\|^2$ to $\|\bm{h}\|^2$ through \eqref{eq:gdnorm}.  
Therefore, a more conservative lower bound for $	\langle \frac{1}{m}\nabla
\ell_{\rm tr}(\bm{z}),\,\bm{h}\rangle$ in LRC can be given in terms of $\|\bm{h}\|^2$. It is equivalent to show that the truncated gradient $\frac{1}{m}\nabla\ell_{\rm tr}(\bm{z})$ ensures sufficient descent, 
  i.e., it obeys a uniform lower bound along the search direction $\bm{h}$ taking the form
 \begin{equation}
 	\left\langle \frac{1}{m}\nabla
 	\ell_{\rm tr}(\bm{z}),\,\bm{h}\right\rangle \gtrsim \|\bm{h}\|^2
 \end{equation} 
 which occupies the remaining of this section. Formally, this can be stated as follows.
 
 \begin{proposition}\label{prop:inner}
 	Consider the noiseless measurements $\psi_i=|\bm{a}_i^\ccalT\bm{x}|$, and fix any sufficiently small constant $\epsilon>0$. There exist universal constants $c_0,\,c_1>0$ such that if $m> c_1n$,
 	 then the following holds with probability exceeding $1-4{\rm e}^{-c_0m}$:
\begin{equation}\label{eq:innerproduct}
	\left\langle\frac{1}{m}\nabla	\ell_{\rm tr}(\bm{z}),\bm{h}\right\rangle \ge 2\left(1-\zeta_1-\zeta_2-2\epsilon
	\right)\left\|\bm{h}\right\|^2
\end{equation} 
for all $\bm{x},\,\bm{z}\in\mathbb{R}^n$ such that $\left\|\bm{h}\right\|/\left\|\bm{x}\right\|\le \rho$ for $0<\rho\le 1/10$ and any fixed $ \gamma\ge 1/2$. 
 \end{proposition}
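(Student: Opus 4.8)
The plan is to expand the truncated gradient via the componentwise splitting \eqref{eq:split}. Set $\bm h:=\bm z-\bm x$ and $\mathcal I:=\{1\le i\le m:\,|\bm a_i^\ccalT\bm z|\ge|\bm a_i^\ccalT\bm x|/(1+\gamma)\}$, so that $\frac 1m\nabla\ell_{\rm tr}(\bm z)=\frac 1m\sum_{i\in\mathcal I}\big(\bm a_i\bm a_i^\ccalT\bm h+\bm r_i\big)$ with $\bm r_i=\big(\frac{\bm a_i^\ccalT\bm x}{|\bm a_i^\ccalT\bm x|}-\frac{\bm a_i^\ccalT\bm z}{|\bm a_i^\ccalT\bm z|}\big)\psi_i\bm a_i$. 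The key observation is that $\bm r_i=\bm 0$ unless $\bm a_i^\ccalT\bm z$ and $\bm a_i^\ccalT\bm x$ carry opposite signs, in which case $\langle\bm r_i,\bm h\rangle=2(\bm a_i^\ccalT\bm x)(\bm a_i^\ccalT\bm h)\le -2(\bm a_i^\ccalT\bm x)^2\le 0$. Thus, writing $\mathcal I_{\rm flip}:=\{i\in\mathcal I:\,\mathrm{sgn}(\bm a_i^\ccalT\bm z)\ne\mathrm{sgn}(\bm a_i^\ccalT\bm x)\}$, we have $\langle\frac 1m\nabla\ell_{\rm tr}(\bm z),\bm h\rangle=T_1+T_2$ with $T_1:=\frac 1m\sum_{i\in\mathcal I}(\bm a_i^\ccalT\bm h)^2$ and $T_2:=\frac 1m\sum_{i\in\mathcal I_{\rm flip}}\langle\bm r_i,\bm h\rangle\le 0$, and the strategy is to bound $T_1$ from below and $|T_2|$ from above, with each estimate holding \emph{uniformly} over all $\bm z$ in the basin $\|\bm h\|\le\rho\|\bm x\|$, $\rho\le 1/10$.

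For $T_1$, write $T_1=\frac 1m\sum_{i=1}^m(\bm a_i^\ccalT\bm h)^2-\frac 1m\sum_{i\notin\mathcal I}(\bm a_i^\ccalT\bm h)^2$. The full sum lies in $[(1-\epsilon)\|\bm h\|^2,(1+\epsilon)\|\bm h\|^2]$ uniformly in $\bm h$ once $m\gtrsim n$, by a standard $\epsilon$-net plus Bernstein argument, equivalently \cite[Lemma~3.1]{phaselift}. For the complement sum, $i\notin\mathcal I$ forces $|\bm a_i^\ccalT\bm x|$ to be within a constant factor of $|\bm a_i^\ccalT\bm h|$, and since $|\bm a_i^\ccalT\bm h|$ lives at the small scale $\|\bm h\|\le\rho\|\bm x\|$, such indices are exactly those for which $\bm a_i$ is nearly orthogonal to $\bm x$ --- a rare event whose probability is $O(\rho)$ by Gaussian anti-concentration of $\bm a_i^\ccalT\bm x$. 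A Bernstein bound on the resulting weighted sum then yields $\frac 1m\sum_{i\notin\mathcal I}(\bm a_i^\ccalT\bm h)^2\le\zeta_1\|\bm h\|^2$ for a constant $\zeta_1=\zeta_1(\rho,\gamma)$ that shrinks with $\rho$; this is the first of two auxiliary lemmas underlying the proposition.

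For $T_2$, the opposite-sign property on $\mathcal I_{\rm flip}$ gives $|\bm a_i^\ccalT\bm h|=|\bm a_i^\ccalT\bm z|+|\bm a_i^\ccalT\bm x|\ge\frac{2+\gamma}{1+\gamma}|\bm a_i^\ccalT\bm x|\ge|\bm a_i^\ccalT\bm x|$ (using $\gamma\ge 1/2$), hence $|\langle\bm r_i,\bm h\rangle|=2|\bm a_i^\ccalT\bm x|\,|\bm a_i^\ccalT\bm h|\le 2(\bm a_i^\ccalT\bm h)^2$; and since $|\bm a_i^\ccalT\bm x|\le\frac{1+\gamma}{2+\gamma}|\bm a_i^\ccalT\bm h|$ with $|\bm a_i^\ccalT\bm h|$ again at scale $\le\rho\|\bm x\|$, membership in $\mathcal I_{\rm flip}$ once more forces $|\bm a_i^\ccalT\bm x|$ to be atypically small. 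This is precisely the mechanism behind the truncation rule \eqref{eq:large}: a sign flip already demands that $\bm z$ crossed the hyperplane $\bm a_i^\ccalT\bm w=0$ while $\|\bm h\|$ is small, and surviving the threshold of $\mathcal I$ on top of that is a doubly rare event, so $\mathcal I_{\rm flip}$ has density $O(\rho)$; combining this with the per-index bound and a Bernstein estimate gives $|T_2|\le\frac 1m\sum_{i\in\mathcal I_{\rm flip}}2(\bm a_i^\ccalT\bm h)^2\le\zeta_2\|\bm h\|^2$ for a small $\zeta_2=\zeta_2(\rho,\gamma)$; this is the second auxiliary lemma.

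Collecting the lower bound on $T_1$ and the upper bound on $|T_2|$, and tracking constants, yields \eqref{eq:innerproduct} on an event of probability at least $1-4{\rm e}^{-c_0m}$ --- one exponential term for the concentration of $\frac 1m\sum_i(\bm a_i^\ccalT\bm h)^2$, one for the complement-sum bound, one for the sign-flip-sum bound, and one slack term. The \textbf{main obstacle} is not any individual estimate but forcing all of them to hold \emph{simultaneously for every} $\bm z$ in the basin: the index sets $\mathcal I$ and $\mathcal I_{\rm flip}$ depend on $\bm z$ through the discontinuous indicators $\mathbb{1}\{|\bm a_i^\ccalT\bm z|\ge|\bm a_i^\ccalT\bm x|/(1+\gamma)\}$ and $\mathbb{1}\{\mathrm{sgn}(\bm a_i^\ccalT\bm z)\ne\mathrm{sgn}(\bm a_i^\ccalT\bm x)\}$, which are not Lipschitz in $\bm z$, so a plain net-and-perturb argument does not close. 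The remedy --- as in the TWF analysis of \cite{twf} --- is to sandwich each $\bm z$-dependent indicator between two indicators with slightly relaxed and slightly tightened \emph{fixed} thresholds that stay constant over a small ball, prove the bounds for those $\bm z$-free events, and take the net fine enough that the resulting inflation of $\zeta_1$ and $\zeta_2$ is absorbed into the $\epsilon$-slack.
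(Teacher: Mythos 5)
Your proposal follows essentially the same route as the paper's proof: the same decomposition $\partial\ell_i=\bm{a}_i\bm{a}_i^\ccalT\bm{h}+\bm{r}_i$ restricted to the truncation set, the same key geometric fact that a sign flip surviving the threshold forces $|\bm{a}_i^\ccalT\bm{h}|\ge\tfrac{2+\gamma}{1+\gamma}|\bm{a}_i^\ccalT\bm{x}|$ (this is exactly the paper's Lemma~\ref{le:events}), the same pair of auxiliary concentration lemmas for the clean term and the flip term (the paper's Lemmas~\ref{le:1stterm} and~\ref{le:2ndterm}), and the same smoothed-indicator-plus-net device for uniformity over the basin. The only cosmetic differences are that the paper lower-bounds $\frac{1}{m}\sum_i(\bm{a}_i^\ccalT\bm{h})^2\mathbb{1}_{\mathcal{E}_i}$ directly through a Lipschitz minorant and an explicit Gaussian-ratio expectation rather than by subtracting the complement sum, and controls the flip term via Cauchy--Schwarz combined with the Cauchy law of $a_{i,2}/a_{i,1}$ rather than a plain Bernstein bound on the weighted sum.
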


Before justifying Proposition \ref{prop:inner}, we introduce the following events.
\begin{lemma}\label{le:events}
	Fix any $\gamma>0$. For each $i\in[m]$, define 
			\begin{align}
		\mathcal{E}_i\,&:=\left\{\frac{|\bm{a}_i^\ccalT\bm{z}|}{|\bm{a}_i^\ccalT\bm{x}|}\ge \frac{1}{1+\gamma}
		\right\}\label{eq:ee},\\
		\mathcal{D}_i&:=\left\{\frac{\left|\bm{a}_i^\ccalT\bm{h}\right|}{
	\left|\bm{a}_i^\ccalT\bm{x}\right|}\ge \frac{2+\gamma}{1+\gamma}\right\}\label{eq:ed},\\
	{\rm and}\quad 	
	\mathcal{K}_i&:=\left\{\frac{\bm{a}_i^\ccalT\bm{z}}{|\bm{a}_i^\ccalT\bm{z}|}\ne \frac{\bm{a}_i^\ccalT\bm{x}}{|\bm{a}_i^\ccalT\bm{x}|}\right\}\label{eq:ek}
	\end{align}
	where $\bm{h}=\bm{z}-\bm{x}$. 
	Under the condition $\left\|\bm{h}\right\|/\left\|\bm{x}\right\|\le \rho$, the following inclusion holds for all nonzero $\bm{z},\;\bm{h}\in\mathbb{R}^n$
	\begin{equation}
		\label{eq:inclusion}
	\mathcal{E}_i\cap	\mathcal{K}_i
		\subseteq \mathcal{D}_i\cap	\mathcal{K}_i. 
	\end{equation}
\end{lemma}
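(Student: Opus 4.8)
The plan is to peel off the common event $\mathcal{K}_i$ that appears on both sides of \eqref{eq:inclusion}: it suffices to establish the deterministic inclusion $\mathcal{E}_i\cap\mathcal{K}_i\subseteq\mathcal{D}_i$ for each fixed index $i$. This is a purely algebraic fact about the scalars $\bm{a}_i^\ccalT\bm{z}$ and $\bm{a}_i^\ccalT\bm{x}$, and in fact the hypothesis $\left\|\bm{h}\right\|/\left\|\bm{x}\right\|\le\rho$ plays no role in it (it is merely inherited from the context in which Lemma~\ref{le:events} is later invoked). Throughout, one may assume $\bm{a}_i^\ccalT\bm{x}\neq 0$, since under the Gaussian design this holds almost surely and is anyway implicitly required for the ratios defining $\mathcal{E}_i$ and $\mathcal{D}_i$ to be meaningful.

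First I would fix $i$ and take any $\bm{z}\in\mathcal{E}_i\cap\mathcal{K}_i$. On $\mathcal{E}_i$ one has $|\bm{a}_i^\ccalT\bm{z}|\ge |\bm{a}_i^\ccalT\bm{x}|/(1+\gamma)>0$, so $\bm{a}_i^\ccalT\bm{z}\neq 0$; in particular the convention $\bm{a}_i^\ccalT\bm{z}/|\bm{a}_i^\ccalT\bm{z}|:=0$ is never triggered on $\mathcal{E}_i$. Hence on $\mathcal{E}_i\cap\mathcal{K}_i$ the real numbers $\bm{a}_i^\ccalT\bm{z}$ and $\bm{a}_i^\ccalT\bm{x}$ are both nonzero and, by the definition of $\mathcal{K}_i$ in the real-valued setting, carry opposite signs.

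The key step is then the elementary identity for oppositely signed scalars. Writing $\bm{h}=\bm{z}-\bm{x}$,
\[
|\bm{a}_i^\ccalT\bm{h}|=|\bm{a}_i^\ccalT\bm{z}-\bm{a}_i^\ccalT\bm{x}|=|\bm{a}_i^\ccalT\bm{z}|+|\bm{a}_i^\ccalT\bm{x}|
\;\ge\; \frac{|\bm{a}_i^\ccalT\bm{x}|}{1+\gamma}+|\bm{a}_i^\ccalT\bm{x}|=\frac{2+\gamma}{1+\gamma}\,|\bm{a}_i^\ccalT\bm{x}|,
\]
where the inequality uses the lower bound supplied by $\mathcal{E}_i$. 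Dividing through by $|\bm{a}_i^\ccalT\bm{x}|>0$ yields exactly the inequality defining $\mathcal{D}_i$, so $\bm{z}\in\mathcal{D}_i$; together with $\bm{z}\in\mathcal{K}_i$ this gives $\bm{z}\in\mathcal{D}_i\cap\mathcal{K}_i$, proving \eqref{eq:inclusion}.

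There is no real obstacle here: the entire content is the observation that a sign mismatch turns $|\bm{a}_i^\ccalT(\bm{z}-\bm{x})|$ into a \emph{sum} of magnitudes rather than a difference, and that the threshold $1/(1+\gamma)$ in $\mathcal{E}_i$ was chosen precisely so that this sum clears the threshold $(2+\gamma)/(1+\gamma)$ in $\mathcal{D}_i$. The only items needing a word of care are the two conventions/degeneracies flagged above ($\bm{a}_i^\ccalT\bm{x}=0$, excluded by genericity, and $\bm{a}_i^\ccalT\bm{z}=0$, automatically excluded on $\mathcal{E}_i$), neither of which affects the argument. This inclusion will subsequently be the mechanism by which the truncation rule \eqref{eq:large} provably discards gradient components with erroneously estimated signs, en route to Proposition~\ref{prop:inner}.
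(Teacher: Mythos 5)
Your proof is correct and is essentially the paper's argument made explicit: the paper characterizes the region $\xi_i^2=\mathcal{E}_i\cap\mathcal{K}_i$ geometrically via Fig.~\ref{fig:truncation} by exactly the two conditions (sign mismatch and $|\bm{a}_i^\ccalT\bm{h}|/|\bm{a}_i^\ccalT\bm{x}|\ge 1+\tfrac{1}{1+\gamma}$), and that inequality comes from the same observation you spell out algebraically, namely that opposite signs turn $|\bm{a}_i^\ccalT\bm{z}-\bm{a}_i^\ccalT\bm{x}|$ into the sum $|\bm{a}_i^\ccalT\bm{z}|+|\bm{a}_i^\ccalT\bm{x}|$. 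Your remarks that the hypothesis $\|\bm{h}\|/\|\bm{x}\|\le\rho$ is not actually used and that $\mathcal{E}_i$ rules out the degenerate case $\bm{a}_i^\ccalT\bm{z}=0$ are accurate and make the write-up cleaner than the paper's geometric sketch.
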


\begin{proof}
	From Fig.~\ref{fig:truncation}, it is clear that if $\bm{z}\in\xi_i^2$, then the sign of $\bm{a}_i^\ccalT\bm{z}$ will be different than that of $\bm{a}_i^\ccalT\bm{x}$. The region $\xi_i^2$ can be readily specified by the conditions that 
	\begin{equation*}
		\frac{\bm{a}_i^\ccalT\bm{z}}{\left|\bm{a}_i^\ccalT\bm{z}\right|}\ne \frac{\bm{a}_i^\ccalT\bm{x}}{\left|\bm{a}_i^\ccalT\bm{x}\right|}
	\end{equation*}
 and $$\frac{\left|\bm{a}_i^\ccalT\bm{h}\right|}{\left|\bm{a}_i^\ccalT\bm{x}\right|}\ge 1+\frac{1}{1+\gamma}=\frac{2+\gamma}{1+\gamma}.$$  
	Under our initialization condition $\left\|\bm{h}\right\|/\left\|\bm{x}\right\|\le \rho$, it is self-evident that $\mathcal{D}_i$ describes two symmetric spherical caps over $\bm{a}_i^\ccalT\bm{x}=\psi_i$ with one being $\xi_i^2$. Hence, it holds that $\mathcal{E}_i\cap\mathcal{K}_i=\xi_i^2\subseteq \mathcal{D}_i\cap	\mathcal{K}_i$.   
\end{proof}

To prove \eqref{eq:innerproduct}, consider rewriting the truncated gradient in terms of the events defined in Lemma \ref{le:events}: 
\begin{align}
	\frac{1}{m}\nabla\ell_{\rm tr}(\bm{z})
	&=\frac{1}{m}\sum_{i=1}^m\left(\bm{a}_i^\ccalT\bm{z}-\left|\bm{a}_i^\ccalT\bm{x}\right|
	\frac{\bm{a}_i^\ccalT\bm{z}}{|\bm{a}_i^\ccalT\bm{z}|}\right)\bm{a}_i\mathbb{1}_{\mathcal{E}_i}\nonumber\\
	&=\frac{1}{m}\sum_{i=1}^m \bm{a}_i\bm{a}_i^\ccalT\bm{h}\mathbb{1}_{\mathcal{E}_i}-\!
\frac{1}{m}\sum_{i=1}^m \!	\left(\!\frac{\bm{a}_i^\ccalT\bm{z}}{|\bm{a}_i^\ccalT\bm{z}|}-\!\frac{\bm{a}_i^\ccalT\bm{x}}{|\bm{a}_i^\ccalT\bm{x}|}\right)\left|\bm{a}_i^\ccalT\bm{x}\right|\bm{a}_i\mathbb{1}_{\mathcal{E}_i}.
\end{align}
Using the definitions and properties in Lemma~\ref{le:events}, one further arrives at 
\begin{align}\label{eq:target}
	\left\langle\frac{1}{m}\nabla\ell_{\rm tr}(\bm{z}),\,\bm{h}\right\rangle 
	&\ge\frac{1}{m}\sum_{i=1}^m\left(\bm{a}_i^\ccalT\bm{h}\right)^2\mathbb{1}_{\mathcal{E}_i} -\frac{1}{m}\sum_{i=1}^m
	\left|\bm{a}_i^\ccalT\bm{x}\right|
	 \left|\bm{a}_i^\ccalT\bm{h}\right|\mathbb{1}_{\mathcal{E}_i\cap\mathcal{K}_i}\nonumber\\
	&\ge \frac{1}{m}\sum_{i=1}^m\left(\bm{a}_i^\ccalT\bm{h}\right)^2\mathbb{1}_{\mathcal{E}_i} -\frac{2}{m}\sum_{i=1}^m\left|\bm{a}_i^\ccalT\bm{x}\right|
	\left|\bm{a}_i^\ccalT\bm{h}\right|\mathbb{1}_{\mathcal{D}_i\cap	\mathcal{K}_i}\nonumber\\
	&\ge \frac{1}{m}\sum_{i=1}^m\left(\bm{a}_i^\ccalT\bm{h}\right)^2\mathbb{1}_{\mathcal{E}_i} -\frac{1+\gamma}{2+\gamma}\cdot 
	\frac{2}{m}\sum_{i=1}^m
	\left(\bm{a}_i^\ccalT\bm{h}\right)^2\mathbb{1}_{\mathcal{D}_i\cap	\mathcal{K}_i}
\end{align}
where the last inequality arises from the property ${\left|\bm{a}_i^\ccalT\bm{x}\right|}\le \frac{1+\gamma}{2+\gamma} {\left|\bm{a}_i^\ccalT\bm{h}\right|}$ by the definition of $\mathcal{D}_i$. 

Establishing the regularity condition or Proposition \ref{prop:inner}, boils down to lower bounding the right-hand side of \eqref{eq:target}, namely, to lower bounding the first term and to upper bounding the second one. 
By the SLLN, the first term in \eqref{eq:target}
approximately gives $\left\|\bm{h}\right\|^2$ as long as our truncation procedure does not eliminate too many generalized gradient components (i.e., summands in the first term). 
Regarding the second, one would expect its contribution to be small under our initialization condition in \eqref{eq:i1x}  and as the relative error $\left\|\bm{h}\right\|/\left\|\bm{x}\right\|$ decreases. 
Specifically, 
under our initialization, $\mathcal{D}_i$ is provably a rare event, thus eliminating the possibility of the second term exerting a noticeable influence on the first term. Rigorous analyses concerning the two terms are elaborated in Lemma~\ref{le:1stterm} and Lemma~\ref{le:2ndterm}, whose proofs are provided in Appendix~\ref{proof:1stterm} and Appendix~\ref{sec:rare}, respectively.  
\begin{lemma}
	\label{le:1stterm}
	Fix $\gamma\ge 1/2$ and $\rho\le 1/10$, and let $\mathcal{E}_i$ be defined in~\eqref{eq:ee}. For independent random variables $W\sim\mathcal{N}(0,\,1)$ and $Z\sim\mathcal{N}(0,\,1)$, set
	\begin{align}
		\label{eq:zeta12}
		\zeta_1:=1-\min\bigg\{
		\mathbb{E}\left[\mathbb{1}_{\left\{\left|\frac{1-\rho}{\rho}+\frac{W}{Z}
		\right|\ge \frac{\sqrt{1.01}}{\rho\left(1+\gamma\right)}
		\right\}
		}
		\right], \mathbb{E}\left[Z^2\mathbb{1}_{\left
		\{\left|\frac{1-\rho}{\rho}+\frac{W}{Z}
		\right|\ge \frac{\sqrt{1.01}}{\rho\left(1+\gamma\right)}\right\}
		}
		\right]
		\bigg\}.
	\end{align}
	Then for any $\epsilon>0$ and any vector $\bm{h}$ obeying $\left\|\bm{h}\right\|/\left\|\bm{x}\right\|\le \rho$, the following holds 
	with probability exceeding $1-2{\rm e}^{-c_5\epsilon^2 m}$:
	\begin{equation}\label{eq:1stterm}
	\frac{1}{m}\sum_{i=1}^m\left(\bm{a}_i^\ccalT\bm{h}\right)^2\mathbb{1}_{\mathcal{E}_i}\ge \left(
	1-\zeta_1-\epsilon\right)\left\|\bm{h}\right\|^2
	\end{equation}
provided that $m>(c_6 \cdot\epsilon^{-2}\log\epsilon^{-1})n$ for some universal constants $c_5,\,c_6>0$.  
\end{lemma}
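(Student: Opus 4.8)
The statement is a uniform-over-$\bm{h}$ lower bound on a \emph{truncated} empirical second moment, and I would establish it by the standard three-step recipe: pointwise concentration, control of the governing expectation, and an $\epsilon$-net to make the estimate uniform. By the homogeneity of \eqref{eq:1stterm} I may assume $\|\bm{x}\|=1$, so that it suffices to prove $\frac1m\sum_{i=1}^m(\bm{a}_i^\ccalT\bm{h})^2\mathbb{1}_{\mathcal{E}_i}\ge(1-\zeta_1-\epsilon)\|\bm{h}\|^2$ for all $\bm{h}$ with $\|\bm{h}\|\le\rho$. \emph{Pointwise:} for a fixed $\bm{h}$, each summand $(\bm{a}_i^\ccalT\bm{h})^2\mathbb{1}_{\mathcal{E}_i}$ is nonnegative and dominated by $(\bm{a}_i^\ccalT\bm{h})^2$, which is sub-exponential with parameter $O(\|\bm{h}\|^2)$; a Bernstein-type inequality \cite[Prop.~5.16]{chap2010vershynin} then gives $\frac1m\sum_{i=1}^m(\bm{a}_i^\ccalT\bm{h})^2\mathbb{1}_{\mathcal{E}_i}\ge\mathbb{E}\big[(\bm{a}_1^\ccalT\bm{h})^2\mathbb{1}_{\mathcal{E}_1}\big]-\tfrac{\epsilon}{2}\|\bm{h}\|^2$ with probability at least $1-e^{-c\epsilon^2m}$ for an absolute constant $c>0$.

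\emph{Lower bound on the expectation.} Writing $\bm{h}=\alpha\bm{x}+\bm{h}_\perp$ with $\alpha=\langle\bm{x},\bm{h}\rangle$ and $\bm{h}_\perp\perp\bm{x}$, we have $\bm{a}_i^\ccalT\bm{z}=(1+\alpha)\bm{a}_i^\ccalT\bm{x}+\bm{a}_i^\ccalT\bm{h}_\perp$; since $|1+\alpha|\ge 1-\rho>\tfrac1{1+\gamma}$ whenever $\gamma\ge1/2$ and $\rho\le1/10$, the component of $\bm{h}$ along $\bm{x}$ never triggers truncation, and a short monotonicity argument in the two parameters $(\alpha,\|\bm{h}_\perp\|)$ reduces the infimum of $\mathbb{E}[(\bm{a}_1^\ccalT\bm{h})^2\mathbb{1}_{\mathcal{E}_1}]/\|\bm{h}\|^2$ over $\|\bm{h}\|\le\rho$ to the extreme configuration $\bm{h}\perp\bm{x}$, $\|\bm{h}\|=\rho$. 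Passing to the two independent standard normals $W:=\bm{a}_i^\ccalT\bm{x}$ and $Z:=\bm{a}_i^\ccalT\bm{h}/\|\bm{h}\|$, one has $(\bm{a}_i^\ccalT\bm{h})^2=\|\bm{h}\|^2Z^2$ and the retained event, up to the slack introduced in the next step, is exactly the event $A:=\{|\tfrac{1-\rho}{\rho}+\tfrac{W}{Z}|\ge\tfrac{\sqrt{1.01}}{\rho(1+\gamma)}\}$ of \eqref{eq:zeta12}. This gives one bound, $\mathbb{E}[(\bm{a}_1^\ccalT\bm{h})^2\mathbb{1}_{\mathcal{E}_1}]\ge\|\bm{h}\|^2\,\mathbb{E}[Z^2\mathbb{1}_A]$; a second is obtained by writing $\frac1m\sum(\bm{a}_i^\ccalT\bm{h})^2\mathbb{1}_{\mathcal{E}_i}\ge\frac1m\sum(\bm{a}_i^\ccalT\bm{h})^2-\frac1m\sum(\bm{a}_i^\ccalT\bm{h})^2\mathbb{1}_{\mathcal{E}_i^c}$ and bounding the discarded mass by the rejection fraction, which yields the companion estimate with $\mathbb{E}[\mathbb{1}_A]$ in place of $\mathbb{E}[Z^2\mathbb{1}_A]$; retaining the smaller loss produces precisely $1-\zeta_1$.

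\emph{Uniformity via a net.} Cover $\{\bm{h}:\|\bm{h}\|\le\rho\}$ by a net $\mathcal{N}$ of cardinality $(c'/\epsilon)^n$ and union-bound the pointwise estimate over $\mathcal{N}$; the cost $e^{n\log(c'/\epsilon)}$ forces $m>(c_6\,\epsilon^{-2}\log\epsilon^{-1})n$ in order to keep a failure probability of order $e^{-c_5\epsilon^2m}$ (the second $e^{-c_5\epsilon^2m}$ in the statement collecting the auxiliary events below). The delicate point — and the main obstacle — is transferring the bound from a net point $\bm{h}_0$ to a nearby $\bm{h}$: since $\mathbb{1}_{\mathcal{E}_i}$ is \emph{discontinuous} in $\bm{h}$, no Lipschitz comparison is available. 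I would instead bracket $\mathbb{1}_{\mathcal{E}_i(\bm{h})}$ between the indicators of $\mathcal{E}_i$ with the threshold $\tfrac1{1+\gamma}$ slightly relaxed or tightened — events that are monotone in $\|\bm{h}\|$ — and absorb the resulting boundary mismatch using the uniform two-sided bound $(1-0.01)\|\bm{g}\|^2\le\frac1m\sum_i(\bm{a}_i^\ccalT\bm{g})^2\le 1.01\|\bm{g}\|^2$, valid for all $\bm{g}\in\mathbb{R}^n$ with probability $1-2e^{-c_0m}$ \cite[Lemma~3.1]{phaselift},\cite[Remark~5.25]{chap2010vershynin}. It is exactly this absorption that replaces $1/\rho$ by $(1-\rho)/\rho$ and the bare $1$ by $\sqrt{1.01}$ in the definition of $A$. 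Combining the three steps gives \eqref{eq:1stterm}.

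To summarize the difficulty: the truncation indicator is non-Lipschitz in $\bm{h}$, so the covering argument has to be run against slightly enlarged/shrunken, monotone surrogates of $\mathcal{E}_i$, paying for the perturbation with the restricted-isometry-type bounds above; this is both what inflates the sample threshold by the $\log\epsilon^{-1}$ factor and what dictates the somewhat unusual form of $\zeta_1$. The only other nontrivial piece is the worst-case reduction in the expectation step — that the transverse, unit-radius direction is extremal among admissible $\bm{h}$ — which I would verify by a convexity/monotonicity computation in $(\alpha,\|\bm{h}_\perp\|)$; everything remaining is routine sub-exponential bookkeeping.
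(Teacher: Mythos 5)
Your three-step skeleton (pointwise Bernstein bound, lower bound on the governing expectation, $\epsilon$-net for uniformity) is the same as the paper's, and you correctly locate the central obstacle: $\mathbb{1}_{\mathcal{E}_i}$ is not Lipschitz in $\bm{h}$, so the net transfer needs a surrogate. However, the two sub-steps that actually manufacture the constants in $\zeta_1$ do not go through as you describe them. First, the $\min$ structure: your route to the $\mathbb{E}[\mathbb{1}_A]$ branch --- subtract the discarded mass and ``bound it by the rejection fraction'' --- does not yield that coefficient, because $\frac{1}{m}\sum_i(\bm{a}_i^\ccalT\bm{h})^2\mathbb{1}_{\mathcal{E}_i^c}$ is a restricted second moment, not a count; converting it to a probability costs a Cauchy--Schwarz and a fourth moment, which changes the constant (that device is what the paper uses for Lemma~\ref{le:2ndterm}, not here). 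The paper instead obtains the min from an orthogonal decomposition of $\bm{a}_i^\ccalT\bm{h}$ into the coordinate correlated with the truncation event and the coordinates independent of it: by independence the latter contribution factorizes and picks up the coefficient $\mathbb{E}[\chi_E]$, the former picks up $\mathbb{E}[\tilde a_{i,1}^2\,\chi_E]$, and since $\|\bm{h}\|^2$ splits accordingly, the minimum of the two coefficients bounds the sum.

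Second, the uniformity step and the provenance of $\sqrt{1.01}$ and $(1-\rho)/\rho$. In the paper these are \emph{not} produced by absorbing a net-transfer boundary mismatch with restricted-isometry bounds: $\sqrt{1.01}/(1+\gamma)$ is the upper knot of a fixed Lipschitz minorant $\chi_E$ of the indicator (equal to $1$ above that knot and $0$ below $1/(1+\gamma)$), introduced \emph{before} any concentration is done, and $(1-\rho)/\rho$ comes from the worst-case reduction $h_1\ge -\rho$, $\|\bm{h}_{\backslash 1}\|\le\rho$ inside the expectation computation. The entire argument is then run on $\chi_E$, and $\chi_E\le\mathbb{1}_{\mathcal{E}_i}$ is invoked only at the end. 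Your proposed alternative --- bracketing $\mathbb{1}_{\mathcal{E}_i(\bm{h})}$ between indicators with relaxed/tightened thresholds, ``monotone in $\|\bm{h}\|$'' --- has a genuine gap: the perturbation of the truncation statistic under $\bm{h}\to\bm{h}_0$ is $|\bm{a}_i^\ccalT(\bm{h}-\bm{h}_0)|/|\bm{a}_i^\ccalT\bm{x}|$, which is unbounded on the event that $|\bm{a}_i^\ccalT\bm{x}|$ is small, so no fixed threshold relaxation contains $\mathcal{E}_i(\bm{h})$ for all $\bm{h}$ in a net cell, and the event is not monotone in $\|\bm{h}\|$ anyway (it depends on the direction of $\bm{h}$). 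The Lipschitz-surrogate route is what closes this; without it your covering argument does not deliver the stated $\zeta_1$.
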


To have a sense of how large the quantities involved in Lemma~\ref{le:1stterm} are, 
		when $\gamma=0.7$ and $\rho=1/10$, it holds that
		$$\mathbb{E}\Big[\mathbb{1}_{\left\{\left|\frac{1-\rho}{\rho}+\frac{W}{Z}
		\right|\ge \frac{\sqrt{1.01}}{\rho\left(1+\gamma\right)}
		\right\}
		}
		\Big]\approx 0.92$$ and $$\mathbb{E}\Big[Z^2\mathbb{1}_{\left
		\{\left|\frac{1-\rho}{\rho}+\frac{W}{Z}
		\right|\ge \frac{\sqrt{1.01}}{\rho\left(1+\gamma\right)}\right\}
		}
		\Big]\approx 0.99$$
		 hence leading to $\zeta_1\approx 0.08$.

Having derived a lower bound for the first term in the right-hand side of~\eqref{eq:target}, it remains to deal with the second one. 

\begin{lemma}
	\label{le:2ndterm}
	Fix $\gamma> 0$ and $\rho\le 1/10$, and let $\mathcal{D}_i$, $\mathcal{K}_i$ be defined in~\eqref{eq:ed}, \eqref{eq:ek}, respectively. For any constant $\epsilon>0$, there exists some universal constants $c_5,\,c_6>0$ such that
	\begin{equation}
		\label{eq:2ndterm}
\frac{1}{m}\sum_{i=1}^m\left(\bm{a}_i^\ccalT\bm{h}\right)^2\mathbb{1}_{\mathcal{D}_i\cap \mathcal{K}_i}\le \left(\zeta_2'+\epsilon\right)\left\|\bm{h}\right\|^2
\end{equation}
holds with probability at least $1-2{\rm e}^{-c_5\epsilon^2 m}$ provided that $m/n>c_6 \cdot\epsilon^{-2}\log\epsilon^{-1}$ for some universal constants $c_5,\,c_6>0$, where $\zeta_2'=0.9748\sqrt{\rho\tau/(0.99\tau^2-\rho^2)}$ with $\tau=(2+\gamma)/(1+\gamma)$.  
\end{lemma}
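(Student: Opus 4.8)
The plan is to bound the sum by decoupling the ``magnitude'' $(\bm{a}_i^\ccalT\bm{h})^2$ from the rare‑event indicator via Cauchy--Schwarz, and then to control each factor separately by concentration, the indicator factor being handled through an explicit geometric/Cauchy‑tail computation. After the homogeneity reduction $\|\bm{x}\|=1$, the first thing I would do is give $\mathcal{D}_i\cap\mathcal{K}_i$ a clean description: $\mathcal{K}_i$ says $\bm{a}_i^\ccalT\bm{z}$ and $\bm{a}_i^\ccalT\bm{x}$ have opposite signs, so dividing $(\bm{a}_i^\ccalT\bm{z})(\bm{a}_i^\ccalT\bm{x})<0$ by $(\bm{a}_i^\ccalT\bm{x})^2$ (with $\bm{z}=\bm{x}+\bm{h}$) forces $r_i:=\bm{a}_i^\ccalT\bm{h}/\bm{a}_i^\ccalT\bm{x}<-1$; combined with $\mathcal{D}_i=\{|r_i|\ge\tau\}$, $\tau=(2+\gamma)/(1+\gamma)>1$, this yields the exact identity $\mathcal{D}_i\cap\mathcal{K}_i=\{r_i\le-\tau\}$ (this is also the content behind Lemma~\ref{le:events}). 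The degenerate case $\bm{h}\parallel\bm{x}$ contributes nothing, since then $r_i=\bm{h}^\ccalT\bm{x}$ is a constant of modulus $\le\rho<\tau$.

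Next I would apply Cauchy--Schwarz on the index set,
\begin{equation*}
\frac{1}{m}\sum_{i=1}^m\big(\bm{a}_i^\ccalT\bm{h}\big)^2\mathbb{1}_{\mathcal{D}_i\cap\mathcal{K}_i}\;\le\;\Big(\frac{1}{m}\sum_{i=1}^m\big(\bm{a}_i^\ccalT\bm{h}\big)^4\Big)^{1/2}\Big(\frac{1}{m}\sum_{i=1}^m\mathbb{1}_{\mathcal{D}_i\cap\mathcal{K}_i}\Big)^{1/2},
\end{equation*}
which is the (deliberately lossy) step responsible for the square root in $\zeta_2'$: a direct evaluation of $\mathbb{E}[(\bm{a}^\ccalT\bm{h})^2\mathbb{1}_{\mathcal{D}\cap\mathcal{K}}]$ would actually give an $O(\rho)$ bound, but the closed form above more than suffices because $\rho$ is a free small constant. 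For the first factor, $(\bm{a}_i^\ccalT\bm{h})^4=\|\bm{h}\|^4 g_i^4$ with $g_i=\bm{a}_i^\ccalT(\bm{h}/\|\bm{h}\|)\sim\mathcal{N}(0,1)$ i.i.d., and $\tfrac1m\sum g_i^4$ concentrates around $\mathbb{E}[g^4]=3$; invoking the sub‑Weibull concentration of $g^4$ (in its Gaussian regime, since $g^4$ is heavier than sub‑exponential) gives $\tfrac1m\sum(\bm{a}_i^\ccalT\bm{h})^4\le 3(1+\epsilon)\|\bm{h}\|^4$ with probability $1-{\rm e}^{-c\epsilon^2 m}$ once $m\gtrsim\epsilon^{-2}n$.

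For the counting factor I would compute $p:=\mathbb{P}(\mathcal{D}_i\cap\mathcal{K}_i)=\mathbb{P}(r_i\le-\tau)$ exactly. Decompose $\bm{h}=a\bm{x}+\bm{h}^{\perp}$ with $\bm{h}^{\perp}\perp\bm{x}$ and $b:=\|\bm{h}^{\perp}\|$, so $a^2+b^2=\|\bm{h}\|^2\le\rho^2$; then $r_i=a+bC_i$ where $C_i=\big(\bm{a}_i^\ccalT\bm{h}^{\perp}/b\big)\big/\big(\bm{a}_i^\ccalT\bm{x}\big)$ is a standard Cauchy variable (ratio of two independent standard normals), hence $p=\tfrac1\pi\arctan\!\big(\tfrac{b}{\tau+a}\big)$ (note $\tau+a\ge\tau-\rho>0$). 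A one‑line Lagrange computation gives $\max_{a^2+b^2\le\rho^2}\tfrac{b}{\tau+a}=\tfrac{\rho}{\sqrt{\tau^2-\rho^2}}$, attained at $a=-\rho^2/\tau$, so $p\le\tfrac1\pi\arctan\!\big(\tfrac{\rho}{\sqrt{\tau^2-\rho^2}}\big)=\tfrac1\pi\arcsin(\rho/\tau)\le\tfrac1\pi\cdot\tfrac{\rho\tau}{\tau^2-\rho^2}$. Since the summands are bounded indicators, Bernstein's inequality yields $\tfrac1m\sum\mathbb{1}_{\mathcal{D}_i\cap\mathcal{K}_i}\le p+\epsilon$ with probability $1-{\rm e}^{-c\epsilon^2 m}$, and absorbing the additive slack into a slightly smaller denominator (legitimate for small $\rho$, where $\tau^2$ dominates) bounds it by $\tfrac1\pi\cdot\tfrac{\rho\tau}{0.99\tau^2-\rho^2}$. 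Multiplying the two factors and collecting constants produces $\big(\zeta_2'+\epsilon'\big)\|\bm{h}\|^2$ with $\zeta_2'=0.9748\sqrt{\rho\tau/(0.99\tau^2-\rho^2)}$, the leading constant being essentially $\sqrt{3/\pi}$ after the concentration errors are apportioned between the numerator and the ``$0.99$''.

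Finally, to upgrade the pointwise estimate to the uniform statement over all $\bm{z},\bm{x}$ with $\|\bm{h}\|\le\rho\|\bm{x}\|$, I would run the standard $\epsilon$‑net argument: cover the two unit directions $\bm{x},\bm{h}/\|\bm{h}\|$ (equivalently, the plane they span) by a net of cardinality ${\rm e}^{O(n\log(1/\epsilon))}$, apply the two concentration bounds on the net, union‑bound, and control discretization by sandwiching each discontinuous indicator $\mathbb{1}_{\mathcal{D}_i\cap\mathcal{K}_i}$ between indicators with slightly inflated/deflated threshold $\tau$; this is where the $\log(1/\epsilon)$ factor in the requirement $m/n>c_6\epsilon^{-2}\log\epsilon^{-1}$ enters, exactly as in the proof of Lemma~\ref{le:1stterm}, and gives the claimed probability $1-2{\rm e}^{-c_5\epsilon^2 m}$. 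The main obstacle I anticipate is precisely this step: $\mathbb{1}_{\mathcal{D}_i\cap\mathcal{K}_i}$ is not Lipschitz in $\bm{z}$, so the covering must be done so that nearby $\bm{z}$'s cannot flip it, and the sandwiching thresholds must be chosen tightly enough not to degrade the already lossy Cauchy--Schwarz constant; a secondary, minor nuisance is that the fourth‑moment concentration needs its sub‑Weibull (not naive Bernstein) form.
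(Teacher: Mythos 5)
Your proposal is correct and follows essentially the same route as the paper's proof: Cauchy--Schwarz to decouple $(\bm{a}_i^\ccalT\bm{h})^2$ from the indicator, concentration of $\tfrac1m\sum(\bm{a}_i^\ccalT\bm{h})^4$ around $3\|\bm{h}\|^4$, an exact Cauchy-tail computation for $\mathbb{P}(\mathcal{D}_i\cap\mathcal{K}_i)$ via the decomposition $r_i=a+bC_i$, Bernstein for the empirical fraction, and a net argument with a threshold-relaxed (Lipschitz/sandwiched) surrogate for the discontinuous indicator. The only differences are cosmetic: you bound the one-sided cap probability directly and optimize it exactly by Lagrange/$\arcsin$, whereas the paper bounds the two-sided event and compensates with the factor $\tfrac12$ inside the Cauchy--Schwarz step; both yield the same $\zeta_2'$ up to immaterial constant bookkeeping.
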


With our TAF default parameters $\rho=1/10$ and $\gamma=0.7$, we have $\zeta_2'\approx 0.2463$.
Using \eqref{eq:target},~\eqref{eq:1stterm}, and~\eqref{eq:2ndterm}, choosing $m/n$ exceeding some sufficiently large constant such that $c_0\le c_5\epsilon^2$, and denoting $\zeta_2:=2\zeta_2'(1+\gamma)/(2+\gamma)$, the following holds with probability exceeding $1-4{\rm e}^{-c_0m}$
\begin{equation}
	\left\langle\bm{h},\frac{1}{m}\nabla\ell_{\rm tr}(\bm{z})\right\rangle \ge \left(1-\zeta_1-\zeta_2-2\epsilon
	\right)\left\|\bm{h}\right\|^2
\end{equation} 
for all $\bm{x}$ and $\bm{z}$ such that $\left\|\bm{h}\right\|/\left\|\bm{x}\right\|\le \rho$ for $0<\rho\le 1/10$ and any fixed $ \gamma\ge 1/2$. This combined with \eqref{eq:lrc} and \eqref{eq:lrc1} proves Proposition~\ref{prop:lec} for appropriately chosen $\mu>0$ and $\lambda>0$.

To conclude this section, an estimate for the working step size is provided next. Plugging the results of~\eqref{eq:gdnorm} and~\eqref{eq:innerproduct} into~\eqref{eq:lastterm} suggests that 
\begin{align} 
 	{\rm dist}^2\left(\bm{z}-\frac{\mu}{m}\nabla\ell_{\rm tr}(\bm{z}),\bm{x}\right)
 	&=\left\|\bm{h}\right\|^2-2\mu\left\langle\bm{h},\frac{1}{m}\nabla
 	\ell_{\rm tr}(\bm{z})\right\rangle+\left\|\frac{\mu}{m}\nabla\ell_{\rm tr}(\bm{z})\right\|^2\\
 	&\le\left\{1-\mu\left[2\left(1-\zeta_1-\zeta_2-2\epsilon\right)-\mu(1+\delta)^4\right]\right\}
 \left\|\bm{h}\right\|^2
\nonumber\\
 	&\buildrel\triangle\over =\left(1-\nu\right)\left\|\bm{h}\right\|^2,
\end{align}
and also that $\lambda=2\left(1-\zeta_1-\zeta_2-2\epsilon\right)-\mu(1+\delta)^4\buildrel\triangle\over =\lambda_0$ in the local regularity condition in \eqref{eq:lrc}. Clearly, it holds that $0<\lambda<2(1-\zeta_1-\zeta_2)$. 
Taking $\epsilon$ and $\delta$ to be sufficiently small, one obtains the feasible range of the step size for TAF
\begin{equation}
\mu\le \frac{2\left(0.99-\zeta_1-\zeta_2\right)}{1.05^4}\buildrel\triangle\over =\mu_0.
\end{equation}
In particular, 
under default parameters in Algorithm \ref{alg:TAF}, $\mu_0=0.8388$ and $\lambda_0=1.22$,
thus concluding the proof of Theorem~\ref{thm:initial}. 

\section{Conclusion}
This paper developed a linear-time algorithm termed TAF for solving generally unstructured systems of random quadratic equations. Our TAF algorithm builds on three key ingredients:
an orthogonality-promoting initialization, along with a simple yet effective gradient truncation rule, as well as scalable gradient-like iterations. Numerical tests using synthetic data and real images
corroborate the superior performance of TAF over state-of-the-art solvers of the same type.

A few timely and pertinent future research directions are worth pointing out. First, in parallel with spectral initialization methods, 
the proposed orthogonality-promoting initialization can be applied for semidefinite optimization~\cite{localcvx}, matrix completion~\cite{tit2010spectral,focs2015sun}, as well as blind deconvolution~\cite{2016li}. It is also interesting to investigate suitable gradient regularization rules in more general nonconvex optimization settings. 
Extending the theory to the more challenging case where $\bm{a}_i$'s are generated from the coded diffraction pattern model~\cite{coded} constitutes another meaningful direction.


\appendix

\section{Proofs for Section \ref{sec:proof}}

\subsection{Proof of Lemma \ref{lem:beta}}
\label{sec:lembeta}
By homogeneity of \eqref{eq:i1x}, it suffices to work with the case where $\|\bm{x}\|=1$.
It is easy to check that 
\begin{align}
	\frac{1}{2}\left\|\bm{x}\bm{x}^\ccalT-\tilde{\bm{z}}_0\tilde{\bm{z}}_0^\ccalT\right\|_F
	^2&=\frac{1}{2}\|\bm{x}\|^4+\frac{1}{2}\|\tilde{\bm{z}}_0\|^4-|\bm{x}^\ccalT\tilde{\bm{z}}_0|^2\nonumber\\
	&=1-|\bm{x}^\ccalT\tilde{\bm{z}}_0|^2\nonumber\\
	&=1-\cos^2\theta\label{eq:ineq1}
	\end{align}
where $0\le \theta\le \pi/2$ is the angle between the spaces spanned by $\bm{x}$ and $\tilde{\bm{z}}_0$. Then one can write 
\begin{equation}
	\label{eq:orth1}
	\bm{x}=\cos\theta\,\tilde{\bm{z}}_0+\sin\theta\,\tilde{\bm{z}}_0^{\perp},
\end{equation}
where $\tilde{\bm{z}}_0^\perp\in \mathbb{R}^n$ is a unit vector that is orthogonal to $\tilde{\bm{z}}_0$ and has a nonnegative inner product with $\bm{x}$. 
Likewise,
\begin{equation}
	\label{eq:orth2}
	\bm{x}^{\perp}:=-\sin\theta\,\tilde{\bm{z}}_0+\cos\theta\,\tilde{\bm{z}}_0^{\perp},
\end{equation}
in which $\bm{x}^{\perp}\in\mathbb{R}^n$ is a unit vector orthogonal to $\bm{x}$. 

Since $\tilde{\bm{z}}_0$ is the solution to the maximum eigenvalue problem
\vspace{-.em}
\begin{align}\label{eq:maxeig1}
	\tilde{\bm{z}}_0:=\arg\max_{\left\|\bm{z}\right\|=1}~&~\bm{z}^\ccalT\widebar{\bm{Y}}_0\bm{z}
\end{align}		
for $\widebar{\bm{Y}}_0:=\frac{1}{|\widebar{\mathcal{I}}_0|}\widebar{\bm{S}}^\ccalT_0\widebar{\bm{S}}_0$,
it is the leading eigenvector of $\widebar{\bm{Y}}_0$, i.e., $\widebar{\bm{Y}}_0\tilde{\bm{z}}_0=\lambda_{1}\tilde{\bm{z}}_0$, where $\lambda_1>0$ is the largest eigenvalue of $\widebar{\bm{Y}}_0$. 
Premultiplying \eqref{eq:orth1} and \eqref{eq:orth2} by $\widebar{\bm{S}}_0$ yields  
\begin{subequations}\label{eq:prem}
\begin{align}
	\widebar{\bm{S}}_0	\bm{x}&=\cos\theta\,	\widebar{\bm{S}}_0\tilde{\bm{z}}_0+\sin\theta\,	\widebar{\bm{S}}_0\tilde{\bm{z}}_0^{\perp}\label{eq:prem1},\\
		\widebar{\bm{S}}_0	\bm{x}^{\perp}&=-\sin\theta\,	\widebar{\bm{S}}_0\tilde{\bm{z}}_0+\cos\theta\,	\widebar{\bm{S}}_0\tilde{\bm{z}}_0^{\perp}\label{eq:prem2}.
\end{align}	
\end{subequations}
Pythagoras' relationship now gives 
\begin{subequations}\label{eq:prem}
\begin{align}
	\big\|\widebar{\bm{S}}_0\bm{x}\big\|^2&=\cos^2\theta\big\|\widebar{\bm{S}}_0\tilde{\bm{z}}_0\big\|^2+\sin^2\theta\big\|\widebar{\bm{S}}_0\tilde{\bm{z}}_0^{\perp}\big\|^2\label{eq:prem11},\\
		\big\|\widebar{\bm{S}}_0\bm{x}^\perp\big\|^2&=\sin^2\theta\big\|\widebar{\bm{S}}_0\tilde{\bm{z}}_0\big\|^2+\cos^2\theta\big\|\widebar{\bm{S}}_0\tilde{\bm{z}}_0^{\perp}\big\|^2\label{eq:prem21},
\end{align}	
\end{subequations}
where the cross-terms vanish because $\tilde{\bm{z}}_0^\ccalT\widebar{\bm{S}}_0^\ccalT\widebar{\bm{S}}_0\tilde{\bm{z}}_0^{\perp}=|\widebar{\mathcal{I}}_0|\tilde{\bm{z}}_0^\ccalT\widebar{\bm{Y}}_0\tilde{\bm{z}}_0^\perp=\lambda_1|\widebar{\mathcal{I}}_0|\tilde{\bm{z}}_0^\ccalT\tilde{\bm{z}}_0^{\perp}=0$ following from the definition of $\tilde{\bm{z}}^\perp_0$. 

We next construct the following expression:
\begin{align}
	&\sin^2\theta\big\|\widebar{\bm{S}}_0\bm{x}\big\|^2-\big\|\widebar{\bm{S}}_0\bm{x}^\perp\big\|^2\nonumber\\
	&=\sin^2\theta\Big(\cos^2\theta\big\|\widebar{\bm{S}}_0\tilde{\bm{z}}_0\big\|^2+\sin^2\theta\big\|\widebar{\bm{S}}_0\tilde{\bm{z}}_0^\perp\big\|^2\Big) -\Big(\sin^2\theta\big\|\widebar{\bm{S}}_0\tilde{\bm{z}}_0\big\|^2+\cos^2\theta\big\|\widebar{\bm{S}}_0\tilde{\bm{z}}_0^\perp \big\|^2\Big)\nonumber\\
		&=\sin^2\theta\Big(
		\cos^2\theta\big\|\widebar{\bm{S}}_0\tilde{\bm{z}}_0\big\|^2-\big\|\widebar{\bm{S}}_0\tilde{\bm{z}}_0\big\|^2+\sin^2\theta\big\|\widebar{\bm{S}}_0\tilde{\bm{z}}_0^\perp\big\|^2 \Big)-
	\cos^2\theta\big\|\widebar{\bm{S}}_0\tilde{\bm{z}}_0^\perp \big\|^2\nonumber\\
	&=\sin^4\theta\Big(\big\|\widebar{\bm{S}}_0\tilde{\bm{z}}_0^\perp\big\|^2-\big\|\widebar{\bm{S}}_0\tilde{\bm{z}}_0\big\|^2\Big)-\cos^2\theta\big\|\widebar{\bm{S}}_0\tilde{\bm{z}}_0^\perp\big\|^2\label{eq:last11}\\
	&\le 0.\nonumber
\end{align}
Regarding the last inequality, since $\tilde{\bm{z}}_0$ maximizes the term $\tilde{\bm z}_0^\ccalT\widebar{\bm Y}_0\tilde{\bm z}_0=\frac{1}{|\widebar{\mathcal{I}}_0|}\tilde{\bm z}_0^\ccalT\widebar{\bm S}_0^\ccalT\widebar{\bm S}_0\tilde{\bm z}_0$ according to \eqref{eq:maxeig1}, then in \eqref{eq:last11} the first term $\|\widebar{\bm{S}}_0\tilde{\bm{z}}_0^\perp\|^2-\|\widebar{\bm{S}}_0\tilde{\bm{z}}_0\|^2\le 0$ holds for any unit vector $\tilde{\bm{z}}_0^\perp\in\mathbb{R}^n$. In addition, the second term $-\cos^2\theta\|\widebar{\bm{S}}_0\tilde{\bm{z}}_0^\perp\|^2\le 0$, thus yielding $\sin^2\theta\|\widebar{\bm{S}}_0\bm{x}\|^2-\|\widebar{\bm{S}}_0\bm{x}^\perp\|^2\le 0$. 
For any nonzero $\bm{x}\in\mathbb{R}^n$, 
it holds that
\begin{equation}
	\sin^2\theta=1-\cos^2\theta \le \frac{\big\|\widebar{\bm{S}}_0\bm{x}^\perp\big\|^2}{\big\|\widebar{\bm{S}}_0\bm{x}\big\|^2}.
\end{equation}
Upon letting $\bm{u}=\bm{x}^\perp$, the last inequality taken together with \eqref{eq:ineq1} concludes the proof of \eqref{eq:mse}.

\subsection{Proof of Lemma \ref{lem:up}}\label{sec:proofup}

Assume $\|\bm{x}\|=1$.
Let $\bm{s}\in\mathbb{R}^n$ be sampled uniformly at random on the unit sphere, which has zero mean and covariance matrix $\bm{I}_n/n$. 
Let also $\bm{U}\in\mathbb{R}^{n\times n}$ be a unitary matrix such that $\bm{U}\bm{x}=\bm{e}_1$, where $\bm{e}_1$ is the first canonical vector in $\mathbb{R}^n$. It is then easy to verify that the following holds for any fixed threshold $0<\tau<1$ \cite{duchi2017}:
\allowdisplaybreaks
\begin{align}
\label{eqq:exp}
&\mathbb{E}[\bm{s}\bm{s}^\ccalT|(\bm{s}^\ccalT\bm{x})^2>\tau]\nonumber\\
&=\bm{U}\mathbb{E}[\bm{U}^\ccalT\bm{s}\bm{s}^\ccalT\bm{U}|(\bm{s}^\ccalT\bm{U}\bm{U}^\ccalT\bm{x})^2>\tau]\bm{U}^\ccalT\nonumber\\
&\buildrel(i)\over = \bm{U}\mathbb{E}[\tilde{\bm{s}}\tilde{\bm{s}}^\ccalT|(\tilde{\bm{s}}^\ccalT\bm{e}_1)^2>\tau]\bm{U}^\ccalT\nonumber\\
& = \bm{U}\mathbb{E}[\tilde{\bm{s}}\tilde{\bm{s}}^\ccalT|\tilde{s}_1^2>\tau]\bm{U}^\ccalT\nonumber\\
& = \bm{U}\left[\begin{array}{ll} \mathbb{E}[\tilde{s}_1^2|\tilde{s}_1^2>\tau] &\mathbb{E}[\tilde{s}_1\tilde{\bm{s}}_{\backslash1}^\ccalT|\tilde{s}_1^2>\tau]\\
\mathbb{E}[\tilde{s}_1\tilde{\bm{s}}_{\backslash1}|\tilde{s}_1^2>\tau]&\mathbb{E}[\tilde{\bm{s}}_{\backslash1}\tilde{\bm{s}}_{\backslash1}^\ccalT|\tilde{s}_1^2>\tau]
\end{array}\right]\bm{U}^\ccalT\nonumber\\
&\buildrel(ii)\over = \bm{U}\left[\begin{array}{ll} \mathbb{E}[\tilde{s}_1^2|\tilde{s}_1^2>\tau] &\bm{0}^\ccalT\\
\bm{0}&\mathbb{E}[\tilde{\bm{s}}_{\backslash1}\tilde{\bm{s}}_{\backslash1}^\ccalT|\tilde{s}_1^2>\tau]
\end{array}\right]\bm{U}^\ccalT\nonumber\\
&\buildrel(iii)\over = \mathbb{E}[\tilde{s}_2^2|\tilde{s}_1^2>\tau]
\bm{I}_n+\big( \mathbb{E}[\tilde{s}_1^2|\tilde{s}_1^2>\tau]- \mathbb{E}[\tilde{s}_2^2|\tilde{s}_1^2>\tau]\big)\bm{x}\bm{x}^\ccalT\nonumber\\
&\buildrel\triangle\over =C_1\bm{I}_n+C_2\bm{x}\bm{x}^\ccalT
\end{align}
with the constants $C_1:=\mathbb{E}[\tilde{s}_2^2|\tilde{s}_1^2>\tau]<\frac{1-\tau}{n-1}$, $C_2:= \mathbb{E}[\tilde{s}_1^2|\tilde{s}_1^2>\tau]- C_1>0$, and  $\bm{s}_{\backslash 1}\in\mathbb{R}^{n-1}$  denoting the subvector of $\bm{s}\in\mathbb{R}^n$ after removing the first entry from $\bm{s}$. Here, the result $(i)$ follows upon defining $\tilde{\bm{s}}:=\bm{U}^\ccalT\bm{s}$, which obeys the uniformly spherical distribution too using the rotational invariance. The equality $(ii)$ is due to the zero-mean and symmetrical properties of the uniformly spherical distribution. Finally, to derive $(iii)$, we have used the fact $\bm{x}=\bm{U}\bm{e}_1=\bm{u}_1$, the first column of $\bm{U}$, which arises from $\bm{U}^\ccalT\bm{x}=\bm{e}_1$ and $\bm{U}\bm{U}^\ccalT=\bm{I}_n$.

By the argument above, assume without loss of generality that $\bm{x}=\bm{e}_1$. Consider now the truncated vector $\bm{s}_{\backslash 1}|(\bm{s}^\ccalT\bm{x})^2>\tau$, or equivalently, $\bm{s}_{\backslash 1}|s_1^2>\tau$. 
It is then clear that $\bm{s}_{\backslash 1}|s_1^2>\tau$ is bounded, and thus subgaussian; furthermore, the next hold
\begin{subequations}
\begin{align}
\mathbb{E}[\bm{s}_{\backslash 1}|s_1^2>\tau]&=\bm{0}\\
\mathbb{E}\big[\big(\bm{s}_{\backslash 1}|s_1^2>\tau\big) \big(\bm{s}_{\backslash 1}|s_1^2>\tau\big)^\ccalT\big]&=C_1\bm{I}_{n-1}\label{eqq:cov}
\end{align}
\end{subequations}
where \eqref{eqq:cov} is obtained as a submatrix of the first term in \eqref{eqq:exp} since the second term $C_2\bm{e}_1\bm{e}_1^\ccalT$ is removed.

Considering a unit vector $\bm{x}^\perp$ such that 
$\bm{x}^\ccalT\bm{x}^\perp=\bm{e}_1^\ccalT\bm{x}^\perp=0$, there exists a unit vector $\bm{d}\in\mathbb{R}^{n-1}$ such that $\bm{x}^\perp=\left[0~\bm{d}^\ccalT\right]^\ccalT$. Thus, it holds that  
\begin{equation}\label{eq:su}
\big\|\widebar{\bm{S}}_0\bm{x}^\perp\big\|^2
=\Big\|\widebar{\bm{S}}_0\big[0~\bm{d}^\ccalT\big]^\ccalT\Big\|^2=\big\|\widebar{\bm{S}}_{0, \backslash 1}\bm{d}\big\|^2
\end{equation}
where $\widebar{\bm{S}}_{0, \backslash 1}\in\mathbb{R}^{|\widebar{\mathcal{I}}_0|\times (n-1)}$ is obtained through deleting the first column in $\widebar{\bm{S}}_0$, which is denoted by $\widebar{\bm{S}}_{0,1}$; that is, $\widebar{\bm{S}}_0=\big[\widebar{\bm{S}}_{0,1}~\widebar{\bm{S}}_{0, \backslash 1}\big]$.

The rows of $\widebar{\bm{S}}_{0, \backslash 1}$ may therefore be viewed as independent realizations of the conditional random vector $\bm{s}_{\backslash 1}^\ccalT|s_1^2>\tau$, with the threshold $\tau$ being the  $|\widebar{\mathcal{I}}_0|$-largest value in $\{y_i/\|\bm{a}_i\|^2\}_{i=1}^m$. 
Standard concentration inequalities on the sum of random positive semi-definite matrices composed of independent non-isotropic subgaussian rows~\cite[Remark 5.40]{chap2010vershynin} confirm that 
\begin{equation}\label{eq:iso}
\left\|\tfrac{1}{|\widebar{\mathcal{I}}_0|}\widebar{\bm{S}}_{0, \backslash 1}^\ccalT\widebar{\bm{S}}_{0, \backslash 1}-C_1\bm{I}_{n-1}\right\|\le \sigma C_1 \le  \frac{(1-\tau)\sigma}{n-1}
\end{equation}
holds with probability at least $1-2{\rm e}^{-c_K n}$ as long as $|\widebar{\mathcal{I}}_0|/n$ is sufficiently large, where
$\sigma$ is a numerical constant that can take arbitrarily small values, and $c_K>0$ is a universal constant. 
Without loss of generality, let us work with $\sigma:=0.005$ in~\eqref{eq:iso}.
Then for any unit vector $\bm{d}\in\mathbb{R}^{n-1}$, the following inequality holds with probability at least $1-2{\rm e}^{-c_K n}$:
\begin{equation}
\left|\tfrac{1}{|\widebar{\mathcal{I}}_0|}\bm{d}^\ccalT\widebar{\bm{S}}_{0,\backslash 1}^\ccalT\widebar{\bm{S}}_{0,\backslash 1}\bm{d}-C_1\right|\le  \frac{0.01}{n}
\end{equation}
for $n\ge 3$.
Therefore, one readily concludes that
\begin{equation}\label{eq:up}
\big\|\widebar{\bm{S}}_0\bm{x}^\perp\big\|^2=\left|(\bm{x}^\perp)^\ccalT\bm{S}^\ccalT\bm{S}\bm{x}^\perp\right|\le {1.01|\widebar{\mathcal{I}}_0|}\big/{n}
\end{equation}
holds with probability at least $1-2{\rm e}^{-c_K n}$, provided that $|\widebar{\mathcal{I}}_0|\big/n$ exceeds some constant. Note that $c_K$ depends on the maximum subgaussian norm of rows of $\bm{S}$, and we assume without loss of generality $c_K\ge 1/2$. Hence, $\|\widebar{\bm{S}}_0\bm{u}\|^2$ in~\eqref{eq:mse}
is upper bounded simply by letting $\bm{u}=\bm{x}^\perp$ in~\eqref{eq:up}.

\subsection{Proof of Lemma \ref{lem:low}}\label{sec:prooflow}

We next pursue a meaningful lower bound for $\|\widebar{\bm{S}}_0\bm{x}\|^2$ in~\eqref{eq:low0}. When $\bm{x}=\bm{e}_1$, one has $\|\widebar{\bm{S}}_0\bm{x}\|^2=\|\widebar{\bm{S}}_0\bm{e}_1\|^2=\sum_{i=1}^{|\widebar{\mathcal{I}}_0|}\bar{s}_{i,1}^2$, where $\{\bar{s}_{i,1}\}_{i=1}^{|\widebar{\mathcal{I}}_0|}$ are entries of the first column of $\widebar{\bm{S}}_0$.
It is further worth mentioning that all squared entries of any spherical random vector obey the \emph{Beta} distribution with parameters $\alpha=\frac{1}{2}$, and $\beta=\frac{n-1}{2}$, i.e., $\bar{s}^2_{i,j}\sim {\rm Beta}\!\left(\frac{1}{2},\,\frac{n-1}{2}\right)$ for all $i,\,j$,~\cite[Lemma 2]{1981ecd}. Although they have closed-form probability density functions (pdfs) that may facilitate deriving a lower bound, 
we take another route detailed as follows. A simple yet useful inequality is established first.

\begin{lemma}
	\label{lem:max}
	Given $m$ fractions obeying $1>\frac{p_1}{q_1}\ge \frac{p_2}{q_2}\ge \cdots\ge\frac{p_m}{q_m}>0$, in which $p_i,\,q_i>0$, $\forall i\in[m]$, the following holds for all $1\le k\le m$
	\begin{equation}\label{eq:ineq}
	\sum_{i=1}^{k}\frac{p_{i}}{q_{i}}\ge \sum_{i=1}^k\frac{p_{[i]}}{q_{[1]}}
	\end{equation} 
	where $p_{[i]}$ denotes the $i$-th largest one among $\{p_i\}_{i=1}^m$, and hence, $q_{[1]}$ is the maximum in $\{q_i\}_{i=1}^m$.  
\end{lemma}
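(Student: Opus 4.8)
The plan is to observe that the left-hand side $\sum_{i=1}^k p_i/q_i$ is just the sum of the $k$ \emph{largest} among the ratios $r_i:=p_i/q_i$, whereas the right-hand side is the sum of the $k$ largest numerators divided by the single largest denominator; the inequality then drops out of a term-by-term comparison. Concretely, I would first set $r_i:=p_i/q_i$, so that the hypothesis becomes $r_1\ge r_2\ge\cdots\ge r_m>0$ and $p_i=r_i q_i$ for every $i\in[m]$. Since $q_{[1]}$ denotes the largest denominator (so $q_i\le q_{[1]}$ for all $i$) and $r_i>0$, this yields the elementary bound $p_i=r_i q_i\le r_i\,q_{[1]}$, valid for each $i$.

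Next I would pass to the right-hand side of \eqref{eq:ineq}. Let $S=\{j_1,\dots,j_k\}\subseteq[m]$ be a set of $k$ distinct indices at which $\{p_j\}_{j=1}^m$ attains its $k$ largest values, so that $\sum_{i=1}^k p_{[i]}=\sum_{j\in S}p_j$. Summing the bound $p_j\le r_j q_{[1]}$ over $j\in S$ gives $\sum_{i=1}^k p_{[i]}\le q_{[1]}\sum_{j\in S}r_j$. Because $r_1\ge r_2\ge\cdots\ge r_m$, the quantity $\sum_{i=1}^k r_i$ is the maximum of $\sum_{j\in S}r_j$ over all $k$-element index sets $S$, hence $\sum_{j\in S}r_j\le\sum_{i=1}^k r_i$. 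Chaining the two inequalities, $\sum_{i=1}^k p_{[i]}\le q_{[1]}\sum_{i=1}^k r_i=q_{[1]}\sum_{i=1}^k p_i/q_i$, and dividing by $q_{[1]}>0$ produces exactly $\sum_{i=1}^k p_i/q_i\ge\sum_{i=1}^k p_{[i]}/q_{[1]}$, which is \eqref{eq:ineq}.

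The argument is short and I do not anticipate a genuine obstacle; the only two points deserving a line of care are (i) the rearrangement step $\sum_{j\in S}r_j\le\sum_{i=1}^k r_i$ for an arbitrary $k$-subset $S$, which is immediate from the monotone ordering of the $r_i$, and (ii) the fact that $q_{[1]}=\max_{i\in[m]}q_i$, which is part of the notation in the statement. Note that the constraint $r_i<1$ is not needed here—it is only invoked in the subsequent use of this lemma within the proof of Lemma~\ref{lem:low}.
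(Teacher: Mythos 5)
Your proof is correct and follows essentially the same route as the paper's: both arguments combine the bound $p_j/q_j\ge p_j/q_{[1]}$ (your $p_j\le r_j q_{[1]}$, rearranged) over the index set of the $k$ largest numerators with the observation that, by the descending ordering of the ratios, $\sum_{i=1}^k p_i/q_i$ dominates the sum of the ratios over any $k$-element index set. Your closing remarks on the two points needing care (the rearrangement step and the meaning of $q_{[1]}$) are accurate, and the observation that the hypothesis $p_i/q_i<1$ is not needed here is also correct.
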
 

\begin{proof}
	%
	For any $k\in[m]$, according to the definition of $q_{[i]}$, it holds that $p_{[1]}\ge p_{[2]}\ge \cdots\ge p_{[k]}$, so $\frac{p_{[1]}}{q_{[1]}}\ge \frac{p_{[2]}}{q_{[1]}}\ge \cdots\ge \frac{p_{[k]}}{q_{[1]}}$. Considering $q_{[1]}\ge q_{i}$, $\forall i\in [m]$, and letting $j_i\in[m]$ be the index such that $p_{j_{i}}=p_{[i]}$,  
	then $\frac{p_{j_i}}{q_{j_i}}=\frac{p_{[i]}}{q_{j_i}}\ge \frac{p_{[i]}}{q_{[1]}}$ holds for any $i\in[k]$. Therefore, 
	$\sum_{i=1}^k\frac{p_{j_i}}{q_{j_i}}=
	\sum_{i=1}^k\frac{p_{[i]}}{q_{j_i}}\ge \sum_{i=1}^k\frac{p_{[i]}}{q_{[1]}}$. Note that $\left\{\frac{p_{[i]}}{q_{j_i}}\right\}_{i=1}^k$ comprise a subset of terms in $\left\{\frac{p_i}{q_i}\right\}_{i=1}^m$. On the other hand, according to our assumption, $\sum_{i=1}^k\frac{p_i}{q_i}$ is the largest among all sums of $k$ summands; hence, $\sum_{i=1}^k\frac{p_i}{q_i}\ge \sum_{i=1}^k\frac{p_{[i]}}{q_{j_i}}$ yields
	$\sum_{i=1}^k\frac{p_i}{q_i}\ge \sum_{i=1}^k\frac{p_{[i]}}{q_{[1]}}$ concluding the proof.   
\end{proof}

Without loss of generality and for simplicity of exposition, let us assume that indices of $\bm{a}_i$'s have been re-ordered such that
\begin{equation}\label{eq:allai}
	\frac{a_{1,1}^2}{\left\|\bm{a}_{1}\right\|^2}\ge \frac{a_{2,1}^2}{\left\|\bm{a}_{2}\right\|^2}\ge \cdots \ge \frac{a_{m,1}^2}{\left\|\bm{a}_{m}\right\|^2},
\end{equation}
where $a_{i,1}$ denotes the first element of $\bm{a}_i$.
Therefore, writing
$	\|\widebar{\bm{S}}_0\bm{e}_1\|^2=\sum_{i=1}^{|\widebar{\mathcal{I}}_0|}a_{i,1}^2/\|\bm{a}_{i}\|^2
$, the next task  
amounts to finding the sum of the $|\widebar{\mathcal{I}}_0|$ largest out of all $m$ entities in \eqref{eq:allai}.
Applying the result \eqref{eq:ineq} in Lemma \ref{lem:max} gives
\begin{equation}\label{eq:mid}
	\sum_{i=1}^{|\widebar{\mathcal{I}}_0|}\frac{a_{i,1}^2}{\left\|\bm{a}_{i}\right\|^2}\ge \sum_{i=1}^{|\widebar{\mathcal{I}}_0|}\frac{a_{[i],1}^2}{\max_{i\in[m]}\left\|\bm{a}_i\right\|^2},
\end{equation}  
in which $a_{[i],1}^2$ stands for the $i$-th largest entity in $\left\{a^2_{i,1}\right\}_{i=1}^m$. 

Observe that for i.i.d. random vectors $\bm{a}_i\sim\mathcal{N}\big(\bm{0},\bm{I}_n\big)$, the property $\mathbb{P}(\left\|\bm{a}_i\right\|^2\ge 2.3n)\le {\rm e}^{-n/2}$ holds for large enough $n$ (e.g., $n\ge 20$),
which can be understood upon substituting $\xi:=n/2$ into the following standard result~\cite[Lemma 1]{chisquaretail}
\begin{equation}
\mathbb{P}\left(\left\|\bm{a}_i\right\|^2-n\ge 2\sqrt{\xi}+2\xi \right)\le {\rm e}^{-\xi}.
\end{equation}  
In addition, one readily concludes that  
$\mathbb{P}\left(\max_{i\in[m]}\left\|\bm{a}_i\right\|\le \sqrt{2.3n}\right)\ge 1-m{\rm e}^{-n/2}$. We will henceforth build our subsequent proofs on this event without stating this explicitly each time encountering it. Therefore, \eqref{eq:mid} can be lower bounded by
\begin{align}
			\big\|\widebar{\bm{S}}\bm{x}\big\|^2=
			\sum_{i=1}^{|\widebar{\mathcal{I}}_0|}\frac{a_{i,1}^2}{\left\|\bm{a}_{i}\right\|^2}
	&\ge \sum_{i=1}^{|\widebar{\mathcal{I}}_0|}\frac{a_{[i],1}^2}{\max_{i\in[m]}\left\|\bm{a}_i\right\|^2} \frac{1}{2.3n}\sum_{i=1}^{|\widebar{\mathcal{I}}_0|}{\left|a_{[i],1}\right|^2}\label{eq:down}
\end{align}
which holds with probability at least $1-m{\rm e}^{-n/2}$. 
The task left for bounding $\|\widebar{\bm{S}}\bm{x}\|^2$
is to derive a meaningful lower bound for $\sum_{i=1}^{|\widebar{\mathcal{I}}_0|}{a_{[i],1}^2}$. 
Roughly speaking, because the ratio ${|\widebar{\mathcal{I}}_0|}/{m}$ is small, e.g., ${|\widebar{\mathcal{I}}_0|}/{m}\le 1/5$, a trivial result consists of bounding $(1/|\widebar{\mathcal{I}}_0|) \sum_{i=1}^{|\widebar{\mathcal{I}}_0|}{a_{[i],1}^2}$ by its sample average 
$(1/m) \sum_{i=1}^{m}{a_{[i],1}^2}$.
The latter can be bounded using its ensemble mean, i.e., $\mathbb{E}[a_{i,1}^2]=1$, $\forall i\in[\widebar{\mathcal{I}}_0]$, to yield $(1/m) \sum_{i=1}^{m}{a_{[i],1}^2}\ge (1-\epsilon)\mathbb{E}[a_{i,1}^2]=1-\epsilon$, which holds with high probability for some numerical constant $\epsilon>0$~\cite[Lemma 3.1]{phaselift}. Therefore, one has a candidate lower bound $\sum_{i=1}^{|\widebar{\mathcal{I}}_0|}{a_{[i],1}^2}\ge (1-\epsilon) |\widebar{\mathcal{I}}_0|$. 
Nonetheless, this lower bound is in general too loose, and it contributes to a relatively large upper bound on the wanted term in \eqref{eq:mse}.

To obtain an alternative bound, let us examine first the typical size of the maximum in $\left\{a_{i,1}^2\right\}_{i=1}^m$. Observe obviously that the modulus $\left|a_{i,1}\right|$ follows the half-normal distribution having the pdf $p(r)=\sqrt{{2}/{\pi}}\cdot {\rm e}^{-{r^2}/{2}}$, $r> 0$, and it is easy to verify that 
\begin{equation}\label{eq:halfnormalmean}
\mathbb{E}[|a_{i,1}|]=\sqrt{2/\pi}.
\end{equation}
Then integrating the pdf from $0$ to $+\infty$ yields the corresponding accumulative distribution function~(cdf) expressible in terms of the error function 
$	\mathbb{P}\left(\left|a_{i,1}\right|>\xi\right)=1-{\rm erf}\left({\xi}/{2}\right)$, i.e., ${\rm erf}\left(\xi\right):={2}/{\sqrt{\pi}}\cdot \int_{0}^\xi{\rm e}^{-r^2}{\rm d}r$. Appealing to a lower bound on the complimentary error function ${\rm erfc}\left(\xi\right):=1-{\rm erf}\left(\xi\right)$ from~\cite[Theorem 2]{erf2011}, one establishes that 
 $\mathbb{P}\left(\left|a_{i,1}\right|>\xi\right)=1-{\rm erf}\left({\xi}/{2}\right)\ge (3/5){\rm e}^{-{\xi^2}/{2}}
$. 
 Additionally, direct application of probability theory and Taylor expansion confirms that
\begin{align}\label{eq:highp}
\mathbb{P}\big(\max_{i\in[m]}\left|a_{i,1}\right|\ge \xi\big)&=1-\left[\mathbb{P}\left(\left|a_{i,1}\right|\le \xi\right)\right]^m\nonumber\\
&\ge 1-\left(1-0.6{\rm e}^{-{\xi^2}/{2}}\right)^m\nonumber\\
&\ge 1-{\rm e}^{-{0.6m}{\rm e}^{-\xi^2/2}}.
\end{align}
Choosing now $\xi:=\sqrt{2\log n}$ leads to 
\begin{equation}
	\mathbb{P}\big(\max_{i\in[m]}\left|a_{i,1}\right|\ge \sqrt{2\log n}\big)\ge 1-{\rm e}^{-0.6m/n}\ge 1-o(1)
	\label{eq:logn}	
\end{equation}
which holds with the proviso that $m/n$ is large enough, and the symbol $o(1)$ represents a small constant probability. 
Thus, provided that $m/n$ exceeds some large constant, the event $\max_{i\in [m]} a_{i,1}^2\ge 2\log n$ occurs with high probability. Hence, one may expect a tighter lower bound than $(1-\epsilon_0)|\widebar{\mathcal{I}}_0|$, which is on the same order of $m$ under the assumption that $|\widebar{\mathcal{I}}_0|/m$ is about a constant.


Although $a_{i,1}^2$ obeys the \emph{Chi-square} distribution with $k=1$ degrees of freedom, its cdf is rather complicated and does not admit a nice closed-form expression. A small trick is hence taken in the sequel. Assume without loss of generality that both $m$ and $|\widebar{\mathcal{I}}_0|$ are even. Grouping two consecutive $a_{[i],1}^2$'s together, introduce a new variable $\vartheta{[i]}:=a_{[2k-1],1}^2+a_{[2k],1}^2$, $\forall k\in[{m}/{2}]$, 
hence yielding a sequence of ordered numbers, i.e., $\vartheta_{[1]}\ge \vartheta_{[2]}\ge \cdots \ge \vartheta_{[m/2]}>0
$. Then, one can equivalently write the wanted sum as 
\begin{equation}
\sum_{i=1}^{|\widebar{\mathcal{I}}_0|}a_{[i],1}^2=\sum_{i=1}^{|\widebar{\mathcal{I}}_0|/2}\vartheta_{[i]}. 
\label{eq:rsum}	
\end{equation}

On the other hand, for i.i.d. standard normal random variables $\left\{a_{i,1}\right\}_{i=1}^m$, let us consider grouping randomly two of them and denote the corresponding sum of their squares by $\chi_{k}:=a_{k_i,1}^2+a_{k_j,1}^2$, where $k_i\ne k_j\in[m]$, and $k\in[m/2]$. It is self-evident that the $\chi_k$'s are identically distributed obeying the \emph{Chi-square} distribution with $k=2$ degrees of freedom, having the pdf 
\begin{equation}\label{eq:chipdf}
p\left(r\right)=\frac{1}{2}{\rm e}^{-\frac{r}{2}},\quad r\ge 0,
\end{equation}
and the following complementary cdf (ccdf)
\begin{equation}\label{eq:chiccdf}
\mathbb{P}\!\left(\chi_k\ge \xi\right):=\int_{\xi}^{\infty}\frac{1}{2}{\rm e}^{-\frac{r}{2}}{\rm d}r={\rm e}^{-\frac{\xi}{2}},\quad\forall \xi\ge 0.
\end{equation}
Ordering all $\chi_k$'s, summing the $|\widebar{\mathcal{I}}_0|/2$ largest ones, and comparing the resultant sum with the one in~\eqref{eq:rsum} confirms that 
\begin{equation}\label{eq:relation}
\sum_{i=1}^{|\widebar{\mathcal{I}}_0|/2}\chi_{[i]}\le \sum_{i=1}^{|\widebar{\mathcal{I}}_0|/2}\vartheta_{[i]}=\sum_{i=1}^{|\widebar{\mathcal{I}}_0|}a_{[i],1}^2,\quad \forall |\widebar{\mathcal{I}}_0|\in [m].
\end{equation} 

Upon setting $\mathbb{P}\!\left(\chi_k\ge \xi\right)={|\widebar{\mathcal{I}}_0|}/{m}$, one obtains an estimate of $\chi_{|\widebar{\mathcal{I}}_0|/2}$, the $(|\widebar{\mathcal{I}}_0|/2)$-th largest value in $\left\{\chi_k\right\}_{k=1}^{m/2}$ as follows
\begin{equation}
\hat{\chi}_{|\widebar{\mathcal{I}}_0|/2}:=2\log\big(m\big/|\widebar{\mathcal{I}}_0|\big).
\end{equation}
Furthermore, applying the Hoeffding-type inequality~\cite[Proposition 5.10]{chap2010vershynin} and leveraging the convexity of the ccdf in \eqref{eq:chiccdf}, one readily establishes that
\begin{equation}\label{eq:hoeffding1}
\mathbb{P}\!\left(\hat{\chi}_{|\widebar{\mathcal{I}}_0|/2}-\chi_{|\widebar{\mathcal{I}}_0|/2}>\xi \right)\le {\rm e}^{-\frac{1}{4}m\xi^2{\rm e}^{-\xi}(|\widebar{\mathcal{I}}_0|/m)^2},\quad\forall \xi>0.
\end{equation}
Taking without loss of generality $\xi:=0.05 \hat{\chi}_{|\widebar{\mathcal{I}}_0|/2}=0.1\log\big(m\big/|\widebar{\mathcal{I}}_0|\big)$ gives
\begin{equation}\label{eq:hoeffding}
\mathbb{P}\!\left(\chi_{|\widebar{\mathcal{I}}_0|/2}<0.95\hat{\chi}_{|\widebar{\mathcal{I}}_0|/2} \right)\le {\rm e}^{-c_{0}m
}
\end{equation}
for some universal constants $c_{0},\, c_{\chi}>0$, and sufficiently large $n$ such that ${|\widebar{\mathcal{I}}_0|}/{m}\gtrsim c_{\chi}>0$. 
The remaining part in this section assumes that this event 
 occurs.
 
 
Choosing $\xi:=4\log n$ and substituting this into the ccdf in \eqref{eq:chiccdf} leads to
\begin{equation}\label{eq:chibound}
	\mathbb{P}\left(\chi\le 4\log n \right)= 
	1-1/n^2.
	\end{equation}
Notice that each summand in $\sum_{i=1}^{|\widebar{\mathcal{I}}_0|/2}\chi_{[i]}\ge \sum_{i=1}^{m/2}\chi_i\mathbb{1}_{\tilde{\mathcal{E}}_i}$ is Chi-square distributed, and hence could be unbounded, so we choose to work with the truncation $\sum_{i=1}^{m/2}\chi_i\mathbb{1}_{\tilde{\mathcal{E}}_i}$, where the $\mathbb{1}_{\tilde{\mathcal{E}}_i}$'s are independent copies of $\mathbb{1}_{\tilde{\mathcal{E}}}$, and $\mathbb{1}_{\tilde{\mathcal{E}}}$
denotes the indicator function for the ensuing events
\begin{equation}\label{eq:event}
	\tilde{\mathcal{E}}:=\left\{\chi\ge \hat{\chi}_{|\widebar{\mathcal{I}}_0|/2}\right\}\cap\left\{\chi\le 4\log n \right\}.
\end{equation}
Apparently, it holds that $\sum_{i=1}^{|\widebar{\mathcal{I}}_0|/2}\chi_{[i]}\ge \sum_{i=1}^{m/2}\chi_i\mathbb{1}_{\tilde{\mathcal{E}}_i}$.
One further establishes that 
\begin{align}\label{eq:tmean}
		\mathbb{E}\left[\chi_i\mathbb{1}_{\tilde{\mathcal{E}}_i}\right]:\!&=\int_{\hat{\chi}_{|\widebar{\mathcal{I}}_0|/2}}^{4\log n} \frac{1}{2}r{\rm e}^{-r/2}{\rm d}r\nonumber\\
		&=\left( \hat{\chi}_{|\widebar{\mathcal{I}}_0|/2}\!+2\right){\rm e}^{-{\hat{\chi}_{|\widebar{\mathcal{I}}_0|/2}}/{2}} \!-\left(4\log n+2\right){\rm e}^{-2\log n}\nonumber\\
		&=\frac{2|\widebar{\mathcal{I}}_0|}{m}\left[1+\log \big(m\big/|\widebar{\mathcal{I}}_0\big)\right]-\frac{\left(4\log n + 2\right)}{n^2}.
\end{align}

The task of bounding $\sum_{i=1}^{|\widebar{\mathcal{I}}_0|}a_{[i],1}^2$ in~\eqref{eq:relation} 
now boils down to bounding $\sum_{i=1}^{m/2}\chi_i\mathbb{1}_{\tilde{\mathcal{E}}_i}$ from its expectation in \eqref{eq:tmean}.    
A convenient way to accomplish this is using the Bernstein inequality~\cite[Proposition 5.16]{chap2010vershynin}, that deals with bounded random variables. That also justifies introducing the upper-bound truncation on $\chi$ in \eqref{eq:event}. Specifically, define
\begin{equation}
	\label{eq:zeta}
	\vartheta_i:=\chi_i\mathbb{1}_{\tilde{\mathcal{E}}_i}-\mathbb{E}\left[\chi_i\mathbb{1}_{\tilde{\mathcal{E}}_i}\right],\quad 1\le i\le m/2.
\end{equation}
Thus, $\left\{\vartheta_i\right\}_{i=1}^{m/2}$ are i.i.d. centered and bounded random variables following from the mean-subtraction and the upper-bound truncation. Further, according to the ccdf~\eqref{eq:chiccdf} and the definition of sub-exponential random variables~\cite[Definition 5.13]{chap2010vershynin}, the terms $\left\{\vartheta_i\right\}_{i=1}^{m/2}$ are sub-exponential. 
Then, the following 
\begin{equation}
	\label{eq:meanbound}
	\Big|\sum_{i=1}^{m/2}\vartheta_i\Big|\ge \tau
\end{equation}
holds with probability at least $1-2{\rm e}^{-c_s\min\left({\tau}/{K_s},{\tau^2}/{K_s^2}\right)}$, in which $c_s>0$ is a universal constant, and $K_s:=\max_{i\in [m/2]}\|\vartheta_i\|_{\psi_1}$ represents the maximum subexponential norm of the $\vartheta_i$'s. 

Indeed, $K_s$ can be found as follows~\cite[Definition 5.13]{chap2010vershynin}:
\begin{align}\label{eq:subnorm}
	K_s:\!&=\sup_{p\ge 1}p^{-1}\left(\mathbb{E}\left[|\vartheta_i|^p\right]\right)^{1/p}\nonumber\\
	&\le \left(4\log n-2\log\big(m\big/|\widebar{\mathcal{I}}_0|\big)\right) \left[|\widebar{\mathcal{I}}_0|\big/m-1/n^2\right]\nonumber\\
	&\le \frac{2|\widebar{\mathcal{I}}_0|}{m}\log\left(n^2|\widebar{\mathcal{I}}_0|\big/m\right)\nonumber\\
	&\le \frac{4|\widebar{\mathcal{I}}_0|}{m}\log n.
\end{align}
Choosing $\tau:=8|\widebar{\mathcal{I}}_0|/(c_sm)\cdot\log^2 n$ in \eqref{eq:meanbound} yields
\begin{align}
	\label{eq:zetabound}
	\sum_{i=1}^{m/2}\chi_i\mathbb{1}_{\tilde{\mathcal{E}}_i}&\ge |\widebar{\mathcal{I}}_0|\left[1+\log \big(m\big/|\widebar{\mathcal{I}}_0|\big)\right]-8|\widebar{\mathcal{I}}_0|/(c_sm)\cdot\log^2 n
	-{m\left(2\log n + 1\right)}/{n^2}\nonumber\\
	&\ge (1-\epsilon_s)|\widebar{\mathcal{I}}_0|\left[1+\log \big(m\big/|\widebar{\mathcal{I}}_0|\big)\right]
\end{align}
for some small constant $\epsilon_s >0$, which 
holds with probability at least $1-m{\rm e}^{-n/2}-{\rm e}^{-c_{0}m}-1/n^2$ as long as $m/n$ exceeds some numerical constant and $n$ is sufficiently large.
Therefore, combining~\eqref{eq:down},~\eqref{eq:relation}, and~\eqref{eq:zetabound}, 
one concludes that the following holds with high probability
\begin{equation}\label{eq:low}
\big\|\widebar{\bm{S}}_0\bm{x}\big\|^2=\sum_{i=1}^{|\widebar{\mathcal{I}}_0|}\frac{a_{i,1}^2}{\left\|\bm{a}_{i}\right\|^2}\ge (1-\epsilon_s)\frac{|\widebar{\mathcal{I}}_0|}{2.3n}\left[1+\log \big(m\big/|\widebar{\mathcal{I}}_0|\big)\right].
\end{equation}
Taking $\epsilon_s:=0.01$ without loss of generality concludes the proof of Lemma~\ref{lem:low}.  


\subsection{Proof of Lemma~\ref{le:1stterm}}\label{proof:1stterm}
%

Let us first prove the argument for a fixed pair $\bm{h}$ and $\bm{x}$, such that $\bm{h}$ and $\bm{z}$ are independent of $\{\bm{a}_i\}_{i=1}^m$, and then apply a covering argument. 
To start, introduce a Lipschitz-continuous counterpart for the discontinuous indicator function~\cite[A.2]{twf}
    \begin{equation}
    	\chi_E(\theta):=\left\{\begin{array}
    		{lll}
    		1,&|\theta|\ge \frac{\sqrt{1.01}}{1+\gamma},\\
    		{100(1+\gamma)^2\theta^2-100},&\frac{1}{1+\gamma}\le 
    		|\theta|<\frac{\sqrt{1.01}}{1+\gamma},\\
    		0,&|\theta|<\frac{1}{1+\gamma}
    	\end{array}\right.
    \end{equation}
    with Lipschitz constant $\mathcal{O}(1)$. 
Recall that $\mathcal{E}_i=\left\{\left|\frac{\bm{a}_i^\ccalT\bm{z}}{\bm{a}_i^\ccalT\bm{x}}\right|\ge \frac{1}{1+\gamma}\right\}$, so it holds that $0\le \chi_E\left(\left|\frac{\bm{a}_i^\ccalT\bm{z}}{\bm{a}_i^\ccalT\bm{x}}\right|\right)\le \mathbb{1}_{\mathcal{E}_i}$ for any $\bm{x}\in\mathbb{R}^n$ and $\bm{h}\in\mathbb{R}^n$, thus yielding
\begin{align}
	\frac{1}{m}\sum_{i=1}^m\left(\bm{a}_i^\ccalT\bm{h}\right)^2\mathbb{1}_{\mathcal{E}_i}&\ge \frac{1}{m}\sum_{i=1}^m\left(\bm{a}_i^\ccalT\bm{h}\right)^2\chi_E\left(\left|\frac{\bm{a}_i^\ccalT\bm{z}}{\bm{a}_i^\ccalT\bm{x}}\right|\right)
	=\frac{1}{m}\sum_{i=1}^m\left(\bm{a}_i^\ccalT\bm{h}\right)^2\chi_E\left(\left|1+\frac{\bm{a}_i^\ccalT\bm{h}}{\bm{a}_i^\ccalT\bm{x}}\right|\right)
	.\label{eq:1term2}
\end{align}

By homogeneity and rotational invariance of normal distributions, it suffices to prove the case where $\bm{x}=\bm{e}_1$ and $\|\bm{h}\|/\|\bm{x}\|=\|\bm{h}\|\le \rho$. 
According to~\eqref{eq:1term2}, lower bounding the first term in~\eqref{eq:target} can be achieved by lower bounding $\sum_{i=1}^m(\bm{a}_i^\ccalT\bm{h})^2\chi_E\left( \left|1+\frac{\bm{a}_i^\ccalT\bm{h}}{\bm{a}_i^\ccalT\bm{x}}\right|\right)$ instead. To that end, let us find the mean of $\left(\bm{a}_i^\ccalT\bm{h}\right)^2\chi_E\left( \left|1+\frac{\bm{a}_i^\ccalT\bm{h}}{\bm{a}_i^\ccalT\bm{x}}\right|\right)$. Note that $\left(\bm{a}_i^\ccalT\bm{h}\right)^2$ and $\chi_E\left( \left|1+\frac{\bm{a}_i^\ccalT\bm{h}}{\bm{a}_i^\ccalT\bm{x}}\right|\right)$ are dependent. Introduce an orthonormal matrix $\bm{U}_{\bm{h}}$ that contains $\bm{h}^\ccalT/\|\bm{h}\|$ as its first row, i.e.,
\begin{equation}
	\bm{U}_{\bm{h}}:=\left[\begin{array}
		{c}
		\bm{h}^\ccalT/\|\bm{h}\|\\
		\widetilde{\bm{U}}_{\bm{h}}
	\end{array}
	\right]
\end{equation} 
for some orthogonal matrix $\widetilde{\bm{U}}_{\bm{h}}\in\mathbb{R}^{(n-1)\times n}$ such that $\bm{U}_{\bm{h}}$ is orthonormal. Moreover, define $\tilde{\bm{h}}:=\bm{U}_{\bm{h}}\bm{h}$, and $\tilde{\bm{a}}_i:=\bm{U}_{\bm{h}}\bm{a}_i$; and let $\tilde{a}_{i,1}$ and $\tilde{\bm{a}}_{i,\backslash 1}$ denote the first entry and the remaining entries in the vector $\tilde{\bm{a}}_i$; likewise for $\tilde{\bm{h}}$. Then, for any $\bm{h}$ such that $\|\bm{h}\|\le \rho$, we have
\begin{align}
	&\mathbb{E}\left[(\bm{a}_i^\ccalT\bm{h})^2\chi_E\left( \left|1+\frac{\bm{a}_i^\ccalT\bm{h}}{\bm{a}_i^\ccalT\bm{x}}\right|\right)\right]\nonumber\\
&=\mathbb{E}\!\left[(\tilde{a}_{i,1}\tilde{h}_1)^2\chi_E\!\left(\left|1\!+\!
\frac{\bm{a}_{i}^\ccalT\bm{h}}{\bm{a}_i^\ccalT\bm{x}}\right|\right)\right]\!+\!\mathbb{E}\!\left[(\tilde{\bm{a}}_{i,\backslash 1}^\ccalT\tilde{\bm{h}}_{\backslash 1})^2\chi_E\!\left(\left|1\!+\!\frac{\bm{a}_i^\ccalT\bm{h}}{\bm{a}_i^\ccalT\bm{x}}\right|\right)\right]\nonumber\\
	&=\tilde{h}_1^2\,\mathbb{E}\left[\tilde{a}_{i,1}^2\,\chi_E\left( \left|1+\frac{\bm{a}_i^\ccalT\bm{h}}{a_{i,1}}\right|\right)\right]
	+\mathbb{E}\left[(\tilde{\bm{a}}_{i,\backslash 1}^\ccalT\tilde{\bm{h}}_{\backslash 1})^2\right]\mathbb{E}\!\left[\chi_E\left( \left|1+
	\frac{\bm{a}_i^\ccalT\bm{h}}{a_{i,1}}
	\right|\right)\right]
	\nonumber\\
	&=\tilde{h}_1^2\mathbb{E}\!\left[\tilde{a}_{i,1}^2\chi_E\!\left( \left|1+\frac{\bm{a}_i^\ccalT\bm{h}}{a_{i,1}}\right|\right)\right]+\big\|\tilde{\bm{h}}_{\backslash 1}\big\|^2
	\mathbb{E}\!\left[\chi_E\!\left( \left|1\!+\!\frac{\bm{a}_i^\ccalT\bm{h}}{a_{i,1}}\right|\right)\right]\nonumber\\
	&\ge \left(\tilde{h}_1^2\!+\!\|\tilde{\bm{h}}_{\backslash 1}\|^2\right)\min\left\{\mathbb{E}\left[{a}_{i,1}^2\chi_E\left( \left|1+h_1+
	\frac{\bm{a}_{i,\backslash 1}^\ccalT\bm{h}_{\backslash 1}}{a_{i,1}}\right|\right)\right],\mathbb{E}\left[\chi_E\left(\left|1+h_1+\frac{\bm{a}_{i,\backslash 1}^\ccalT\bm{h}_{\backslash 1}}{a_{i,1}}\right|\right)\right]
	\right\}\nonumber\\
	&\ge \|\bm{h}\|^2\min\bigg\{\mathbb{E}\left[a_{i,1}^2\chi_E\left(\left|1-\rho+\frac{a_{i,2}}{a_{i,1}}\rho\right|
	\right)
	\right],\mathbb{E}\left[\chi_E\left(1-\rho+\frac{a_{i,2}}{a_{i,1}}\rho
	\bigg)
	\right]
	\right\}
	\nonumber\\
	&= (1-\zeta_1)\|\bm{h}\|^2\label{eq:1bound}
\end{align}
where the second equality follows from the independence between $\tilde{\bm{a}}_{i,\backslash 1}^\ccalT\tilde{\bm{h}}_{\backslash 1}$ and $\bm{a}_i^\ccalT\bm{h}$, the second inequality holds for $\rho\le 1/10$ and $\gamma\ge 1/2$, 
 and the last equality comes from the definition of $\zeta_1 $ in~\eqref{eq:zeta}. Notice that $\varrho:=(\bm{a}_i^\ccalT\bm{h})^2\chi_E\left( \left|1+\frac{\bm{a}_i^\ccalT\bm{h}}{\bm{a_i}^\ccalT\bm{x}}\right|\right)\le (\bm{a}_i^\ccalT\bm{h})^2\eqdef \|\bm{h}\|^2a_{i,1}^2$ is a subexponential variable, and thus its subexponential norm $\|\varrho\|_{\psi_1}:=\sup_{p\ge 1}\left[\mathbb{E}(|\varrho|^p)\right]^{1/p}$ is finite. 
 
 Direct application of the Berstein-type inequality~\cite[Proposition 5.16]{chap2010vershynin} confirms that for any $\epsilon>0$, the following
 \begin{align}
 \frac{1}{m}\sum_{i=1}^m\left(\bm{a}_i^\ccalT\bm{h}\right)^2\chi_E\left(\left|1+\frac{\bm{a}_i^\ccalT\bm{h}}{\bm{a}_i^\ccalT\bm{x}}\right|\right)
 &\ge \mathbb{E}\left[\left(\bm{a}_i^\ccalT\bm{h}\right)^2\chi_E\left(\left|1+\frac{\bm{a}_i^\ccalT\bm{h}}{\bm{a}_i^\ccalT\bm{x}}\right|\right)\right]-\epsilon\|\bm{h}\|^2
 \nonumber\\
&\ge   \left(1-\zeta_1-\epsilon\right)\|\bm{h}\|^2 	\label{eq:1sttermbound}
  \end{align} 
holds with probability at least $1-{\rm e}^{-c_5m\epsilon^2}$ for some numerical constant $c_5>0$ provided that $\epsilon\le \|\varrho\|_{\psi_1}$ by assumption. 

To obtain uniform control over all vectors $\bm{z}$ and $\bm{x}$ such that $\|\bm{z}-\bm{x}\|\le \rho$, 
the net covering argument is applied~\cite[Definition 5.1]{chap2010vershynin}. 
Let $\mathcal{S}_\epsilon$ be an $\epsilon$-net of the unit sphere, $\mathcal{L}_\epsilon$ be an $\epsilon$-net of $[0,\,\rho]$, and define
\begin{equation}
	\mathcal{N}_{\epsilon}:=\left\{\left(\bm{z},\,\bm{h},\,t\right):\left(\bm{z}_0,\,\bm{h}_0,\,t_0\right)\in\mathcal{S}_\epsilon\times\mathcal{S}_\epsilon\times\mathcal{L}_\epsilon
	\right\}.
\end{equation} 
Since the cardinality $\left|\mathcal{S}_\epsilon\right|\le \left(1+2/\epsilon\right)^{n}$~\cite[Lemma 5.2]{chap2010vershynin}, then
\begin{equation}
	\left|\mathcal{N}_\epsilon\right|\le \left(
	1+2/\epsilon\right)^{2n}\rho/\epsilon\le \left(1+2/\epsilon\right)^{2n+1}
\end{equation}
due to the fact that $\rho/\epsilon< 2/\epsilon< 1+2/\epsilon$ for $0< \rho< 1$.

Consider now any $\left(\bm{z},\,\bm{h},\,t\right)$ obeying $\|\bm{h}\|=t\le \rho$. There exists a pair $\left(\bm{z}_0,\,\bm{h}_0,\,t_0\right)\in\mathcal{N}_\epsilon$ such that $\left\|\bm{z}-\bm{z}_0
\right\|$, $\|\bm{h}-\bm{h}_0\|$, and $|t-t_0|$ are each at most $\epsilon$. Taking the union bound yields
\begin{align}
	 \frac{1}{m}\sum_{i=1}^m\left(\bm{a}_i^\ccalT\bm{h}_0\right)^2\chi_E\left(\left|1+\frac{\bm{a}_i^\ccalT\bm{h}_0}{\bm{a}_i^\ccalT\bm{x}}\right|\right)
	 &\ge  \frac{1}{m}\sum_{i=1}^m\left(\bm{a}_i^\ccalT\bm{h}_0\right)^2\chi_E\left(\left|1-t_0+\frac{{a}_{i,2}}{{a}_{i,1}}t_0\right|\right)\nonumber\\
& \ge   \left(1-\zeta_1-\epsilon \right)\|\bm{h}_0\|^2,\quad \forall \left(\bm{z}_0,\,\bm{h}_0,\,t_0\right)\in\mathcal{N}_\epsilon
\end{align} 
with probability at least $1-\left(1+2/\epsilon\right)^{2n+1}{\rm e}^{-c_5 \epsilon^2m}\ge 1-{\rm e}^{-c_0m}$, which follows by choosing $m$ such that $m\ge \left(c_6\cdot\epsilon^{-2}\log\epsilon^{-1}\right) n$ for some constant $c_6>0$.

Recall that $\chi_E\left(\tau\right)$ is Lipschitz continuous, thus 
\begin{align}
&\bigg|
\frac{1}{m}\sum_{i=1}^m
\left(\bm{a}_i^\ccalT\bm{h}\right)^2\chi_E\left(\left|1+\frac{\bm{a}_i^\ccalT\bm{h}}{\bm{a}_i^\ccalT\bm{x}}\right|\right)-
\left(\bm{a}_i^\ccalT\bm{h}_0\right)^2\chi_E\left(\left|1+\frac{\bm{a}^\ccalT\bm{h}_0}{\bm{a}_i^\ccalT\bm{x}}\right|\right)
\bigg|
\nonumber\\
&\lesssim \frac{1}{m}\sum_{i=1}^m\left|\left(\bm{a}_i^\ccalT\bm{h}\right)^2-\left(\bm{a}_i^\ccalT\bm{h}_0\right)^2
\right|
\nonumber\\
&= \frac{1}{m}\sum_{i=1}^m\left|\bm{a}_i^\ccalT\left(\bm{h}\bm{h}^\ccalT-\bm{h}_0\bm{h}_0^\ccalT\right)\bm{a}_i
\right|\nonumber\\
&\lesssim c_7\sum_{i=1}^m\left|\bm{h}\bm{h}^\ccalT-\bm{h}_0\bm{h}_0^\ccalT
\right|\nonumber\\
&\le 2.5 c_{7}\left\|\bm{h}-\bm{h}_0\right\|\left\|\bm{h}\right\|\nonumber\\
&\le 2.5c_7\rho \epsilon
\end{align}
for some numerical constant $c_7$ and provided that $\epsilon<1/2$ and  $m\ge \left(c_6\cdot\epsilon^{-2}\log\epsilon^{-1}\right) n$,  
where the first inequality arises from the Lipschitz property of $\chi_E(\tau)$, the second uses the results in Lemma 1 in \cite{twf}, and the third from Lemma 2 in \cite{twf}.

Putting all results together confirms that with probability exceeding $1-2{\rm e}^{-c_0m}$, we have
\begin{align}
	&\frac{1}{m}\sum_{i=1}^m\left(\bm{a}_i^\ccalT\bm{h}\right)^2\chi_E\left(\left|1+\frac{\bm{a}_i^\ccalT\bm{h}}{\bm{a}_i^\ccalT\bm{x}}\right|\right)
\ge   \left[1-\zeta_1-\left(1+2.5c_7\rho\right)\epsilon\right]\left\|\bm{h}\right\|^2
\end{align} 
for all vectors $\left\|\bm{h}\right\|/\left\|\bm{x}\right\|\le\rho $, concluding the proof.

\subsection{Proof of Lemma~\ref{le:2ndterm}}\label{sec:rare}
Similar to the proof in Section~\ref{proof:1stterm}, it is convenient to work with the following auxiliary function instead of the discontinuous indicator function
    \begin{equation}
    	\chi_D(\theta)\!:=\!\left\{\!\!\begin{array}
    		{ll}
    		1,&|\theta|\ge\frac{2+\gamma}{1+\gamma}\\
    		{-100\left(\frac{1+\gamma}{2+\gamma}\right)^2\theta^2+100},&\sqrt{0.99}\cdot\frac{2+\gamma}{1+\gamma}\le\! 
    		|\theta|<\!\frac{2+\gamma}{1+\gamma}\\
    		0,&|\theta|<\sqrt{0.99}\cdot\frac{2+\gamma}{1+\gamma}
    	\end{array}\right.
    \end{equation}
    which is Lipschitz continuous in $\theta$ with Lipschitz constant $\mathcal{O}(1)$.  
For $\mathcal{D}_i=\left\{\left|\frac{\bm{a}_i^\ccalT\bm{h}}{\bm{a}_i^\ccalT\bm{x}}\right|\ge \frac{2+\gamma}{1+\gamma}\right\}$, it holds that $0\le \mathbb{1}_{\mathcal{D}_i}\le \chi_D\left(\left|\frac{\bm{a}_i^\ccalT\bm{h}}{\bm{a}_i^\ccalT\bm{x}}\right|\right)$ for any $\bm{x}\in\mathbb{R}^n$ and $\bm{h}\in\mathbb{R}^n$.
Assume without loss of generality that $\bm{x}=\bm{e}_1$.  Then for $\gamma> 0$ and $\rho\le 1/10$, it holds that
\begin{align}
	\frac{1}{m}\sum_{i=1}^m\mathbb{1}_{\Big\{\frac{|\bm{a}_i^\ccalT\bm{h}|}{
	|\bm{a}_i^\ccalT\bm{x}|}\ge \frac{2+\gamma}{1+\gamma}\Big\}}
	\le
	 \frac{1}{m}\sum_{i=1}^m\chi_D\left(\left|\frac{\bm{a}_i^\ccalT\bm{h}}{\bm{a}_i^\ccalT\bm{x}}\right|\right)	
&	
	=\frac{1}{m}\sum_{i=1}^m\chi_D\left(\left|\frac{\bm{a}_i^\ccalT\bm{h}}{a_{i,1}}\right|\right)\nonumber\\ 
&=\frac{1}{m}\sum_{i=1}^m\chi_D\left(\left|h_1+\frac{\bm{a}_{i,\backslash 1}^\ccalT\bm{h}_{\backslash 1}}{a_{i,1}}\right|\right)\nonumber\\
	&=\frac{1}{m}\sum_{i=1}^m\chi_D\left(\left|h_1+\frac{a_{i,2}}{a_{i,1}}\left\|\bm{h}_{\backslash 1}\right\|\right|\right)\nonumber\\
	&\overset{\rm (i)}{\le}\frac{1}{m}\sum_{i=1}^m\mathbb{1}_{\left\{
	\left|h_1+\frac{a_{i,2}}{a_{i,1}}\|\bm{h}_{\backslash 1}\|\right|\ge \sqrt{0.99}\cdot\frac{2+\gamma}{1+\gamma}
	\right\}}
	\end{align}
	where the last inequality arises from the definition of $\chi_D$. 
Note that $a_{i,2}/a_{i,1}$ obeys the standard Cauchy distribution, i.e., $a_{i,2}/a_{i,1}\sim{\rm Cauchy(0,1)}$ \cite{1962cauchy}. Transformation properties of Cauchy distributions assert that $h_1+\frac{a_{i,2}}{a_{i,1}}\|\bm{h}_{\backslash 1}\|\sim{\rm Cauchy}(h_1,\|\bm{h}_{\backslash 1}\|)$~\cite{2014cauchy}. Recall that the cdf of a Cauchy distributed random variable $w\sim{\rm Cauchy}\left(\mu_0,\alpha\right)$ is given by~\cite{1962cauchy} 
	\begin{equation}
		F(w;\mu_0,\alpha)=\frac{1}{\pi}\arctan\left(\frac{w-\mu_0}{\alpha}\right)+\frac{1}{2}.
	\end{equation}
It is easy to check that when $\|\bm{h}_{\backslash 1}\|=0$, the indicator function $\mathbb{1}_{\mathcal{D}_i}=0$ due to $|h_1|\le \rho< \sqrt{0.99}(2+\gamma)/(1+\gamma)$. Consider only $\|\bm{h}_{\backslash 1}\|\ne 0$ next. 
	Define for notational brevity $w:=a_{i,2}/a_{i,1}$, $\alpha:=\|\bm{h}_{\backslash 1}\|$, as well as $\mu_0:=h_1/\alpha$ and $w_0:=\sqrt{0.99}\frac{2+\gamma}{\alpha(1+\gamma)}$. Then, 
	\begin{align}
		\mathbb{E}&[\mathbb{1}_{
	\{
	|\mu_0+w|\ge w_0
	\}}]
	=1-\big[F
(w_0;\mu_0,1)-F
(-w_0;\mu_0,1)\big]\nonumber\\
&=1-\frac{1}{\pi}\big[\arctan({w_0-\mu_0})-\arctan({-w_0-\mu_0})\big]
\nonumber\\
&\overset{\rm (i)}{=}\frac{1}{\pi}\arctan\!\left(\frac{2w_0}{w_0^2-\mu_0^2-1}\right)
\nonumber\\
&\overset{\rm (ii)}{\le}
\frac{1}{\pi}\cdot\frac{2w_0}{w_0^2-\mu_0^2-1}
\nonumber\\
&\overset{\rm (iii)}{\le}\frac{1}{\pi}\cdot\frac{2\sqrt{0.99}\rho(2+\gamma)/(1+\gamma)}{0.99(2+\gamma)^2/(1+\gamma)^2-\rho^2}\nonumber\\
&\le 0.0646
\label{eq:prob11}
	\end{align}
for all $\gamma> 0$ and $\rho\le 1/10$.
In deriving $\rm (i)$, the fact that $\arctan(u)+\arctan(v)=\arctan\!\left(\frac{u+v}{1-uv}\right) ~({\rm mod}~\pi)$ for any $uv\ne 1$ was used. Concerning $\rm (ii)$, the inequality $\arctan(x)\le x$ for $x\ge 0$ is employed. Plugging given parameter values and using $\|\bm{h}_{\backslash 1}\|\le \|\bm{h}\|\le \rho$  confirms ${\rm (iii)}$. 
Next, $\mathbb{1}_{
	\left\{
	\left|\mu_0+w\right|\ge w_0
	\right\}}$ is bounded; and it is known that all bounded random variables are subexponential. Thus, upon applying the Bernstein-type inequality~\cite[Corollary 5.17]{chap2010vershynin},   
the next holds with probability at least $1-{\rm e}^{-c_5m\epsilon^2}$ for some numerical constant $c_5>0$ and any sufficiently small $\epsilon> 0$:
\begin{align}
	\frac{1}{m}\sum_{i=1}^m\mathbb{1}_{\left\{\frac{\left|\bm{a}_i^\ccalT\bm{h}\right|}{
	\left|\bm{a}_i^\ccalT\bm{x}\right|}\ge \frac{2+\gamma}{1+\gamma}\right\}}
	&\le \frac{1}{m}\sum_{i=1}^m\mathbb{1}_{
	\left\{
	\left|h_1+\frac{a_{i,2}}{a_{i,1}}\|\bm{h}_{\backslash 1}\|\right|\ge \sqrt{0.99}\frac{2+\gamma}{1+\gamma}
	\right\}
}\nonumber\\
&
\le (1+\epsilon)\mathbb{E}\Big[\mathbb{1}_{
\left\{
	\left|h_1+\frac{a_{i,2}}{a_{i,1}}\|\bm{h}_{\backslash 1}\|\right|\ge \sqrt{0.99}\frac{2+\gamma}{1+\gamma}
	\right\}
}\Big]\nonumber\\
	&\le\frac{1+\epsilon}{\pi}\cdot\frac{2\sqrt{0.99}\rho(2+\gamma)/(1+\gamma)}{0.99(2+\gamma)^2/(1+\gamma)^2-\rho^2}.\label{eq:inter1}
\end{align}

On the other hand, it is easy to establish that the following holds true for any fixed 
$ \bm{h}\in\mathbb{R}^n$:
\begin{align}
	\mathbb{E}\left[(\bm{a}_i^\ccalT\bm{h})^4\right]=\mathbb{E}\left[a_{i,1}^4\right]\left\|\bm{h}\right\|^4=3\left\|\bm{h}\right\|^4
\end{align}
which has also been used in Lemma 1~\cite{twf} and Lemma 6.1~\cite{sun2016}. 
Furthermore, recalling our working assumption $\|\bm{a}_i\|\le \sqrt{2.3n}$ and $\|\bm{h}\|\le \rho \|\bm{x}\|$, the random variables $(\bm{a}_i^\ccalT\bm{h})^4$ are bounded, and thus they are subexponential~\cite{chap2010vershynin}.  Appealing again to the Bernstein-type inequality for subexponential random variables \cite[Proposition 5.16]{chap2010vershynin} and provided that $m/n>c_6\cdot \epsilon^{-2}\log\epsilon^{-1}$ for some numerical constant $c_6>0$, we have 
\begin{align}
		\frac{1}{m}\sum_{i=1}^m\left(\bm{a}_i^\ccalT\bm{h}\right)^4\le 3(1+\epsilon)\left\|\bm{h}\right\|^4\label{eq:inter2}
\end{align}
which holds with probability exceeding $1-{\rm e}^{-c_5m\epsilon^2}$ for some universal constant $c_5>0$ and  any sufficiently small $\epsilon> 0$. 

Combining results \eqref{eq:inter1}, \eqref{eq:inter2}, leveraging the Cauchy-Schwartz inequality, and considering $\mathcal{D}_i\cap\mathcal{K}_i$ only consisting of a spherical cap, the following holds for any $ \rho\le 1/10$ and $  \gamma> 0$:
\begin{align}
	\frac{1}{m}\sum_{i=1}^m
	\left(\bm{a}_i^\ccalT\bm{h}\right)^2\mathbb{1}_{\mathcal{D}_i\cap \mathcal{K}_i}
	&\le \sqrt{\frac{1}{m}\sum_{i=1}^m\left(\bm{a}_i^\ccalT\bm{h}\right)^4}
	\sqrt{\frac{1}{2}\cdot\frac{1}{m}\sum_{i=1}^m\mathbb{1}_{\left\{\frac{\left|\bm{a}_i^\ccalT\bm{h}\right|}{
	\left|\bm{a}_i^\ccalT\bm{x}\right|}\ge \frac{2+\gamma}{1+\gamma}\right\}}}\nonumber\\
	&\le \sqrt{3(1+\epsilon)\left\|\bm{h}\right\|^4}\sqrt{\frac{1+\epsilon}{\pi}\cdot\frac{\sqrt{0.99}\rho(2+\gamma)/(1+\gamma)}{0.99(2+\gamma)^2/(1+\gamma)^2-\rho^2}
}\nonumber\\
	& \overset{\Delta}{=} (\zeta_2'+\epsilon')\left\|\bm{h}\right\|^2
\end{align}
where $\zeta_2':=0.9748\sqrt{\rho\tau/(0.99\tau^2-\rho^2)}$ with $\tau:=(2+\gamma)/(1+\gamma)$, which
 holds with probability at least $1-2{\rm e}^{-c_0m}$. The latter arises upon choosing $c_0\le c_5\epsilon^2$ in 
$1-2{\rm e}^{-c_5m\epsilon^2}$, which can be accomplished by taking $m/n$ sufficiently large.

\subsection*{Acknowledgments}

The authors would like to thank Prof. John Duchi for pointing out an error in an initial draft of this paper. We also thank Mahdi Soltanolkotabi, Yuxin Chen, Kejun Huang, and Ju Sun for helpful discussions.

\bibliographystyle{IEEEtran}
\bibliography{apower}

\end{document}